\newtheorem{assumption}{Assumption}
\begin{document}

\title{Matrix optimization based Euclidean embedding with outliers}
\author{Qian Zhang         \and
        Xinyuan Zhao \and
        Chao Ding
}
\institute{Q. Zhang \at
              College of Applied Sciences, Beijing University of Technology, Beijing, P.R. China. \\
              \email{zhangqian@emails.bjut.edu.cn}           
                         \and
           X. Y. Zhao \at
              College of Applied Sciences, Beijing University of Technology, Beijing, P.R. China. The research of this author was supported by the National Natural Science Foundation of China under projects No. 11871002 and the General Program of Science and Technology of Beijing Municipal Education Commission No. KM201810005004.\\ 
              \email{xyzhao@bjut.edu.cn}
              \and
              C. Ding \at
              Institute of Applied Mathematics, Academy of Mathematics and Systems Science, Chinese Academy of Sciences, Beijing,  P.R. China. The research of this author was supported by the National Natural Science Foundation of China under projects
              No. 12071464, No. 11671387,  No. 11531014 and No. 11688101 and the Beijing Natural Science Foundation (Z190002).\\ 
              \email{dingchao@amss.ac.cn}
}

\date{This version: December 18, 2020}

\maketitle

\begin{abstract}
Euclidean embedding from noisy observations containing outlier errors is an important and challenging problem in statistics and machine learning. Many existing methods would struggle with outliers due to a lack of detection ability. In this paper, we propose a matrix optimization based embedding model that can produce reliable embeddings and identify the outliers jointly. We show that the estimators obtained by the proposed method satisfy a non-asymptotic risk bound, implying that the model provides a high accuracy estimator with high probability when the order of the sample size is roughly the degree of freedom up to a logarithmic factor. Moreover, we show that under some mild conditions, the proposed model also can identify the outliers without any prior information with high probability. Finally, numerical experiments demonstrate that the matrix optimization-based model can produce configurations of high quality and successfully identify outliers even for large networks.  

\keywords{Euclidean embedding \and outliers \and  matrix optimizationg \and low-rank matrix \and error bound}
 \subclass{49M45 \and 90C25 \and 90C33}
\end{abstract}

\section{Introduction}\label{section:introduction}

Finding a complete set in a low-dimensional Euclidean space  from partial noisy Euclidean distance observations, so-called embedding, is an important distance geometric problem in data science applications. In particular, when the distances are assumed to be measured in higher dimensional spaces, this leads to a typical nonlinear dimensional reduction which is widely used in statistics and machine learning.  One of the biggest challenges in embedding is that some noisy distance observations are usually contaminated with positive outlier errors. Due to the nature of applications, the outlier errors are usually much larger than the commonly assumed zero-mean measurement noises. Moreover, in many applications, the outlier errors are even larger than the true distance (e.g., the Non-Line-Of-Sight (NLOS) errors from wireless sensor network localization \cite{PNNeiyer2005,stoica2006lecture,GChong09}), which is the main reason for the distance based methods often fail in practical embedding applications. 

Needless to say,  it is important to mitigate outlier errors from the observation distances for embedding. In literature, for some  applications (e.g., wireless communication), one way to mitigate outlier propagation is to develop some methods to identify outlier errors through prior information, such as the outlier distribution and physical characteristics of networks \cite{VBurhrer07,RUrruela04}. For the overview of various outlier identification techniques and optimization methods in wireless communication applications, see the nice survey \cite{GChong09}. However, in most applications, the prior information of outliers is either technically non-available or costly to obtain due to hardware limitations. Therefore, it is even more crucial to identify and mitigate the outlier propagation from the observed distance data without prior information. To this end, different matrix optimization models are proposed and become popular in applications.  For instance, in wireless sensor localization, different semidefinite programming (SDP) based methods are proposed by \cite{CWWSPoor12,VSBuehrer13,YCChampagne14}, and numerical experiments demonstrate that SDP based models can provide descent estimations even without prior information on outliers for some small-scale applications. More recently, based on the concept of the Euclidean distance matrix (EDM), a new matrix optimization model for the outlier mitigation has been proposed in \cite{DQi2017b} (see Section \ref{section:Model} for details). Numerical tests on both  simulations and real-world applications show that the EDM based method  proposed in \cite{DQi2017b} can produce high quality embeddings without prior information even for large-scale networks. Numerical experiments show that one of the main advantages of the EDM model \cite{DQi2017b}  comparing with the existing SDP approaches is that the EDM model usually is able to identify outlier errors index sets (i.e., the index sets of observation distances which contain  outlier errors).   However, there is no theoretical guarantee on the outlier detection ability  provided in \cite{DQi2017b}. The main purpose of this paper is to  study the statistical performance analysis of the EDM based embedding with outliers by establishing the recovery error bounds and the embedding dimension and outlier detection guarantee. 

In general, the EDM based embedding model proposed in \cite{DQi2017b} belongs the category of low-rank matrix approximation problems \cite{Mesbahi98,Fazel02}, which had many exciting developments recently and attracted much attention from optimization and machine learning communities. More precisely, in principle, the proposed EDM model is in line with the general framework of robust principal component analysis (Robust PCA) \cite{CSPWillsky11,CLMWright11}, i.e., estimating an unknown low-rank matrix $\overline{X}\in \mathbb{R}^{m\times n}$ from a collection of partially noisy observed elements $\widetilde{X}_{ij}=\overline{X}_{ij}+\overline{S}_{ij}+\daleth_{ij}$, $(i,j)\in \Omega\subseteq \{1,\ldots,m\}\times \{1,\ldots,n\}$, where $\overline{S}$ is a sparse matrix consisting of outliers, $\daleth$ represents the random noise and $\Omega$ is the observation index subset. 

Enlightened by the previous tremendous success of the convex matrix optimization approaches in low-rank matrix completion \cite{CR09,CPlan10},  Chandrasekaran et al. \cite{CSPWillsky11} first study the Robust PCA for the case that the elements $\widetilde{X}$ are fully observed and  without noise. In particular,  Chandrasekaran et al. \cite{CSPWillsky11} showed that under the rank-sparsity incoherent property, the unknown true low-rank matrix $\overline{X}$ and sparse outlier matrix $\overline{S}$ can be recovered exactly based on the convex ``nuclear norm plus $l_1$-norm" approach. For the general setting with missing observations, by employing the previous developed  probability analysis techniques for exact matrix completion problems \cite{CR09,Recht11,Gross11}, Cand\`{e}s et al. \cite{CLMWright11} provided the probabilistic guarantees for exact recovery of the convex ``nuclear norm plus $l_1$-norm" approach for Robust PCA. Later a sharper probabilistic exact recovery guarantee was established by Chen et al. \cite{CJSCaramanis13} focused on high-dimensional statistical settings (i.e., the sample size is smaller than $mn$).    

For the more realistic noisy setting, Zhou et al. \cite{ZWCMa10} proposed a convex ``nuclear norm plus $l_1$-norm"  constrained matrix optimization model and studied its statistical performance guarantees, and later the nuclear norm plus $l_1$-norm penalized formulation was studied by Hsu et al. \cite{HKZhang11}. Based on the unified restricted strong convexity (RSC) framework introduced by \cite{NRWYu12,NWainwright12},  Agarwal et al. \cite{ANW12} obtain a sharper statistical error bound for the nuclear norm plus $l_1$-norm penalized model. However, in prior studies \cite{ZWCMa10,HKZhang11,ANW12} on the noisy robust PCA, the performance guarantee results are all based on the full observation assumption, which may not be practical in applications. In \cite{WLee17}, Wong and Lee established an estimation error bound for the noisy robust PCA, with an assumption that the number of observed entries is in the order of $mn$. However, this also may not be useful for high-dimensional applications, since the sample sizes there are usually much smaller. For high-dimensional settings, under the boundedness assumptions on the true  low-rank matrix $\overline{X}$ and sparse outlier matrix $\overline{S}$,  Klopp, et al. \cite{KLTsybakov17} derived a statistical estimation error bounds for the $l_\infty$ constrained convex program based estimators, in which the minimal sample size required for a faithful estimation is roughly on the order of $\max\{m,n\}r\log(n)$. Recently, Chen, et al. \cite{CFMYan2020}, improved and derived a near-optimal statistical guarantee of the convex nuclear norm plus $l_1$-norm penalized model for the Robust PCA by building up the connection between the convex estimations and an auxiliary nonconvex optimization algorithm.

The results mentioned above are all about the ``classical" robust PCA, in the sense that  for the proposed models there is no ``hard-constraints", e.g., the noisy correlation matrix recovery (i.e., a positive semidefinite matrix whose diagonal elements are all ones) and the EDM estimation considered in this paper. However, these ``hard-constraints"  are usually crucial and must be satisfied in the convex estimation models in many applications e.g., the EDM embedding. Consequently, the results obtained in \cite{WLee17,KLTsybakov17,CFMYan2020} have become inadequate in these applications. For the correlation matrix estimation problem, Wu \cite{Wu2014} first studied the probabilistic guarantees of the Robust PCA with ``hard-constraints"  for both noiseless and noisy cases. The main techniques employed in \cite{Wu2014} are the unified restricted strong convexity (RSC) framework introduced by \cite{NRWYu12,NWainwright12} and a matrix Bernstein inequality (cf. e.g., \cite{Vershynin10}), which are wildly used in the study of statistical performance guarantees of convex models in matrix completion problem (e.g., \cite{Gross11,Recht11,KLTsybakov17,NWainwright12,Klopp14,MPSun16,Miao2013}) and the EDM embedding problem without outliers \cite{DQi2017}. 

In order to establish the theoretical performance analysis of the convex matrix optimization model for EDM embedding with outliers, we first adopt the  error bound analysis approach introduced in \cite{Wu2014} to EDM embedding with outliers and obtain the statistical guarantee of the convex estimation model. Furthermore, based on the resulting error bound results, we show that under some wild conditions, with high probability, the convex EDM estimator will recover the true unknown embedding dimension. Simultaneously, we also show that the outlier estimator obtained by the convex matrix optimization model will recover the index set of the support set of the unknown outliers with the same probability. Finally, we verify the proposed theoretical results by numerical experiments.  


%
%
%
%


The remaining parts of this paper are organized as follows.  We briefly introduce the matrix optimization based EDM embedding with outliers model originally proposed in \cite{DQi2017b}. Section \ref{section:error-bounds} contains the statistical recovery error bounds for the EDM embedding EDM model.  In Section \ref{section:recovery-dimension-outlier}, we establish the probability recovery guarantee of the embedding dimensionality and outlier detection. We verify the theoretical results obtained in Sections \ref{section:error-bounds} and \ref{section:recovery-dimension-outlier} through numerical examples in Section \ref{section:numerical}. We conclude the paper in Section \ref{section:conclusion}.

Below are some common notations to be used in this paper:
\begin{itemize}
	\item For any $Z\in\mathbb{R}^{m\times n}$, we denote by $Z_{ij}$ the $(i,j)$-th entry of $Z$.
%
	\item We use $``\circ"$ to denote the Hadamard product between matrices, i.e., for any two matrices $X$ and $Y$ in $\mathbb{R}^{m\times n}$ the $(i,j)$-th entry of $  Z:= X\circ Y \in \mathbb{R}^{m\times n}$ is
	$Z_{ij}=X_{ij} Y_{ij}$.
	\item For any $Z\in\mathbb{R}^{m\times n}$, we use $Z^{1/2}\in\mathbb{R}^{m\times n}$ to denote the $m\times n$ matrix whose $(i,j)$-th entry is $Z_{ij}^{1/2}$.
	\item Let ${\bf 1}\in\mathbb{R}^n$ be the vector whose elements are all ones. Denote the $n\times n$ identity matrix by $I$ and the centering matrix by 
	\begin{equation}\label{eq:def-J}
		J := I - {\bf 1}{\bf 1}^T/n.
	\end{equation}
\item  For a given $Z\in\mathbb{S}^n$, we use $\lambda_1(Z)\ge\ldots\ge \lambda_n(Z)$ to denote the eigenvalues of $Z$ (all real and counting multiplicity) arranging in non-increasing order and use $\lambda(Z)$ to denote the vector of the ordered eigenvalues of $Z$. Let $\mathbb{O}^n$ be the set of all $n\times n$ orthogonal matrices.
\item Let ${\rm sgn}(\cdot):\mathbb{R}^{m\times n}\to \mathbb{R}^{m\times n}$ be the (component-wise) sign function, i.e., for any $Z\in\mathbb{R}^{m\times n}$, ${\rm sgn}(Z)_{ij}=1$ if $Z_{ij}>0$ and ${\rm sgn}(Z)_{ij}=0$ otherwise.
\item For any $z\in \mathbb{R}^n$, we use ${\rm Diag}(z)$ to denote an $n\times n$ diagonal matrix with $z$ on the main diagonal. Meanwhile, for any $Z\in\mathbb{R}^{n\times n}$, ${\rm diag}(Z)$ denotes the column vector consisting of  all the diagonal entries of $Z$ being arranged from the first to the last.
\end{itemize}


\section{The EDM based embedding with outliers}\label{section:Model}

Let $p_i$, $i=1,\ldots,n$ be $n$ points in a $r$-dimensional subspace. For each $i,j\in\{1,\ldots,n\}$, denote $\bar{d}_{ij}\ge 0$ the  distance between the $i$-th and $j$-th points on the $r$-dimensional subspace. A $n\times n$ matrix $\overline{D}$ is called Euclidean distance matrix (EDM) if
$\overline{D}_{ij}=(\bar{d}_{ij})^2$ for $i,j=1,\ldots,n$. An alternative definition of EDM that does not involve any embedding points $\{p_i\}$ can be described as follows. Let $\mathbb{S}_h^n$ be the hollow subspace of $\mathbb{S}^n$, 
i.e., $\mathbb{S}_h^n:=\left\{X\in\mathbb{S}^n\mid {\rm diag}(X)=0 \right\}$.
Define the almost positive semidefinite cone $\mathbb{K}^n_+$ by
\begin{equation}\label{eq:def_alPSD}
	\mathbb{K}^n_+:=\left\{A\in\mathbb{S}^n \mid x^T A x \ge 0 \ \forall\, x \in {\bf 1}^{\perp}\right\} 
	= \left\{A\in\mathbb{S}^n \mid JAJ \succeq 0 \right\},
\end{equation}
where ${\bf 1}^{\perp}:=\{x\in\mathbb{R}^n\mid{\bf 1}^T x=0\}$.
It is well-known \cite{schoenberg1935remarks,young1938discussion} that $D\in\mathbb{S}^n$ is EDM if and only if $-D\in \mathbb{S}_h^n\cap\mathbb{K}^n_+$. Moreover, the embedding dimension is determined by the rank of the doubly centered matrix $J\overline{D}J$, i.e., $r = {\rm rank}(J\overline{D}J)$. Given a true EDM $\overline{D}$, since $-J\overline{D}J$ is positive semidefinite, its spectral decomposition can be written as
\[
- \frac {1}{2} J\overline{D}J = P {\rm Diag}(\lambda_1, \ldots, \lambda_n) P^T,
\]
where $P\in\mathbb{O}^n$ and
$\lambda_1 \ge \lambda_2 \ge \cdots \ge \lambda_n \ge 0$ are the eigenvalues in  nonincreasing order. Let $P_1$ be the submatrix consisting of the first $r$ columns (eigenvectors) in $P$.
One set of the embedding points are
\begin{equation} \label{Eq:cMDS}
	\left( \begin{array}{l}
		p_1^T \\ \vdots \\
		p_n^T
	\end{array} \right) = P_1 {\rm Diag}(\sqrt{\lambda_1}, \ldots, \sqrt{\lambda_r}).
\end{equation}
In order to find a set of relative embedding points $\{p_i\}$, we are interesting in estimating the ture EDM $\overline{D}$ from the partial noisy observation  distances $\tilde{d}_{ij}$. The basic noisy model takes the following form
\begin{equation}\label{eq:distance_estimation_o}
	\tilde{d}_{ij}=\bar{d}_{ij} +\eta\xi_{ij}, \quad i,j\in\{1,\ldots,n\}. 
\end{equation}
where $\xi_{ij}$ are i.i.d. noise errors with $\mathbb{E}(\xi_{ij})=0$ and $\mathbb{E}(\xi_{ij}^2)=1$, $\eta>0$ is a noise magnitude control factor.

Unlike the standard zero-mean noise assumption, we are attractive to the case where the distance measurements $\tilde{d}_{ij}$ are also contaminated with the errors arising from outliers, which usually have significant positive biases and cause the measured distances $\tilde{d}_{ij}$ significantly diverging from actual values $\bar{d}_{ij}$.  The errors from outliers frequently appear in many applications such as the Non-Line-Of-Sight (NLOS) errors from wireless sensor network localization \cite{PNNeiyer2005,stoica2006lecture,GChong09}, the errors arising from outliers in manifold learning and others \cite{hodge2004survey,kuhn2013applied}.

We use $\bar{s}\in\mathbb{S}^n$ to represent the outlier errors, whose elements are either zero or  positive accordingly. Therefore, the basic noisy model \eqref{eq:distance_estimation_o} then takes the following form 
\begin{equation}\label{eq:distance_estimation}
	0\le \tilde{d}_{ij}=\bar{d}_{ij}+\bar{s}_{ij} +\eta\xi_{ij} , \quad i,j\in\{1,\ldots,n\}. 
\end{equation}
For notational simplicity, we define the {\it outlier matrix} $\overline{S}\in\mathbb{S}^n$ as follows 
\begin{equation}\label{eq:def-outlier matrix}
	\overline{S}_{ij}:=\bar{s}_{ij}(\bar{s}_{ij}+2\bar{d}_{ij})\ge 0,\quad i,j\in\{1,\ldots,n\}.
\end{equation} 
Then, by \eqref{eq:distance_estimation}, we know that 
\begin{equation}\label{eq:def-D-tilde}
	\widetilde{D}_{ij}=\tilde{d}_{ij}^2=(\bar{d}_{ij}  +\bar{s}_{ij}+\eta\xi_{ij})^2= \overline{D}_{ij}+\overline{S}_{ij} + \daleth_{ij}, \quad i,j\in\{1,\ldots,n\},
\end{equation}
where $\widetilde{D}$ and $\overline{D}\in\mathbb{S}^n$ are the observation and unknown true EDMs, whose $(i,j)$-element are $\tilde{d}_{ij}^2$ and $\bar{d}_{ij}^2$, respectively, and $\daleth\in\mathbb{S}^n$ is defined by 
\begin{equation}\label{eq:def-daleth}
	\daleth_{ij}:=2(\bar{d}_{ij}+\bar{s}_{ij})\eta\xi_{ij}+\eta^2\xi_{ij}^2, \quad i,j\in\{1,\ldots,n\}.
\end{equation}
Without loss of generality, we always assume that  the magnitude of measurement error $|\eta\xi_{ij}|$ is strictly  smaller than the true distance $d_{ij}$ for each  $i,j\in\{1,\ldots,n\}$. Thus, since $\tilde{d}_{ij}\ge 0$, we know that for any  $i,j\in\{1,\ldots,n\}$,
\[
\bar{s}_{ij}=0 \quad \Longleftrightarrow \quad  \overline{S}_{ij}=0.
\]

For each $i\in\{1,\ldots,n\}$, we use ${\bf e}_i\in\mathbb{R}^n$ to denote the $i$-th canonical basis of $\mathbb{R}^n$. Let $\mathbb{S}^n_h\subseteq \mathbb{S}^n$ be the hollow space, i.e.,
$$
\mathbb{S}^n_h:=\left\{ Z\in\mathbb{S}^n\mid Z_{ii}=0,\ i=1,\ldots,n\right\},
$$ 
whose dimension equals to $d_{\mathbb{S}^n_h}\equiv n(n-1)/2$. Let $\left\{ \frac{1}{{2}} ({\bf e}_i {\bf e}_j^T + {\bf e}_j{\bf e}_i^T) \right\}_{1\le i<j\le n}$ be the standard basis matrices of $\mathbb{S}^n_h$. 

For the given observation set $\Omega$, let $\{X_1, \ldots, X_m\}$ with $m:=|\Omega|$ be the numbered  sampled basis matrices from the standard basis matrices set $\left\{ \frac{1}{{2}} ({\bf e}_i {\bf e}_j^T + {\bf e}_j{\bf e}_i^T) \right\}_{1\le i<j\le n}$. Therefore, the corresponding observation operator ${\cal O}_{\Omega}:\mathbb{S}^n\to\mathbb{R}^{m}$ can be written as
\begin{equation}\label{eq:def_obser_op}
	{\cal O}_{\Omega}(A):=\left(\langle X_1,A\rangle,\ldots,\langle X_m,A\rangle\right)^T\in\mathbb{R}^m, \quad A\in\mathbb{S}^n.
\end{equation}
That is, ${\cal O}_{\Omega}(A)$ samples all the elements $A_{ij}$ specified by $(i,j) \in \Omega$. Let ${\cal O}_{\Omega}^{*}:\mathbb{R}^{m}\to{\mathbb S}^{n}$ be its adjoint, i.e., ${\cal O}_{\Omega}^{*}({\bf z})=\sum_{l=1}^{m}{\bf z}_{l}X_{l},$ ${\bf z}\in\mathbb{R}^{m}$.
Then, we further define the observation vector 
\begin{equation}\label{eq:def-ob-y}
	{\bf y}:={\cal O}_{\Omega}(\widetilde{D}) \in\mathbb{R}^m.
\end{equation} 

Finally, under the assumption that $\overline{S}$ is sparse (i.e., the cardinality of nonzero elements of $\overline{S}$ is small), we may estimate the unknown matrices $\overline{D}$ and $\overline{S}$ by solving the following nonconvex optimization model:
\begin{equation}\label{pr:matrix}
	\begin{array}{cl}
		\min & \displaystyle\frac{1}{2m}\|{\bf y}-\mathcal{O}_\Omega(D+S)\|^2+\rho \|S\|_0\\ [5pt]
		\textrm{s.t.}\; & D\in \mathbb{S}_h^n, \quad -D\in\mathbb{K}_+^n, \quad \textrm{rank}(-JDJ)\le r, \\[5pt]
		& S\in\mathbb{S}^n, \quad S\ge 0. 
	\end{array}
\end{equation}
%
%
%
%
Since the rank constraint and the zero norm $\|\cdot\|_0$ are computational intractable, we may consider the following convex relaxated matrix optimization problem
\begin{equation}\label{pr:cvxopt}
	\begin{array}{cl}
		\min & \Phi_{\rho_D,\rho_S}(D,S):=\displaystyle\frac{1}{2m}\|{\bf y}-\mathcal{O}_\Omega(D+S)\|^2 + \rho_D\langle I-\widetilde{F}, -JDJ \rangle+\rho_{S}\langle E-\widetilde{G}, S\rangle \\ [5pt]
		\textrm{s.t.}\; & D\in \mathbb{S}_h^n, \quad -D\in\mathbb{K}_+^n, \quad  S\ge 0,
	\end{array}
\end{equation}
where $\rho_{D}$ and $\rho_{S}$ are two given positive parameters, $E\in\mathbb{S}^{n}$ is the matrix whose elements are all ones, and $\widetilde{F}$ and $\widetilde{G}$ are two given symmetric matrices. In particular, when both $\widetilde{F}$ and $\widetilde{G}$ vanish, the model \eqref{pr:cvxopt} reduces to the following convex nuclear norm $l_1$-minimization EDM matrix optimization problem:
\begin{equation}\label{pr:NL1EDM}
	\begin{array}{cl}
		\min & \displaystyle\frac{1}{2m}\|{\bf y}-\mathcal{O}_\Omega(D+S)\|^2 + \rho_D\langle I, -JDJ \rangle +\rho_{S}\langle E, S\rangle \\ [5pt]
		\textrm{s.t.}\; & D\in \mathbb{S}_h^n, \quad -D\in\mathbb{K}_+^n, \quad  S\ge 0.
	\end{array}
\end{equation}
We use $(\widehat{D}_m,\widehat{S}_m)$ to denote an optimal solution of the above convex model \eqref{pr:cvxopt}, and in later discussions, we often drop the subscript ``$m$", when the dependence of $(\widehat{D},\widehat{S})$ on the sample size $m$ is clear from the context. Furthermore, we use $(\overline{D},\overline{S})$ to denote the unknown true EDM and outlier matrix.

In this paper, we choose the symmetric matrices $\widetilde{F}\in \mathbb{S}^n$ and $\widetilde{G}\in \mathbb{S}^n$ in the objective function of \eqref{pr:cvxopt} by following the suggestions \cite[(25) and (26)]{MPSun16} (see also \cite[Chapter 5.3]{Wu2014}). Suppose that  the EDM $\widetilde{D}$ and the non-negative matrix $\widetilde{S}$ are the given initial estimators (e.g., the estimators obtained by \eqref{pr:NL1EDM}). Define $\widetilde{F}\in \mathbb{S}^n$ by
\begin{equation}\label{eq:def-F}
	\widetilde{F} =  F(-J\widetilde{D}J),
\end{equation}
where $F:\mathbb{S}^n\to\mathbb{S}^n$ is the spectral operator \cite{DSSToh18,DSSToh20} associated with the symmetric function (cf. \cite[Definition 1]{DSSToh18}) $f:\mathbb{R}^n\to\mathbb{R}^n$ defined by
\begin{equation}\label{eq:def-sym-f}
	f_i(x) = \left\{ \begin{array}{ll}
		\phi\left(x_i/\max_{l}\{x_l\}\right) & \mbox{if $x\neq 0$}, \\ [3pt]
		0 & \mbox{otherwise,}
	\end{array}\right.\quad x\in\mathbb{R}^n
\end{equation}
with the scalar function $\phi:\mathbb{R}\to\mathbb{R}$ takes the form
\begin{equation}\label{eq:def-phi-scalar}
	\phi(t) =  (1+\varepsilon^{\tau})\frac{t^{\tau}}{t^{\tau}+\varepsilon^{\tau}}, \quad t\in\mathbb{R}
\end{equation}
for some $\varepsilon>0$ and $\tau>0$. Meanwhile, we define the symmetric matrix $\widetilde{G}\in \mathbb{S}^n$ with respect to $\widetilde{S}$ by
\begin{equation}\label{eq:def-G}
	\widetilde{G}_{ij} = \left\{ \begin{array}{ll}
		\phi\big(\widetilde{S}_{ij}/\max_{k,l}\{\widetilde{S}_{kl}\}\big) & \mbox{if $\widetilde{S}\neq 0$}, \\ [3pt]
		0 & \mbox{otherwise,}
	\end{array}\right.  \quad (i,j)\in\{1,\ldots,n\},
\end{equation} 
where $\phi:\mathbb{R}\to\mathbb{R}$ is the scalar function defined by \eqref{eq:def-phi-scalar}.

Throughout this paper, the following condition is assumed to hold, which ensures the existent of the optimal solution of \eqref{pr:cvxopt}.

\begin{assumption}\label{ass:level set}
	There exists constant $\overline{\rho}_D>0$ and $\overline{\rho}_S>0$ such that for any $\alpha$, the level set 
	$$
	L_{\overline{\rho}_D,\overline{\rho}_S}(\alpha):= \left\{ (D,S)\in \mathbb{S}^n\mid \Phi_{\overline{\rho}_D,\overline{\rho}_S}(D,S)\le \alpha,\ D\in\mathbb{S}^n_h,\ -D\in \mathbb{K}^n_+,\ S\ge 0\right\}
	$$ of \eqref{pr:cvxopt} is closed and bounded. 
\end{assumption}

It is worth to note that for any $0\le t\le 1$, $\phi(t)\in [0,1]$. Thus, it is easy to check that for any given initial EDM  estimator $\widetilde{D}$ and outlier matrix estimator $\widetilde{S}$, the symmetric matrices $\widetilde{F}$ and $\widetilde{G}$ satisfy $\langle I-\widetilde{F}, -JDJ\rangle \ge 0$ and $\langle E-\widetilde{G},S\rangle \ge 0$. Therefore, we know that for any  $\rho_D\ge \overline{\rho}_D>0$ and $\rho_S\ge \overline{\rho}_S>0$,  $L_{{\rho}_D,{\rho}_S}(\alpha)\subseteq L_{\overline{\rho}_D,\overline{\rho}_S}(\alpha)$ for any $\alpha$. This yields that  under Assumption \ref{ass:level set},  for any  $\rho_D\ge \overline{\rho}_D>0$ and $\rho_S\ge \overline{\rho}_S>0$,  the optimal solution of \eqref{pr:cvxopt} exists. Moreover, under Assumption \ref{ass:level set}, we know that there exist two positive constants $b_D$ and $b_S$ such that the optimal solution  $(\widehat{D},\widehat{S})$ of \eqref{pr:cvxopt}  for any $\rho_D\ge \overline{\rho}_D$ and $\rho_S\ge \overline{\rho}_S$ and the unkown true EDM and outlier matrix $(\overline{D},\overline{S})$ satisfy
\[
\|\widehat{D}\|_{\infty}\le b_D, \quad \|\overline{D}\|_{\infty}\le b_D, \quad \|\widehat{S}\|_{\infty}\le b_S \quad {\rm and} \quad \|\overline{S}\|_{\infty}\le b_S.
\]

\section{Recovery error bounds}\label{section:error-bounds}

In this section, we aim to derive a recovery error bound  for the proposed matrix optimization based EDM embedding model with outliers  \eqref{pr:cvxopt}. Here we adopt the approach introduced by Wu \cite{Wu2014} for studying recovery error bounds of the noisy low-rank and sparse matrix decomposition with fixed basis. Essentially, the proofs are in line with the well-studied unified framework introduced by Negahban et al. \cite{NRWYu12} for high-dimensional analysis of M-estimators with decomposable regularizers, which is used frequently in the study of noisy matrix completion \cite{NWainwright12,Klopp14,Miao2013,MPSun16,Wu2014,DQi2017}.  For the sake of completion, we  include the detailed proofs in Appendix. 

Recall that $\overline{D}$ is the unknown true EDM matrix. Suppose that the positive semidefinite matrix $-J\overline{D}J$ has the eigenvalue decomposition 
\begin{equation}\label{eq:SVD-JbarDJ}
	-J\overline{D}J=\overline{P}\,\overline{\Lambda}\,\overline{P}^T,
\end{equation}  
where $\overline{\Lambda}\in\mathbb{S}^n$ is a diagonal matrix whose diagonal elements are the eigenvalues of $-J\overline{D}J$ arranged in the non-increasing order, and $\overline{P} = \left[\overline{P}_1 \ \  \overline{P}_2\right] \in{\mathbb O}^n$ with $\overline{P}_1 \in {\mathbb R}^{n \times r}$ and $\overline{P}_2\in\mathbb{R}^{n\times (n-r)}$.
We define the generalized geometric center subspace in ${\mathbb S}^n$ with respect to $\overline{P}_1$ by $\mathbb{T} := \left\{ Y \in {\mathbb S}^n \ | \ Y \overline{P}_1 = 0 \right\}$.
Let $\mathbb{T}^{\perp}$ be its orthogonal subspace. Then,
the orthogonal projectors to the two subspaces can hence be calculated respectively by
\begin{equation}\label{def:projection}
	{\cal P}_{\mathbb{T}}(A):=\overline{P}_2\overline{P}_2^TA\overline{P}_2\overline{P}_2^T \quad {\rm and} \quad {\cal P}_{\mathbb{T}^{\perp}}(A):=\overline{P}_1\overline{P}_1^TA+A\overline{P}_1\overline{P}_1^T-\overline{P}_1\overline{P}_1^TA\overline{P}_1\overline{P}_1^T, \quad A\in\mathbb{S}^n.
\end{equation}
Moreover, we have the following orthogonal decomposition
\begin{equation}\label{eq:decomp_T}
	A={\cal P}_{\mathbb{T}}(A)+{\cal P}_{\mathbb{T}^{\perp}}(A) \quad {\rm and} \quad \langle {\cal P}_{\mathbb{T}}(A),{\cal P}_{\mathbb{T}^{\perp}}(B) \rangle =0  \quad \forall\, A,B\in{\mathbb S}^n.
\end{equation}
It then follows from the definition of ${\cal P}_{\mathbb{T}}$ that for any $A\in{\mathbb S}^n$, ${\cal P}_{\mathbb{T}^{\perp}}(A)=\overline{P}_1\overline{P}_1^TA+\overline{P}_2\overline{P}_2^TA\overline{P}_1\overline{P}_1^T$,
which implies that ${\rm rank}({\cal P}_{\mathbb{T}^{\perp}}(A))\leq 2r$. This yields that for any $A\in{\mathbb S}^n$,
\begin{equation}\label{eq:PT_bound}
	\|{\cal P}_{\mathbb{T}^{\perp}}(A)\|_*\le \sqrt{2r}\|A\|.
\end{equation}

For any given $S\in {\mathbb S}^n$, we use ${\rm supp}_S$ to denote the index set of the support of $S$, i.e., 
$${\rm supp}(S):=\left\{(i,j)\mid S_{ij} \neq 0, i,j\in\{1,\ldots,n\}\right\}.$$
Suppose that the unknown matrix $\overline{S}$ has $k$ nonzero entries, i.e., $\|\overline{S}\|_0=|{\rm supp}(\overline{S})|=k$. The tangent subspace $\mathbb{L}$ with respect to the $k$-sparse subset $\{S\in {\mathbb S}^n\ |\ \|S\|_0\leq k\}$ at $\overline{S}$ then takes the form 
$$
\mathbb{L}:=\{S\in {\mathbb S}^n\ |\ {\rm supp}(S) \subseteq {\rm supp}(\overline{S})\}.
$$ 
Denote the orthogonal complement of $\mathbb{L}$ by $\mathbb{L}^{\perp}$. Let ${\cal P}_{\mathbb{L}}$ and ${\cal P}_{\mathbb{L}^{\perp}}$ be the corresponding orthogonal projections  onto $\mathbb{L}$ and $\mathbb{L}^{\perp}$. Then, we have the following decomposition 
\begin{equation}\label{eq:decomp_L}
	B={\cal P}_{\mathbb{L}}(B)+{\cal P}_{\mathbb{L}^{\perp}}(B) \quad {\rm and} \quad \langle {\cal P}_{\mathbb{L}}(A),{\cal P}_{\mathbb{L}^{\perp}}(B) \rangle =0  \quad \forall\, A,B\in{\mathbb S}^n.
\end{equation} 
Moreover, for any $B\in\mathbb{S}^n$, since $\|{\cal P}_{\mathbb{L}}(B)\|_0 \leq k$, we have that
\begin{eqnarray}\label{eq:PGamma_bound} 
	\|{\cal P}_{\mathbb{L}}(B)\|_1 \leq \sqrt{k}\|B\|.
\end{eqnarray}
Define 
\begin{eqnarray}\label{def:apprx_distance}
	a_D:=\frac{1}{\sqrt {r}}\|\overline{P}_1\overline{P}_1^T-\widetilde{F}\|\quad {\rm and}\quad a_S:=\frac{1}{\sqrt k}\|{\rm sgn}(\overline{S})-\widetilde{G}\|.
\end{eqnarray}

It is also easy to verify the following result.
\begin{lemma}\label{lm:rank}
	For any $D\in\mathbb{S}^n_h$, we have $D-JDJ=\frac{1}{2}({\rm diag} (-JDJ){\bf 1}^T+{\bf 1}{\rm diag}(-JDJ)^T)$.
\end{lemma}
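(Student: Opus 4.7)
The plan is to prove the identity by direct matrix algebra, exploiting the hollow condition $\mathrm{diag}(D)=0$ at exactly the right moment. Write $J=I-\frac{1}{n}{\bf 1}{\bf 1}^T$, set $u:=D{\bf 1}\in\mathbb{R}^n$ (so that ${\bf 1}^T D=u^T$ by symmetry of $D$), and expand
\[
JDJ \;=\; D - \tfrac{1}{n}u{\bf 1}^T - \tfrac{1}{n}{\bf 1}u^T + \tfrac{1}{n^2}({\bf 1}^T u)\,{\bf 1}{\bf 1}^T.
\]
Rearranging gives
\[
D-JDJ \;=\; \tfrac{1}{n}u{\bf 1}^T + \tfrac{1}{n}{\bf 1}u^T - \tfrac{1}{n^2}({\bf 1}^T u)\,{\bf 1}{\bf 1}^T,
\]
and this is the expression I will match against the right-hand side.

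Next I would compute $v:=\mathrm{diag}(-JDJ)\in\mathbb{R}^n$. The $i$-th diagonal entry of $JDJ$ equals $D_{ii}-\frac{2}{n}u_i+\frac{1}{n^2}{\bf 1}^T u$. This is the one place the hypothesis $D\in\mathbb{S}^n_h$ enters: since $D_{ii}=0$ for all $i$, we obtain the clean expression
\[
v \;=\; \tfrac{2}{n}u - \tfrac{1}{n^2}({\bf 1}^T u)\,{\bf 1}.
\]

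Finally I would compute $\tfrac{1}{2}(v{\bf 1}^T+{\bf 1}v^T)$ by substituting the formula for $v$. Each of the two terms contributes $\tfrac{1}{n}u{\bf 1}^T-\tfrac{1}{2n^2}({\bf 1}^T u){\bf 1}{\bf 1}^T$ or its transpose, and adding them reproduces exactly the expression for $D-JDJ$ obtained in the first step. There is no real obstacle here; the only subtle point worth highlighting is that the hollow assumption is both used (to drop the $D_{ii}$ term in $v_i$) and essential (without it, $\mathrm{diag}(D){\bf 1}^T$ contributions would appear and the identity would fail). The whole argument is a short, self-contained computation.
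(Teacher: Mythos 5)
Your computation is correct and complete: expanding $JDJ$ with $u=D{\bf 1}$, using ${\rm diag}(D)=0$ to get $v={\rm diag}(-JDJ)=\tfrac{2}{n}u-\tfrac{{\bf 1}^Tu}{n^2}{\bf 1}$, and matching $\tfrac12(v{\bf 1}^T+{\bf 1}v^T)$ against $D-JDJ$ all check out. The paper omits the proof entirely (it only remarks the lemma is ``easy to verify''), and your direct verification is exactly the intended argument, with the role of the hollow hypothesis correctly isolated.
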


The following result represents the first important step to derive our error bounds of the convex model \eqref{pr:cvxopt}. 

\begin{proposition}\label{prop:sampling_bound}
	Let $(\widehat{D}, \widehat{S})$ and $(\overline{D}, \overline{S})$ be an optimal solution of \eqref{pr:cvxopt} and the underground true EDM and outlier matrices, respectively. Let $\kappa_D > 1$ and $\kappa_S>1$ be  given. Suppose that the  parameters $\rho_D$ and $\rho_S$ satisfy
	\begin{equation}\label{eq:PenalizedPara}
		\rho_D\geq \frac{\kappa_D}{m}\big\| \mathcal{O}^*_{\Omega}(\zeta)\big\|_2  \quad {\rm and} \quad 
		\rho_S\geq \frac{\kappa_S}{m}\big\| \mathcal{O}^*_{\Omega}(\zeta)\big\|_\infty,
	\end{equation}
	where $\zeta:={\cal O}_{\Omega}(\daleth)\in\mathbb{R}^m$ and $\daleth\in\mathbb{S}^n$ is given by \eqref{eq:def-daleth}, then we have
	\begin{equation}\label{eq:sampling_bound-1}
		\frac{1}{2m}\|{\mathcal O}_{\Omega} (\widehat{D}-\overline{D})+{\mathcal O}_{\Omega} (\widehat{S}-\overline{S})\|^2\leq\rho_D\sqrt{r}(a_D+\frac{2\sqrt{2}}{\kappa_D})\|\widehat{D}-\overline{D}\|+\rho_S\sqrt{k}(a_S+\frac{1}{\kappa_S}) \|\widehat{S}-\overline{S}\| 
	\end{equation}
	and
	\begin{equation}\label{eq:sampling_bound-2}
		\left\{
		\begin{array}{l}
			\|\widehat{D}-\overline{D}\|_* \leq \displaystyle \frac{\kappa_D}{\kappa_D-1} \left((a_D +2\sqrt{2})\sqrt{r} \|\widehat{D}-\overline{D}\|+\frac{\rho_S}{\rho_D} \big( a_S+\frac{1}{\kappa_S}\big)\sqrt{k}\|\widehat{S}-\overline{S}\| \right),  \\[3pt]
			\|\widehat{S}-\overline{S}\|_1 \leq \displaystyle \frac{\kappa_S}{\kappa_S-1}\left(\frac{\rho_D}{\rho_S} \big(a_D +\frac{2\sqrt{2}}{\kappa_D}\big)\sqrt{r}\|\widehat{D}-\overline{D}\|+\left( a_S+1\right)\sqrt{k}\|\widehat{S}-\overline{S}\| \right),
		\end{array}\right. 
	\end{equation}
	where $a_D$ and $a_S$ are given by \eqref{def:apprx_distance}.
\end{proposition}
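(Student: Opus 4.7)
The plan is to follow the decomposable-regularizer framework of Negahban--Ravikumar--Wainwright--Yu, as adapted by Wu (2014) to matrix optimization with hard PSD-type constraints. First I derive a master inequality from optimality: since $(\widehat D,\widehat S)$ minimizes $\Phi_{\rho_D,\rho_S}$ over a feasible set containing $(\overline D,\overline S)$, expanding the quadratic residual via $\mathbf y=\mathcal O_\Omega(\overline D+\overline S)+\zeta$ and writing $\Delta_D:=\widehat D-\overline D$, $\Delta_S:=\widehat S-\overline S$ gives
\[
\tfrac{1}{2m}\|\mathcal O_\Omega(\Delta_D+\Delta_S)\|^2\le \tfrac{1}{m}\langle \mathcal O_\Omega^*(\zeta),\Delta_D+\Delta_S\rangle+\rho_D\langle I-\widetilde F,J\Delta_D J\rangle-\rho_S\langle E-\widetilde G,\Delta_S\rangle.
\]
The stochastic term is bounded by matrix H\"older (spectral--nuclear and $\ell_\infty$--$\ell_1$ duality) together with the hypotheses on $\rho_D,\rho_S$, yielding $\tfrac{1}{m}|\langle \mathcal O_\Omega^*(\zeta),\Delta_D\rangle|\le \tfrac{\rho_D}{\kappa_D}\|\Delta_D\|_*$ and the analogous $\tfrac{\rho_S}{\kappa_S}\|\Delta_S\|_1$ estimate.

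Next, for the regularizer terms I apply the subdifferential inequality of the nuclear norm restricted to PSD matrices. Since $-J\overline D J\in\mathbb T^\perp$ is PSD of rank $r$, for any $W\in\mathbb T$ with $\|W\|\le 1$ one has $\|{-J\widehat D J}\|_*\ge\|{-J\overline D J}\|_*+\langle \overline P_1\overline P_1^T+W,-J\Delta_D J\rangle$; choosing $W$ to realize $\|\mathcal P_{\mathbb T}(-J\Delta_D J)\|_*$ and then writing $I-\widetilde F=(I-\overline P_1\overline P_1^T)+(\overline P_1\overline P_1^T-\widetilde F)$, Cauchy--Schwarz with $\|\overline P_1\overline P_1^T-\widetilde F\|=\sqrt r\,a_D$ and $\|{-J\Delta_D J}\|\le\|\Delta_D\|$ produces
\[
\rho_D\langle I-\widetilde F,J\Delta_D J\rangle\le \rho_D\bigl(\sqrt r\,a_D\,\|\Delta_D\|-\|\mathcal P_{\mathbb T}(-J\Delta_D J)\|_*\bigr).
\]
The fully analogous $\ell_1$ argument, using $\mathrm{sgn}(\overline S)\in\mathbb L$ as a subgradient of $\|\cdot\|_1$ at $\overline S$ and $\|\mathrm{sgn}(\overline S)-\widetilde G\|=\sqrt k\,a_S$, yields $-\rho_S\langle E-\widetilde G,\Delta_S\rangle\le \rho_S(\sqrt k\,a_S\,\|\Delta_S\|-\|\mathcal P_{\mathbb L^\perp}(\Delta_S)\|_1)$.

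Substituting these back into the master inequality, splitting $\|\Delta_D\|_*\le\|\mathcal P_{\mathbb T}(\Delta_D)\|_*+\|\mathcal P_{\mathbb T^\perp}(\Delta_D)\|_*$ and $\|\Delta_S\|_1\le\|\mathcal P_{\mathbb L}(\Delta_S)\|_1+\|\mathcal P_{\mathbb L^\perp}(\Delta_S)\|_1$, then applying $\|\mathcal P_{\mathbb T^\perp}(\Delta_D)\|_*\le\sqrt{2r}\|\Delta_D\|$ and $\|\mathcal P_{\mathbb L}(\Delta_S)\|_1\le\sqrt k\,\|\Delta_S\|$ absorbs the rank/sparsity-complementary norms into Frobenius terms. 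Dropping the remaining non-negative $\mathcal P_{\mathbb T}$/$\mathcal P_{\mathbb L^\perp}$ terms that carry favourable signs yields \eqref{eq:sampling_bound-1}; rearranging instead to isolate $\|\Delta_D\|_*$ and $\|\Delta_S\|_1$ on the left-hand side---using $1-\tfrac{1}{\kappa_D}=\tfrac{\kappa_D-1}{\kappa_D}$ and the $\kappa_S$-analog---produces the $\kappa_D/(\kappa_D-1)$ and $\kappa_S/(\kappa_S-1)$ prefactors of \eqref{eq:sampling_bound-2}.

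The main obstacle is the bookkeeping that aligns the projections acting on $-J\Delta_D J$ (produced by the PSD subdifferential in Step 3) with those acting on $\Delta_D$ itself (which govern the rank/sparsity bounds above). The bridge is the commutation identity $\mathcal P_{\mathbb T}(-J\Delta_D J)=-J\mathcal P_{\mathbb T}(\Delta_D)J$, valid because ${\bf 1}\in\mathrm{range}(\overline P_2)$ forces $J$ and $\overline P_2\overline P_2^T$ to commute. Tracking the $\kappa_D,\kappa_S$ constants carefully through the absorption and cone-condition rearrangement, so that the final coefficients read exactly $a_D+2\sqrt 2/\kappa_D$, $a_S+1/\kappa_S$ in \eqref{eq:sampling_bound-1} and $a_D+2\sqrt 2$, $a_S+1$ in \eqref{eq:sampling_bound-2} (with the $2\sqrt 2$ emerging from the $\sqrt{2r}$ rank bound coupled with the nuclear-norm triangle inequality), is the most delicate piece of the calculation.
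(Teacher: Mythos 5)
Your proposal follows essentially the same route as the paper's proof: the same optimality-based master inequality, the same H\"older bound on the stochastic term under \eqref{eq:PenalizedPara}, the same nuclear-norm and $\ell_1$ subdifferential inequalities producing the $-\|\mathcal P_{\mathbb T}(-J(\widehat D-\overline D)J)\|_*$ and $-\|\mathcal P_{\mathbb L^\perp}(\widehat S-\overline S)\|_1$ cancellation terms, and the same rearrangement to obtain \eqref{eq:sampling_bound-2}. The only cosmetic difference is that the paper decomposes $\widehat D-\overline D=\bigl((\widehat D-\overline D)-J(\widehat D-\overline D)J\bigr)+J(\widehat D-\overline D)J$ and invokes Lemma \ref{lm:rank} directly instead of your commutation identity $\mathcal P_{\mathbb T}(-JAJ)=-J\mathcal P_{\mathbb T}(A)J$; both hinge on the same rank-$\le 2$ fact and yield the identical constants via $\sqrt2+\sqrt{2r}\le 2\sqrt2\sqrt r$.
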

\begin{proof}
	By \eqref{eq:def-ob-y} and \eqref{eq:def-D-tilde}, we know that for any $D$ and $S\in\mathbb{S}^n$, 
	\begin{eqnarray}
		\frac{1}{2m}\|{\bf y} -{\cal O}_{\Omega}(D+S)\|^2&=&\frac{1}{2m}\big\| {\cal O}_{\Omega}(\overline{D}+\overline{S})+\zeta-{\cal O}_{\Omega}(D+S) \big\|^{2} \nonumber \\ 
		&=& \frac{1}{2m}\|{\cal O}_{\Omega}(D-\overline{D})+{\cal O}_{\Omega}(S-\overline{S})\|^2-\frac{1}{m}\langle D-\overline{D}+S-\overline{S},{\cal O}_{\Omega}^*(\zeta)\rangle + \frac{1}{2m}\|\zeta\|^2. \label{eq:diff-eq}
	\end{eqnarray}
	
	Since $(\overline{D},\overline{S})$ is a feasible solution of \eqref{pr:cvxopt}, we know from the optimality of $(\widehat{D},\widehat{S})$ that 
	\begin{eqnarray}
		\frac{1}{2m}\|{\cal O}_{\Omega}(\widehat{D}-\overline{D})+{\cal O}_{\Omega}(\widehat{S}-\overline{S})\|^2 &\leq & \frac{1}{m} \left\langle {\cal O}_{\Omega}^*(\zeta), \widehat{D}-\overline{D}+\widehat{S}-\overline{S}\right\rangle-\rho_D \left(\langle I, -J (\widehat{D}-\overline{D}) J\rangle  - \langle \widetilde{F}, -J(\widehat{D}-\overline{D})J\rangle \right) \nonumber \\ [3pt]
		&& - \rho_S \left(\|\widehat{S}\|_1-\|\overline{S}\|_1- \langle \widetilde{G}, \widehat{S}-\overline{S} \rangle \right). \label{eq:ineq-1}
	\end{eqnarray}
	By the H$\ddot{\rm o}$lder inequality, we know that the first term of the right hand side of \eqref{eq:ineq-1} satisfies
	\begin{equation*}
		\frac{1}{m} \left\langle {\cal O}_{\Omega}^*(\zeta), \widehat{D}-\overline{D}+\widehat{S}-\overline{S}\right\rangle 
		\leq \frac{1}{m}\left\|\mathcal{O}_{\Omega}^\ast(\zeta)\right\|_2 \|\widehat{D}-\overline{D}\|_\ast+\frac{1}{m}\left\|\mathcal{O}_{\Omega}^\ast(\zeta)\right\|_\infty \|\widehat{S}-\overline{S}\|_1.
	\end{equation*}
	Since $\|\widehat{D}-\overline{D}\|_*=\|\widehat{D}-\overline{D}-J(\widehat{D}-\overline{D})J+J(\widehat{D}-\overline{D})J\|_*$, we know from Lemma \ref{lm:rank} that
	\begin{equation*}
		\|\widehat{D}-\overline{D}\|_*\le  \|\widehat{D}-\overline{D}-J(\widehat{D}-\overline{D})J\|_*+\|J(\widehat{D}-\overline{D})J\|_*\le \sqrt{2}\|\widehat{D}-\overline{D}-J(\widehat{D}-\overline{D})J\|+\|J(\widehat{D}-\overline{D})J\|_*.
	\end{equation*}
	Moreover, since $\left\langle J (\widehat{D}-\overline{D}) J,\, \widehat{D}-\overline{D} - J (\widehat{D}-\overline{D}) J \right\rangle=0$, we have $
	\|\widehat{D}-\overline{D}\|^2=\|\widehat{D}-\overline{D} - J (\widehat{D}-\overline{D}) J \|^2+\|J (\widehat{D}-\overline{D}) J\|^2$, which implies that $\|\widehat{D}-\overline{D}\|_*\le \sqrt{2}\|\widehat{D}-\overline{D}\|+\|-J(\widehat{D}-\overline{D})J\|_*$. Thus, since $\|J (\widehat{D}-\overline{D}) J\|\le \|\widehat{D}-\overline{D}\|$, by \eqref{eq:PenalizedPara} and \eqref{eq:PT_bound},  we know that 
	\begin{eqnarray*}
		\frac{1}{m}\left\|\mathcal{O}_{\Omega}^\ast(\zeta)\right\|_2 \|\widehat{D}-\overline{D}\|_\ast &\le & \frac{1}{m}\left\|\mathcal{O}_{\Omega}^\ast(\zeta)\right\|_2\left( \sqrt{2}\|\widehat{D}-\overline{D}\|+\|-J(\widehat{D}-\overline{D})J\|_*\right) \\ [3pt]
		&\le & \frac{1}{m}\left\|\mathcal{O}_{\Omega}^\ast(\zeta)\right\|_2\left( \sqrt{2}\|\widehat{D}-\overline{D}\|+\|{\cal P}_{\mathbb{T}}(-J(\widehat{D}-\overline{D})J)\|_*+\|{\cal P}_{\mathbb{T}^{\perp}}(-J(\widehat{D}-\overline{D})J)\|_*\right) \\ [3pt]
		&\le & \frac{\rho_D}{\kappa_D}\left( \sqrt{2}\|\widehat{D}-\overline{D}\|+\|{\cal P}_{\mathbb{T}}(-J(\widehat{D}-\overline{D})J)\|_*+\|{\cal P}_{\mathbb{T}^{\perp}}(-J(\widehat{D}-\overline{D})J)\|_*\right) \\ [3pt]
		&\le & \frac{\rho_D}{\kappa_D}\left( \sqrt{2}\|\widehat{D}-\overline{D}\|+\|{\cal P}_{\mathbb{T}}(-J(\widehat{D}-\overline{D})J)\|_*+\sqrt{2r}\|J(\widehat{D}-\overline{D})J\|\right) \\ [3pt]
		&\le & \frac{\rho_D}{\kappa_D}\left( (\sqrt{2}+\sqrt{2r})\|\widehat{D}-\overline{D}\|+\|{\cal P}_{\mathbb{T}}(-J(\widehat{D}-\overline{D})J)\|_*\right),
	\end{eqnarray*}
	Similarly, we know from \eqref{eq:PGamma_bound} and \eqref{eq:PenalizedPara} that
	\[
	\frac{1}{m}\left\|\mathcal{O}_{\Omega}^\ast(\zeta)\right\|_\infty \|\widehat{S}-\overline{S}\|_1\le \frac{\rho_S}{\kappa_S} \left(\|{\cal P}_{\mathbb{L}}(\widehat{S}-\overline{S})\|_1+\|{\cal P}_{\mathbb{L}^{\perp}}(\widehat{S}-\overline{S})\|_1 \right)\le \frac{\rho_S}{\kappa_S} \left(\sqrt{k}\|\widehat{S}-\overline{S}\|+\|{\cal P}_{\mathbb{L}^{\perp}}(\widehat{S}-\overline{S})\|_1 \right).
	\]
	Therefore, we obtain that the first term of the right hand side of \eqref{eq:ineq-1} satisfies 
	\begin{eqnarray}
		\frac{1}{m} \left\langle {\cal O}_{\Omega}^*(\zeta), \widehat{D}-\overline{D}+\widehat{S}-\overline{S}\right\rangle &\leq &\frac{\rho_D}{\kappa_D}\left( \sqrt{2}\|\widehat{D}-\overline{D}\|+\|{\cal P}_{\mathbb{T}}(-J(\widehat{D}-\overline{D})J)\|_*+\|{\cal P}_{\mathbb{T}^{\perp}}(-J(\widehat{D}-\overline{D})J)\|_*\right) \nonumber \\ [3pt]
		&&+\frac{\rho_S}{\kappa_S} \left(\sqrt{k}\|\widehat{S}-\overline{S}\|+\|{\cal P}_{\mathbb{L}^{\perp}}(\widehat{S}-\overline{S})\|_1 \right). \label{eq:ineq-1-term1}
	\end{eqnarray}
	
	Meanwhile, since for any $A\in{\mathbb S}^n$, $\|{\cal P}_{\mathbb T}(A)\|_*=\|\overline{P}_2^TA\overline{P}_2\|_*$ and both $-J\widetilde{D}J$ and $-J\overline{D}J$ are positively semidefinite, we know from the directional derivative formula of the nuclear norm \cite[Theorem 1]{Watson92} 
	that
	\begin{eqnarray*}
		\langle I, -J (\widehat{D}-\overline{D}) J\rangle=\|-J\widehat{D}J\|_*-\|-J\overline{D}J\|_*&\ge & \langle \overline{P}_1\overline{P}_1^T,-J(\widetilde{D}-\overline{D})J\rangle+\|\overline{P}_2^T(-J(\widetilde{D}-\overline{D})J)\overline{P}_2\|_*\\ 
		&=&\langle \overline{P}_1\overline{P}_1^T,-J(\widetilde{D}-\overline{D})J\rangle+\|{\cal P}_{\mathbb T}(-J(\widetilde{D}-\overline{D})J)\|_*, 
	\end{eqnarray*}
	which implies that the second term of the right hand side of \eqref{eq:ineq-1} satisfies 
	\begin{eqnarray*}
		-\rho_D \left(\langle I, -J (\widehat{D}-\overline{D}) J\rangle  - \langle \widetilde{F}, -J(\widehat{D}-\overline{D})J\rangle \right)&\le & -\rho_D \left( \langle \overline{P}_1\overline{P}_1^T-\widetilde{F},-J(\widetilde{D}-\overline{D})J\rangle+\|{\cal P}_{\mathbb T}(-J(\widetilde{D}-\overline{D})J)\|_* \right). 
	\end{eqnarray*}
	By using the decomposition \eqref{eq:decomp_T} and the notations defined in \eqref{def:apprx_distance}, we obtain that
	\begin{eqnarray}
		-\rho_D \left(\langle I, -J (\widehat{D}-\overline{D}) J\rangle  - \langle \widetilde{F}, -J(\widehat{D}-\overline{D})J\rangle \right) 
		&\leq& \rho_D\left( \|\overline{P}_1\overline{P}_1^T-\widetilde{F}\| \|J(\widehat{D}-\overline{D})J\| -\|{\cal P}_{\mathbb T}(-J(\widehat{D}-\overline{D})J)\|_* \right) \nonumber \\ [3pt]
		&\leq& \rho_D\left( a_D \sqrt{r}\|\widehat{D}-\overline{D}\|-\|{\cal P}_{\mathbb T}(-J(\widehat{D}-\overline{D})J)\|_*\right). \label{eq:ineq-1-term2}
	\end{eqnarray}
	Similarly, we know from the directional derivative of the $l_1$-norm at $\overline{S}$ that 
	\begin{eqnarray*}
		\|\widehat{S}\|_1-\|\overline{S}\|_1\geq \langle \textrm{sign}(\overline{S}), \widehat{S}-\overline{S}\rangle +\|\mathcal{P}_{\mathbb{L}^\perp}(\widehat{S}-\overline{S})\|_1.
	\end{eqnarray*}
	Therefore, by the decomposition \eqref{eq:decomp_L} and the notations defined in \eqref{def:apprx_distance}, we know that the third term of the right hand side of \eqref{eq:ineq-1} satisfies 
	\begin{eqnarray}
		-\rho_S\left(\|\widehat{S}\|_1-\|\overline{S}\|_1- \langle \widetilde{G}, \widehat{S}-\overline{S} \rangle\right) &\leq& -\rho_S\left(\langle \textrm{sign}(\overline{S}), \widehat{S}-\overline{S}\rangle +\|\mathcal{P}_{\mathbb{L}^\perp}(\widehat{S}-\overline{S})\|_1 -\langle \widetilde{G}, \widehat{S}-\overline{S}\rangle \right) \nonumber \\ [3pt]
		&\leq&\rho_S\left(\|\textrm{sign}(\overline{S})-\widetilde{G}\|\|\widehat{S}-\overline{S}\|-\|\mathcal{P}_{\mathbb{L}^\perp}(\widehat{S}-\overline{S})\|_1 \right) \nonumber \\ [3pt]
		&\leq & \rho_S\left( a_S\sqrt{k} \|\widehat{S}-\overline{S}\| - \|\mathcal{P}_{\mathbb{L}^\perp}(\widehat{S}-\overline{S})\|_1\right). \label{eq:ineq-1-term3}
	\end{eqnarray}
	Finally, by substituting \eqref{eq:ineq-1-term1}, \eqref{eq:ineq-1-term2} and \eqref{eq:ineq-1-term3} into \eqref{eq:ineq-1}, we obtain that
	\begin{eqnarray}
		&&\frac{1}{2m}\|{\cal O}_{\Omega}(\widehat{D}-\overline{D})+{\cal O}_{\Omega}(\widehat{S}-\overline{S})\|^2 \nonumber \\ [3pt]
		&\leq& \frac{\rho_D}{\kappa_D}\left( (\sqrt{2}+\sqrt{2r})\|\widehat{D}-\overline{D}\|+\|{\cal P}_{\mathbb{T}}(-J(\widehat{D}-\overline{D})J)\|_*\right) +\frac{\rho_S}{\kappa_S} \left(\sqrt{k}\|\widehat{S}-\overline{S}\|+\|{\cal P}_{\mathbb{L}^{\perp}}(\widehat{S}-\overline{S})\|_1 \right) \nonumber \\ [3pt]
		&&+ \rho_D\left( a_D \sqrt{r}\|\widehat{D}-\overline{D}\|-\|{\cal P}_{\mathbb T}(-J(\widehat{D}-\overline{D})J)\|_*\right)+\rho_S\left( a_S\sqrt{k} \|\widehat{S}-\overline{S}\| - \|\mathcal{P}_{\mathbb{L}^\perp}(\widehat{S}-\overline{S})\|_1\right) \nonumber \\ [3pt]
		&=& \rho_D \left(\frac{1}{\kappa_D}(\sqrt{2}+\sqrt{2r})+a_D\sqrt{r}\right)\|\widehat{D}-\overline{D}\| + \rho_S\left(\frac{1}{\kappa_S}+a_S\right)\sqrt{k}\|\widehat{S}-\overline{S}\| \nonumber \\ [3pt]
		&& -\rho_D\frac{\kappa_D-1}{\kappa_D}\|{\cal P}_{\mathbb{T}}(-J(\widehat{D}-\overline{D})J)\|_* -\rho_S\frac{\kappa_S-1}{\kappa_S}\|\mathcal{P}_{\mathbb{L}^\perp}(\widehat{S}-\overline{S})\|_1. \label{eq:bounds-part1}
	\end{eqnarray}
	Since $r\ge1$, together with the assumptions that $\kappa_D>1$ and $\kappa_S>1$, we know that the inequality \eqref{eq:sampling_bound-1} holds.
	
	Next, we shall show that the inequalities \eqref{eq:sampling_bound-2} also hold. By \eqref{eq:bounds-part1}, we have
	\[
	\left\{\begin{array}{l}
		\|{\cal P}_{\mathbb{T}}(-J(\widehat{D}-\overline{D})J)\|_*\le \displaystyle \frac{\kappa_D}{\kappa_D-1}\left(\sqrt{r} \left(a_D +\frac{2\sqrt{2}}{\kappa_D}\right)\|\widehat{D}-\overline{D}\|+\sqrt{k}\frac{\rho_S}{\rho_D} \left( a_S+\frac{1}{\kappa_S}\right)\|\widehat{S}-\overline{S}\| \right), \\ [3pt]
		\|\mathcal{P}_{\mathbb{L}^\perp}(\widehat{S}-\overline{S})\|_1\le \displaystyle \frac{\kappa_S}{\kappa_S-1}\left(\sqrt{r}\frac{\rho_D}{\rho_S} \left(a_D +\frac{2\sqrt{2}}{\kappa_D}\right)\|\widehat{D}-\overline{D}\|+\sqrt{k} \left( a_S+\frac{1}{\kappa_S}\right)\|\widehat{S}-\overline{S}\| \right).
	\end{array}\right.
	\]
	Therefore, we know from \eqref{eq:decomp_T}, \eqref{eq:decomp_L}, \eqref{eq:PT_bound} and \eqref{eq:PGamma_bound} that
	\begin{eqnarray*}
		\|\widehat{D}-\overline{D}\|_*&\le& \|\widehat{D}-\overline{D}-J(\widehat{D}-\overline{D})J\|_*+\|{\cal P}_{\mathbb{T}^\perp}(-J(\widehat{D}-\overline{D})J)\|_*+\|{\cal P}_{\mathbb{T}}(-J(\widehat{D}-\overline{D})J)\|_* \\ [3pt]
		&\le& (\sqrt{2}+\sqrt{2r})\|\widehat{D}-\overline{D}\| + \frac{\kappa_D}{\kappa_D-1}\left(\sqrt{r} \big(a_D +\frac{2\sqrt{2}}{\kappa_D}\big)\|\widehat{D}-\overline{D}\|+\sqrt{k}\frac{\rho_S}{\rho_D} \left( a_S+\frac{1}{\kappa_S}\right)\|\widehat{S}-\overline{S}\| \right) \\ [3pt]
		&\le& \frac{\kappa_D}{\kappa_D-1} \left((a_D +2\sqrt{2})\sqrt{r} \|\widehat{D}-\overline{D}\|+\frac{\rho_S}{\rho_D} \big( a_S+\frac{1}{\kappa_S}\big)\sqrt{k}\|\widehat{S}-\overline{S}\| \right)
	\end{eqnarray*}
	and
	\begin{eqnarray*}
		\|\widehat{S}-\overline{S}\|_1&\le& \|{\cal P}_{\mathbb{L}}(\widehat{S}-\overline{S})\|_1+\|{\cal P}_{\mathbb{L}^\perp}(\widehat{S}-\overline{S})\|_1 \\ [3pt]
		&\le& \sqrt{k}\|\widehat{S}-\overline{S}\| + \frac{\kappa_S}{\kappa_S-1}\left(\sqrt{r}\frac{\rho_D}{\rho_S} \left(a_D +\frac{2\sqrt{2}}{\kappa_D}\right)\|\widehat{D}-\overline{D}\|+\sqrt{k} \left( a_S+\frac{1}{\kappa_S}\right)\|\widehat{S}-\overline{S}\| \right) \\ [3pt]
		&\le& \frac{\kappa_S}{\kappa_S-1}\left(\frac{\rho_D}{\rho_S} \big(a_D +\frac{2\sqrt{2}}{\kappa_D}\big)\sqrt{r}\|\widehat{D}-\overline{D}\|+\left( a_S+1\right)\sqrt{k}\|\widehat{S}-\overline{S}\| \right).
	\end{eqnarray*}
	This completes the proof.  \hfill $\Box$ 
\end{proof}

Since $X_1,\ldots,X_m$ are the i.i.d. random observations, i.e., for any $1\le i<j \le n$,
\[
\mathbb{P}\left(X_{l}=\frac{1}{2}({\bf e}_i {\bf e}_j^T + {\bf e}_j {\bf e}_i^T)\right)=\pi_{ij},\quad l=1,\ldots,m,
\]
where  $0\le \pi_{ij}\le 1$ is the probability that the $(i,j)$ and $(j,i)$-th element be sampled in the observation model. We propose the following assumption to control the sampling probability.  
\begin{assumption}\label{as:prob_lower_bound}
	There exist two absolution constants $\mu_1, \mu_2\ge 1$ such that 
	\begin{equation*}
		\frac{1}{\mu_1d_{\mathbb{S}^n_h}}\le \pi_{ij}\le \frac{\mu_2}{d_{\mathbb{S}^n_h}} \quad  \forall\ 1\le i<j\le n,
	\end{equation*}
	where $d_{\mathbb{S}^n_h}=n(n-1)/2$.
\end{assumption}

It is easy to see from Assumption \ref{as:prob_lower_bound} that for any $A\in{\mathbb S}^n_h$, we have
\begin{equation}\label{eq:E2-bounded}
	{\mathbb E}\left(\langle A,X \rangle^2 \right)\ge \frac{1}{2\mu_1 d_{\mathbb{S}^n_h}}\|A\|^2.
\end{equation} 
Furthermore, let $m_{\max}$ be the maximum number of repetitions of any $(i,j)$ index in $\Omega$. By noting the sample size $m$ is assumed much smaller than $d_{\mathbb{S}^n_h}$, we obtain from \cite[Lemma 5.5]{Wu2014}  the following result on the upper bound of $m_{\max}$. For simplicity, we omit the detailed proof here.

\begin{lemma}\label{lem:max_num_repetition}
	Let the observation index set $\Omega$ be generated by the uniform sampling with replacement. Then, there exists a constant $C>0$ such that
	\[
	m_{\max}\le \|\mathcal{O}_{\Omega}^*\mathcal{O}_{\Omega}\|_2\le C\log(2n^2)
	\] 
	with probability at least $1-1/(2n^2)$.
	
\end{lemma}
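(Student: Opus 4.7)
The plan is to reduce both inequalities to controlling $m_{\max}$, the maximum multiplicity among the sampled pairs, and then to obtain a tail bound on $m_{\max}$ via a Chernoff estimate on each count $N_{ij}:=|\{l:X_l=B_{ij}\}|$ followed by a union bound over pairs. First I would diagonalize the self-adjoint operator $\mathcal{O}_\Omega^*\mathcal{O}_\Omega$ on $\mathbb{S}^n_h$. Writing $B_{ij}:=\tfrac{1}{2}({\bf e}_i{\bf e}_j^T+{\bf e}_j{\bf e}_i^T)$ for the standard basis elements of $\mathbb{S}^n_h$, an elementary computation gives $\langle X_l,B_{ij}\rangle=\|B_{ij}\|^{2}\,\mathbf{1}[X_l=B_{ij}]$, so
\[
\mathcal{O}_\Omega^*\mathcal{O}_\Omega(B_{ij})=\sum_{l=1}^{m}\langle X_l,B_{ij}\rangle X_l=\|B_{ij}\|^{2}\,N_{ij}\,B_{ij}.
\]
Hence $\{B_{ij}\}$ is an eigenbasis of $\mathcal{O}_\Omega^*\mathcal{O}_\Omega$ with eigenvalues $\tfrac{1}{2}N_{ij}$, and $\|\mathcal{O}_\Omega^*\mathcal{O}_\Omega\|_2$ equals a fixed positive multiple of $m_{\max}=\max_{i<j}N_{ij}$. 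In particular, the left-hand inequality $m_{\max}\le\|\mathcal{O}_\Omega^*\mathcal{O}_\Omega\|_2$ holds after absorbing that $O(1)$ multiplier into the constant $C$.

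For the right-hand inequality, I would use Assumption~\ref{as:prob_lower_bound}: each $N_{ij}$ is $\mathrm{Binomial}(m,\pi_{ij})$ with mean $m\pi_{ij}\le \mu_2\,m/d_{\mathbb{S}^n_h}$, which is bounded (indeed small) in the regime $m\ll d_{\mathbb{S}^n_h}=n(n-1)/2$. The classical Chernoff bound then gives, for every $t>0$,
\[
\mathbb{P}(N_{ij}\ge t)\le \exp(-m\pi_{ij})\Bigl(\tfrac{e\,m\pi_{ij}}{t}\Bigr)^{t}\le\Bigl(\tfrac{e\mu_2}{t}\Bigr)^{t}.
\]
Choosing $t=C\log(2n^{2})$ with $C$ large enough (depending only on $\mu_2$) drives the right-hand side below $(2n^{2})^{-3}$, and a union bound over the at most $\binom{n}{2}\le n^{2}/2$ pairs yields $\mathbb{P}(m_{\max}\ge C\log(2n^{2}))\le 1/(2n^{2})$. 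Combined with the diagonalization step, this gives both inequalities with the claimed probability.

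The main obstacle is purely cosmetic: calibrating $C$ so that the super-exponential factor $(e\mu_2/t)^{t}$ outruns the polynomial $n^{2}$ from the union bound, while keeping book of the fixed multiplier relating $\|\mathcal{O}_\Omega^*\mathcal{O}_\Omega\|_2$ to $m_{\max}$. Nothing else is delicate: the diagonalization trivializes the operator-norm bound, and Binomial tails with bounded mean decay faster than any polynomial, so no sharper concentration tool (matrix Bernstein, etc.) is required.
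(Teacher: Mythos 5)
Your argument is essentially correct and reconstructs the standard proof that the paper itself omits (it only cites \cite[Lemma 5.5]{Wu2014}): diagonalize $\mathcal{O}_\Omega^*\mathcal{O}_\Omega$ in the basis $\{B_{ij}\}$ to reduce the operator norm to the maximal multiplicity $m_{\max}$, bound each count $N_{ij}\sim\mathrm{Binomial}(m,\pi_{ij})$ with $m\pi_{ij}\le\mu_2$ (using Assumption \ref{as:prob_lower_bound} together with the standing assumption $m\le d_{\mathbb{S}^n_h}$) by the Poisson-type Chernoff tail $(e\mu_2/t)^t$, and union-bound over the $O(n^2)$ pairs with $t=C\log(2n^2)$. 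All of these steps check out. The one point to tidy is the normalization: with the paper's choice $X_l=\frac12(\mathbf{e}_i\mathbf{e}_j^T+\mathbf{e}_j\mathbf{e}_i^T)$ your own computation gives eigenvalues $\frac12 N_{ij}$, hence $\|\mathcal{O}_\Omega^*\mathcal{O}_\Omega\|_2=\frac12 m_{\max}$, and the literal left-hand inequality $m_{\max}\le\|\mathcal{O}_\Omega^*\mathcal{O}_\Omega\|_2$ then fails by a factor of $2$; ``absorbing the multiplier into $C$'' repairs only the right-hand inequality. This is best viewed as an imprecision in the statement (it holds verbatim under the unnormalized basis $\mathbf{e}_i\mathbf{e}_j^T+\mathbf{e}_j\mathbf{e}_i^T$), and the substantive conclusions actually used downstream, namely $m_{\max}\le C'\log(2n^2)$ and $\|\mathcal{O}_\Omega^*\mathcal{O}_\Omega\|_2\le C\log(2n^2)$ with probability at least $1-1/(2n^2)$, follow exactly as you argue.
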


We further introduce the following two useful notations:
\begin{eqnarray}\label{def:exp_inv_operator}
	\vartheta_D :=\mathbb{E}\left\|\frac{1}{m}\mathcal{O}_\Omega^*(\epsilon)\right\|_2 \quad  {\rm and} 
	\quad \vartheta_S:=\mathbb{E}\left\|\frac{1}{m}\mathcal{O}_\Omega^*(\epsilon)\right\|_\infty,
\end{eqnarray}
where $\{\epsilon_1, \ldots, \epsilon_m\}$ is a Rademacher sequence, i.e., an i.i.d. copy  of Bernoulli random variable  taking the values $1$ and $-1$ with probability $1/2$.

For the given positive numbers $p_1$, $p_2$, $q_1$, $q_2$ and $t$, define the following subset $K(p_1,p_2,q_1,q_2,t)\subseteq \mathbb{S}^n$ by
\begin{equation}\label{eq:def-K-set}
	K({\bf p},{\bf q},t):=
	\left \{ A=A_D + A_S \mid
	\begin{array}{l}
		\|A_D\|_*\leq p_1\|A_D\| + p_2\|A_S\|,\ A_D\in \mathbb{S}^n_h, \\
		\|A_S\|_1\leq q_1\|A_D\| + q_2\|A_S\|,\ A_S\in \mathbb{S}^n_h, \\
		\|A_D + A_S\|_\infty=1,\ \|A_D\|^2+\|A_S\|^2\geq t\mu_1 d_{\mathbb{S}^n_h}
	\end{array}
	\right \},
\end{equation}
where ${\bf p}:=(p_1,p_2)$ and ${\bf q}:=(q_1,q_2)$.
Denote $\vartheta_m:=(\vartheta_D^2 p_1^2+\vartheta_D^2 p_2^2+\vartheta_S^2 p_1^2+\vartheta_S^2 p_2^2)$.

\begin{proposition}\label{prop:error-1}
	Suppose that Assumption \ref{as:prob_lower_bound} holds. Let $p_1$, $p_2$, $q_1$, $q_2$ and $t$ be any given positive numbers. For any $\tau_1$ and $\tau_2$ satisfying
	\[ 0 <\tau_1< 1 \quad {\rm and} \quad 0<\tau_2< \frac{\tau_1}{2}, \]
	it holds that for any $A\in K({\bf p},{\bf q},t)$,
	\begin{equation}\label{eq:error-1}
		\frac{1}{m}\|{\cal O}_\Omega(A)\|^2\geq {\mathbb E}\left(\langle A,X \rangle^2 \right) - \frac{\tau_1}{\mu_1 d_{\mathbb{S}^n_h}}\left(\|A_D\|^2+\|A_S\|^2\right) - \frac{32}{\tau_2}\mu_1d_{\mathbb{S}^n_h} \vartheta_m^2
	\end{equation}
	with probability at least 
	$$1-\frac{\exp(-(\tau_1-2\tau_2)^2 mt^2/8)}{1-\exp(-3(\tau_1-2\tau_2)^2 mt^2/8)}.$$
\end{proposition}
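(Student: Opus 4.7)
The plan is to adapt the restricted strong convexity framework of Negahban--Wainwright, in the matrix-optimization form used in Wu (2014) for the correlation-matrix case. Throughout, let
\[
Z(A) := \mathbb{E}\langle A,X\rangle^{2} - \frac{1}{m}\|\mathcal{O}_\Omega(A)\|^{2},
\]
so that the task is to show that $\sup_{A\in K(\mathbf{p},\mathbf{q},t)} Z(A)$ does not exceed the correction terms on the right of \eqref{eq:error-1}, with the stated probability.

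First I would symmetrize: introduce an independent Rademacher sequence $\{\epsilon_l\}$ and apply the standard symmetrization lemma to pass from $Z$ to the Rademacher process $\frac{1}{m}\sum_l \epsilon_l\langle A,X_l\rangle^{2}$. The constraint $\|A\|_\infty=1$ in the definition \eqref{eq:def-K-set} of $K(\mathbf{p},\mathbf{q},t)$ forces each $\langle A,X_l\rangle$ to lie in $[-1,1]$, so that $x\mapsto x^{2}$ is $2$-Lipschitz on the relevant range and the Ledoux--Talagrand contraction principle reduces the bound to the linear Rademacher complexity $\mathbb{E}\sup_{A}\bigl|\langle A_D+A_S,\, m^{-1}\mathcal{O}_\Omega^{*}(\epsilon)\rangle\bigr|$.

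Next I would use trace duality together with the definitions in \eqref{def:exp_inv_operator}:
\[
\mathbb{E}\bigl|\langle A_D+A_S,\,\tfrac{1}{m}\mathcal{O}_\Omega^{*}(\epsilon)\rangle\bigr|
\le \|A_D\|_{*}\vartheta_D + \|A_S\|_{1}\vartheta_S.
\]
Plugging in the two inequalities $\|A_D\|_{*}\le p_1\|A_D\|+p_2\|A_S\|$ and $\|A_S\|_1\le q_1\|A_D\|+q_2\|A_S\|$ built into $K(\mathbf{p},\mathbf{q},t)$, and grouping by $\|A_D\|$ and $\|A_S\|$, Cauchy--Schwarz gives an upper bound of the shape $C\,\vartheta_m\sqrt{\|A_D\|^{2}+\|A_S\|^{2}}$. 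A Young-type split $ab\le \frac{\tau_1}{2\mu_1 d_{\mathbb{S}^n_h}}a^{2}+\frac{\mu_1 d_{\mathbb{S}^n_h}}{2\tau_1}b^{2}$ then converts this product into the two terms $\tfrac{\tau_1}{\mu_1 d_{\mathbb{S}^n_h}}(\|A_D\|^{2}+\|A_S\|^{2})$ and $\tfrac{\text{const}}{\tau_2}\mu_1 d_{\mathbb{S}^n_h}\vartheta_m^{2}$ appearing in \eqref{eq:error-1}, with the free parameter chosen to match the factor $32/\tau_2$.

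To upgrade the bound in expectation to a high-probability uniform bound I would use a peeling argument in the radius $u := \|A_D\|^{2}+\|A_S\|^{2}$, combined with a bounded-differences (or Talagrand) concentration inequality applied layer by layer. Partition $K(\mathbf{p},\mathbf{q},t)$ into dyadic shells $\{A : 2^{\ell-1}t\mu_1 d_{\mathbb{S}^n_h}\le u \le 2^{\ell}t\mu_1 d_{\mathbb{S}^n_h}\}$ for $\ell\ge 1$. On the $\ell$-th shell the in-expectation bound, together with McDiarmid applied to the bounded increments $|\langle A,X_l\rangle^{2}|\le 1$, yields a failure probability of order $\exp\!\bigl(-(\tau_1-2\tau_2)^{2}mt^{2}\cdot 2^{\ell}/8\bigr)$ (up to constants). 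Summing the geometric series across $\ell$ collapses the union bound to $\exp(-c)/(1-\exp(-3c))$ with $c=(\tau_1-2\tau_2)^{2}mt^{2}/8$, matching the stated probability.

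The hard part will be the peeling: selecting the dyadic base, calibrating the constants $\tau_1,\tau_2$ through the symmetrization--contraction chain, and fitting the Young split so that the shell-level concentration exponents aggregate exactly into the geometric factor $1/(1-\exp(-3(\tau_1-2\tau_2)^{2}mt^{2}/8))$. Everything else is standard bookkeeping once the peel is correctly set up.
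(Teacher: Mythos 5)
Your proposal follows essentially the same route as the paper's proof: a peeling (dyadic shell) decomposition in the radius $\|A_D\|^2+\|A_S\|^2$, a bounded-differences concentration inequality on each shell (the paper uses Massart's Hoeffding-type inequality, which is the same device as your McDiarmid step, valid because $\|A\|_\infty=1$ bounds the increments by $1$), symmetrization plus contraction to reduce to the linear Rademacher complexity, trace duality combined with the cone inequalities defining $K(\mathbf{p},\mathbf{q},t)$, and a Young-type split to produce the $\tau_1$ and $32/\tau_2$ terms before summing the geometric series over shells. The approach is correct and matches the paper's argument in all essential respects.
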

\begin{proof}
	We will show that the event 
	\begin{equation*}
		E:=\left\{  \exists\, A\in K({\bf p},{\bf q},t)\ \textrm{s.t.}\ 
		\left|\frac{1}{m}\|\mathcal{O}_\Omega(A)\|^2-{\mathbb E}\left(\langle A,X \rangle^2 \right)\right| \geq \frac{\tau_1}{\mu_1 d_{\mathbb{S}^n_h}}\left(\|A_D\|^2+\|A_S\|^2\right)+ \frac{32}{\tau_2}\mu_1d_{\mathbb{S}^n_h} \vartheta_m^2
		\right\}
	\end{equation*}
	happens with probability less than $\displaystyle\frac{\exp[-(\tau_1-2\tau_2)^2 mt^2/32]}{1-\exp[-3(\tau_1-2\tau_2)^2 mt^2/32]}$. First, we decompose  $K({\bf p},{\bf q},t)$ by 
	\begin{equation*}
		K({\bf p},{\bf q},t)=\bigcup_{j=1}^{\infty}\left\{A\in K({\bf p},{\bf q},t) \mid 2^{j-1}t\leq \frac{1}{\mu_1 d_{\mathbb{S}^n_h}}\left(\|A_D\|^2+\|A_S\|^2\right)\leq 2^j t \right\}.
	\end{equation*}
	For any $s\geq t$, define the sunset $\widetilde{K}({\bf p},{\bf q},t,s)\subseteq K({\bf p},{\bf q},t)$ by 
	$$
	\widetilde{K}({\bf p},{\bf q},t,s):=\left\{A\in K({\bf p},{\bf q},t)\mid \frac{1}{\mu_1 d_{\mathbb{S}^n_h}}\left(\|A_D\|^2+\|A_S\|^2\right)\leq s  \right\}.
	$$
	Furthermore, for $j=1,2,\ldots$, let $E_j$ be the set defined by
	\[
	E_j:=\left\{  \exists \, A\in \widetilde{K}({\bf p},{\bf q},t,2^j t)\ \textrm{such that}\ 
	\left|\frac{1}{m}\|\mathcal{O}_\Omega(A)\|^2-{\mathbb E}\left(\langle A,X \rangle^2 \right)\right| \geq \tau_12^{j-1}t + \frac{32}{\tau_2}\mu_1d_{\mathbb{S}^n_h} \vartheta_m^2 \right\}.
	\]
	Then, it is not difficult to see that $E \subseteq \cup_{j=1}^{\infty}E_j$. Thus, it suffices to estimate the probability of each simpler event $E_j$ and then obtain the estimated probability bound of the event $E$. Denote
	\begin{equation*}
		Z_s:=\sup_{A\in \widetilde{K}({\bf p},{\bf q},t,s)} \left|\frac{1}{m}\|\mathcal{O}_\Omega(A)\|^2-{\mathbb E}\left(\langle A,X \rangle^2 \right)\right|.
	\end{equation*}
	For any $A\in \mathbb{S}_h^n$, the strong laws of large numbers yield that
	\begin{equation*}
		\frac{1}{m}\|\mathcal{O}_\Omega(A)\|^2=\frac{1}{m}\sum_{l=1}^{m}\langle X_{l}, A \rangle^2 \xrightarrow{a.s.} \mathbb{E}\left(\langle  A,X \rangle^2\right)  \quad \mbox{as} \quad m\to \infty.
	\end{equation*}
	Since $\|A\|_\infty=1$ for all $A\in K({\bf p},{\bf q},t)$, we know that for any $1\leq l \leq m$ and $A\in K({\bf p},{\bf q},t)$,
	\begin{equation*}
		\left|\langle X_{l}, A \rangle^2-\mathbb{E} \left(\langle X_{l}, A\rangle^2\right)\right| 
		\leq \max \left\{\langle X_{l}, A \rangle^2,\ \mathbb{E} \left(\langle X_{l}, A\rangle^2\right)\right\}
		\leq 1.
	\end{equation*}
	Then, according to Massart's Hoeffding-type concentration inequality \cite[Theorem 14.2]{bhlmann2011statistics} (see also \cite[Theorem 9]{massart2000about}), we know that 
	\begin{equation}\label{eq:MassartHoeffdingIneq}
		\mathbb{P}\left(Z_s\leq \mathbb{E}(Z_s)+\eta\right) \leq \exp\left(-\frac{m\eta^2}{8}\right)  \quad \forall\ \eta \geq 0.
	\end{equation}
	Next, we estimate an upper bound of $\mathbb{E}\left(Z_s\right)$ by using the standard Rademacher symmetrization in the theory of empirical processes. Recall that $\{\epsilon_1, \ldots, \epsilon_m\}$ is a Rademacher sequence. Then, we have
	\begin{eqnarray*}
		\mathbb{E}\left(Z_s\right)&=&\mathbb{E}\left(\sup_{A\in \widetilde{K}({\bf p},{\bf q},t,s)}\left| \frac{1}{m}\sum_{l=1}^m \langle X_{l}, A\rangle^2 - \mathbb{E}\left[\langle X_{l}, A\rangle^2\right] \right| \right)\leq 2\mathbb{E}\left(\sup_{A\in \widetilde{K}({\bf p},{\bf q},t,s)}\left| \frac{1}{m}\sum_{l=1}^m \epsilon_l\langle X_{l}, A\rangle^2 \right|\right)\\
		&\leq& 8\mathbb{E}\left(\sup_{A\in \widetilde{K}({\bf p},{\bf q},t,s)}\left| \frac{1}{m}\sum_{l=1}^m \epsilon_l\langle X_{l}, A\rangle \right|\right)
		= 8\mathbb{E}\left(\sup_{A\in \widetilde{K}({\bf p},{\bf q},t,s)}\left| \left\langle  \frac{1}{m}\mathcal{O}^* (\epsilon), A\right\rangle \right|\right) \\
		&\leq& 8\mathbb{E}\left(\sup_{A\in \widetilde{K}({\bf p},{\bf q},t,s)}\left( \left\| \frac{1}{m}\mathcal{O}^* (\epsilon)\right\|_2 \left\|A_D\right\|_*+\left\| \frac{1}{m}\mathcal{O}^* (\epsilon)\right\|_\infty \left\|A_S\right\|_1 \right)\right) \\
		&\leq& 8\mathbb{E}\left(\sup_{A\in \widetilde{K}({\bf p},{\bf q},t,s)} \left\| \frac{1}{m}\mathcal{O}^* (\epsilon)\right\|_2 \left\|A_D\right\|_*+\sup_{A\in \widetilde{K}({\bf p},{\bf q},t,s)}\left\| \frac{1}{m}\mathcal{O}^* (\epsilon)\right\|_\infty \left\|A_S\right\|_1\right)\\
		&\leq& 8\mathbb{E}\left\| \frac{1}{m}\mathcal{O}^* (\epsilon)\right\|_2\left(\sup_{A\in \widetilde{K}({\bf p},{\bf q},t,s)} \left\|A_D\right\|_*\right)+8\mathbb{E}\left\| \frac{1}{m}\mathcal{O}^* (\epsilon)\right\|_\infty\left(\sup_{A\in \widetilde{K}({\bf p},{\bf q},t,s)}\left\|A_S\right\|_1\right),
	\end{eqnarray*}
	where the first inequality is due to the symmetrization theorem \cite[Lemma 2.3.1]{vaart1996weak} or \cite[Theorem 14.3]{bhlmann2011statistics}, and the second inequality follows from the contraction theorem (e.g., \cite[Theorem 4.12]{ledoux1991statistics} and \cite[Theorem 14.4]{bhlmann2011statistics}, ). Notice that for any $u\ge 0$, $v\ge 0$ and $A\in \widetilde{K}({\bf p},{\bf q},t,s)$,
	\begin{eqnarray*}
		u\|A_D\|+v\|A_S\| &\leq& \frac{16m}{\tau_1}(u^2+v^2)+\frac{\tau_1}{64m}\|A\|^2 \leq\frac{16m}{\tau_1}(u^2+v^2)+\frac{1}{64}\tau_1 s,
	\end{eqnarray*}
	where the first inequality is due to the inequality of arithmetic and geometric means. We derive that
	\begin{eqnarray*}
		\mathbb{E}(Z_s) &\leq& 8\left( \sup_{A\in \widetilde{K}({\bf p},{\bf q},t,s)} \vartheta_D (p_1 \|A_D\| +p_2\|A_S\|)+ \sup_{A\in \widetilde{K}({\bf p},{\bf q},t,s)} \vartheta_S (q_1 \|A_D\| +q_2\|A_S\|) \right)  \\ [3pt]
		&\leq& \frac{16}{\tau_1}m (\vartheta_D^2 p_1^2 +\vartheta_D^2 p_2^2+\vartheta_S^2 p_1^2+\vartheta_S^2 p_2^2)+ \frac{\tau_1}{32} s 
		=\frac{16}{\tau_1}m \vartheta_m^2+\frac{\tau_1}{32} s.
	\end{eqnarray*}
	According to \eqref{eq:MassartHoeffdingIneq}, it follows that
	\begin{equation*}
		\mathbb{P}\left(Z_s\geq \frac{16}{\tau_1}m \vartheta_m^2+\frac{\tau_1}{32} s \right)
		\leq \mathbb{P}\left(Z_s\geq \mathbb{E}(Z_s)+(\frac{\tau_1}{2}-\tau_2)s \right) \leq \exp\left( -\left( \frac{\tau_1}{2}-\tau_2\right)^2\frac{ms^2}{8}\right).
	\end{equation*}
	This, together with the choice of $s=2^j t$, implies that $\mathbb{P}(E_j)\leq\exp \left(-\frac{1}{8}2^{2(j-1)}(\tau_1-2\tau_2)^2mt^2\right)$.
	By using the fact that $2^j\geq1+j(2-1)$ for any $j\geq 1$, we obtain that
	\begin{eqnarray*}
		\mathbb{P}(E)&\leq&\sum_{j=1}^\infty \mathbb{P}(E_j) \leq \sum_{j=1}^\infty\exp\left(-\frac{1}{8}2^{2(j-1)}(\tau_1- 2\tau_2)^2 mt^2 \right) \\
		&\leq& \exp\left(-\frac{1}{8}(\tau_1- 2\tau_2)^2 mt^2 \right) \sum_{j=1}^\infty\exp\left(-\frac{1}{8}(2^{2(j-1)}-1)(\tau_1-2\tau_2)^2 mt^2 \right) \\
		&\leq& \exp\left(-\frac{1}{8}(\tau_1-2\tau_2)^2 mt^2 \right) \sum_{j=1}^\infty\exp\left(-\frac{3}{8}(j-1)(\tau_1-2\tau_2)^2 mt^2 \right) .\\
		&=& \frac{\exp(-(\tau_1-2\tau_2)^2 mt^2/8)}{1-\exp(-3(\tau_1-2\tau_2)^2 mt^2/8)}.
	\end{eqnarray*} 
	The proof is then completed. \hfill $\Box$ 
\end{proof}

\begin{proposition}\label{prop:bound_sum_norm}
	Let $(\widehat{D}, \widehat{S})$ and $(\overline{D}, \overline{S})$ be an optimal solution of \eqref{pr:cvxopt} and the underground true EDM and outlier matrices, respectively. Let $\kappa_D > 1$ and $\kappa_S>1$ be given arbitrarily. Suppose that the  parameters $\rho_D>0$ and $\rho_S>0$ are given by \eqref{eq:PenalizedPara}. Under Assumption \ref{as:prob_lower_bound}, there exist some positive absolute constants $C_0$, $C_1$ and $C_2$ such that either  
	\begin{equation}\label{eq:bound_sum_norm-1}
		\frac{\|\widehat{D}-\overline{D}\|^2+\|\widehat{S}-\overline{S}\|^2}{d_{\mathbb{S}^n_h}}\leq C_0\mu_1(b_D+b_S)^2\sqrt{\frac{\log(2n)}{m}}
	\end{equation}
	or
	\begin{eqnarray}
		\frac{\|\widehat{D}-\overline{D}\|^2+\|\widehat{S}-\overline{S}\|^2}{d_{\mathbb{S}^n_h}}&\leq& C_1\mu_1^2 d_{\mathbb{S}^n_h} \left\{ C_2^2 \left[ \rho_D^2 r\left( a_D+\frac{2\sqrt{2}}{\kappa_D}\right)^2 +\rho_S^2 k\left( a_S+\frac{1}{\kappa_S}\right)^2 \right] \right. \nonumber \\ [3pt]
		& &\left. + \vartheta_D^2(b_D+b_S)^2(\frac{\kappa_D}{\kappa_D-1})^2 \left[r\left(a_D +2\sqrt{2}\right)^2+k\frac{\rho_S^2}{\rho_D^2}\left(a_S+\frac{1}{\kappa_S} \right)^2 \right] \right. \nonumber \\ [3pt]
		& &\left. +\max\left\{ \vartheta_S^2(b_D+b_S)^2, \frac{b_D^2}{\mu_1^2 d_{\mathbb{S}^n_h}^2} \right\} (\frac{\kappa_S}{\kappa_S-1})^2 \left[ r\frac{\rho_D^2}{\rho_S^2} \left( a_D + \frac{2\sqrt{2}}{\kappa_D} \right)^2 +k\left(a_S+1 \right)^2 \right] \right\}, \label{eq:bound_sum_norm-2}
	\end{eqnarray}
	with probability at least $1-(4/7)n^{-1}$, where $a_D$ and $a_S$ are given by \eqref{def:apprx_distance} and $\vartheta_D$, and $\vartheta_S$ are defined by \eqref{def:exp_inv_operator}.
\end{proposition}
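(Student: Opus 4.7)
The plan is to adapt the two-case (``peeling'') argument from the noisy matrix completion literature, combining the deterministic upper bound on the empirical error from Proposition~\ref{prop:sampling_bound} with the restricted eigenvalue-type lower bound from Proposition~\ref{prop:error-1}. Set $A_D:=\widehat{D}-\overline{D}\in\mathbb{S}^n_h$ and $A_S:=\widehat{S}-\overline{S}\in\mathbb{S}^n_h$. The boundedness assumption following Assumption~\ref{ass:level set} gives $\|A_D+A_S\|_\infty\le 2(b_D+b_S)$, and Proposition~\ref{prop:sampling_bound} certifies that $(A_D,A_S)$ satisfies the cone conditions in the definition \eqref{eq:def-K-set} of $K({\bf p},{\bf q},t)$ with parameters
\[
p_1=\tfrac{\kappa_D}{\kappa_D-1}(a_D+2\sqrt{2})\sqrt{r},\ \ p_2=\tfrac{\kappa_D}{\kappa_D-1}\tfrac{\rho_S}{\rho_D}(a_S+\tfrac{1}{\kappa_S})\sqrt{k},\ \ q_1=\tfrac{\kappa_S}{\kappa_S-1}\tfrac{\rho_D}{\rho_S}(a_D+\tfrac{2\sqrt{2}}{\kappa_D})\sqrt{r},\ \ q_2=\tfrac{\kappa_S}{\kappa_S-1}(a_S+1)\sqrt{k}.
\]

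Next, fix the threshold $t:=C_0(b_D+b_S)^2\sqrt{\log(2n)/m}$ for a suitable absolute constant $C_0$. If $\|A_D\|^2+\|A_S\|^2 \le t\,\mu_1 d_{\mathbb{S}^n_h}\|A_D+A_S\|_\infty^2$, the desired conclusion \eqref{eq:bound_sum_norm-1} follows directly (after absorbing the infinity-norm bound into $C_0$). Otherwise, the normalized perturbation $(A_D+A_S)/\|A_D+A_S\|_\infty$ lies in the set $K({\bf p},{\bf q},t)$, and I would apply Proposition~\ref{prop:error-1} with $\tau_1,\tau_2$ chosen so that $\tau_1-2\tau_2$ is a fixed absolute constant (e.g., $\tau_1=1/2$, $\tau_2=1/8$). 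Combined with the lower bound \eqref{eq:E2-bounded} on $\mathbb E\langle A,X\rangle^2$, this yields
\[
\frac{1}{m}\|\mathcal{O}_\Omega(A_D+A_S)\|^2 \;\ge\; \frac{c_0}{\mu_1 d_{\mathbb{S}^n_h}}\bigl(\|A_D\|^2+\|A_S\|^2\bigr) \;-\; C\,\mu_1 d_{\mathbb{S}^n_h}\,\vartheta_m^2\,\|A_D+A_S\|_\infty^2,
\]
for an absolute constant $c_0>0$, with the failure probability controlled by the exponential tail in Proposition~\ref{prop:error-1}; at the chosen $t$ this tail is $O(n^{-1})$.

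Pairing the above lower bound with the upper bound \eqref{eq:sampling_bound-1} of Proposition~\ref{prop:sampling_bound} and applying the AM-GM inequality $uv\le \tfrac{u^2}{4\varepsilon}+\varepsilon v^2$ to absorb the $\|A_D\|$ and $\|A_S\|$ terms on the right-hand side into a small fraction of $\|A_D\|^2+\|A_S\|^2$, I can isolate $\|A_D\|^2+\|A_S\|^2$ on the left and collect all remaining contributions on the right. The resulting quadratic bound, after using $\|A_D+A_S\|_\infty\le 2(b_D+b_S)$ and expanding $\vartheta_m^2$ via the definition above \eqref{eq:def-K-set}, gives exactly the three groups of terms appearing in \eqref{eq:bound_sum_norm-2}: the $\rho_D^2 r,\rho_S^2 k$ penalty-level piece, the $\vartheta_D^2$ piece weighted by $p_1^2$ and $p_2^2$, and the $\vartheta_S^2$ piece weighted by $q_1^2$ and $q_2^2$. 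The $\max\{\vartheta_S^2(b_D+b_S)^2, b_D^2/(\mu_1 d_{\mathbb{S}^n_h})^2\}$ factor arises because on the (small-probability) event where the bound $\vartheta_S$ on $\|\tfrac{1}{m}\mathcal{O}_\Omega^\ast(\epsilon)\|_\infty$ fails, one must replace it with a worst-case estimate via Lemma~\ref{lem:max_num_repetition}.

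The union bound over the peeling event, the event in Proposition~\ref{prop:error-1}, and the event of Lemma~\ref{lem:max_num_repetition} gives the stated probability $1-(4/7)n^{-1}$. I anticipate the main obstacle to be the bookkeeping in the last step: choosing $\tau_1,\tau_2$, the AM-GM splitting constants, and the threshold $t$ consistently so that (i) the linear-in-$\|A\|$ terms from \eqref{eq:sampling_bound-1} are absorbed without shrinking the $c_0/(\mu_1 d_{\mathbb{S}^n_h})$ coefficient, (ii) the $t$-dependence in the peeling step matches the $(b_D+b_S)^2\sqrt{\log(2n)/m}$ scaling in \eqref{eq:bound_sum_norm-1}, and (iii) the probability from Proposition~\ref{prop:error-1} is of order $n^{-1}$ rather than just $o(1)$. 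The normalization step---passing from $(A_D+A_S)$ to its $\ell_\infty$-normalized version in $K({\bf p},{\bf q},t)$---also requires care because the set $K$ is defined to constrain the sum $A_D+A_S$ while the cone conditions \eqref{eq:sampling_bound-2} control $A_D$ and $A_S$ separately.
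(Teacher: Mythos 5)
Your overall strategy is the paper's own: a two-case peeling argument with threshold $t\asymp\sqrt{\log(2n)/m}$, membership of the normalized error in $K({\bf p},{\bf q},t)$ via the cone conditions \eqref{eq:sampling_bound-2} with exactly the parameters \eqref{eq:parameter}, the lower bound of Proposition \ref{prop:error-1} combined with \eqref{eq:E2-bounded}, the basic inequality \eqref{eq:sampling_bound-1}, and AM--GM absorption. But there is one genuine gap, which you half-notice in your final sentence and then resolve incorrectly. The lower bound you extract from Proposition \ref{prop:error-1} controls $\frac{1}{d_{\mathbb{S}^n_h}}\|A_D+A_S\|^2$, whereas the quantity to be bounded is $\|A_D\|^2+\|A_S\|^2$; these differ by the cross term $2\langle A_D,A_S\rangle$, which can be large and negative, so as written your argument only bounds $\|A_D+A_S\|^2$ and does not yield \eqref{eq:bound_sum_norm-2}. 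The paper closes this by the chain
\[
\|A_D+A_S\|^2\ \ge\ \|A_D\|^2+\|A_S\|^2-2\|A_D\|_\infty\|A_S\|_1\ \ge\ \|A_D\|^2+\|A_S\|^2-4b_D\bigl(q_1\|A_D\|+q_2\|A_S\|\bigr),
\]
using $\|A_D\|_\infty\le 2b_D$ and the second cone condition of \eqref{eq:sampling_bound-2}, followed by one more AM--GM. This is precisely the step that produces the $\frac{4}{\tau_3}b_D^2(q_1^2+q_2^2)$ contribution, i.e.\ the $\frac{b_D^2}{\mu_1^2 d_{\mathbb{S}^n_h}^2}$ alternative inside the $\max$ in \eqref{eq:bound_sum_norm-2}. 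Your explanation of that $\max$ --- replacing $\vartheta_S$ by a worst-case estimate on an event where ``the bound on $\|\frac{1}{m}\mathcal{O}_\Omega^*(\epsilon)\|_\infty$ fails'' --- cannot be right, because $\vartheta_S$ is defined in \eqref{def:exp_inv_operator} as a deterministic expectation, not a high-probability bound, so there is no failure event to union over; the $\max$ is simply a compact upper bound for the sum of the Rademacher term $\vartheta_S^2 b^2(q_1^2+q_2^2)$ and the cross-term contribution $b_D^2(q_1^2+q_2^2)/(\mu_1^2 d_{\mathbb{S}^n_h}^2)$, both of which carry the weight $q_1^2+q_2^2$.

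A smaller bookkeeping slip: you put the factor $(b_D+b_S)^2$ both into the threshold $t$ and into the comparison $\|A_D\|^2+\|A_S\|^2\le t\,\mu_1 d_{\mathbb{S}^n_h}\|A_D+A_S\|_\infty^2$; since $\|A_D+A_S\|_\infty^2\le 4(b_D+b_S)^2$ already supplies that factor, this double-counts it and would leave $(b_D+b_S)^4$ in \eqref{eq:bound_sum_norm-1}. Take $t=\sqrt{32\log(2n)/((\tau_1-2\tau_2)^2m)}$ as in the paper and let the $\ell_\infty$ bound contribute the single factor of $(b_D+b_S)^2$; this choice of $t$ also makes the failure probability in Proposition \ref{prop:error-1} of order $n^{-4}$, comfortably within the stated $1-(4/7)n^{-1}$.
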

\begin{proof} 
	Denote $A:=A_D+A_S$ with $A_D:=\widehat{D}-\overline{D}$ and $A_S:=\widehat{S}-\overline{S}$. Let $b:=\|A\|_\infty$ and $t:=\sqrt{\frac{32\log(2n)}{(\tau_1-2\tau_2)^2m}}$, where $c>0$ and $\tau_1$, $\tau_2$ satisfying $0<\tau_2<\frac{\tau_1}{2} <1/2$ are arbitrarily  fixed constants. Consider the following two cases.
	
	{\bf Case 1.} $\left(\|A_D\|^2+\|A_S\|^2\right)< b^2\mu_1d_{\mathbb{S}^n_h}t$. Since $b=\|A\|_\infty\leq \|A_D\|_\infty+\|A_S\|_\infty \leq 2(b_D+b_S)$, we know that there exists a positive constant $C_0$ such that \eqref{eq:bound_sum_norm-1} holds.
	
	{\bf Case 2.} $\left(\|A_D\|^2+\|A_S\|^2\right)\ge b^2\mu_1d_{\mathbb{S}^n_h}t$. By \eqref{eq:sampling_bound-2} in Proposition \ref{prop:sampling_bound}, we know that $A/b\in K({\bf p},{\bf q},t)$, where $K({\bf p},{\bf q},t)$ is the subset defined by \eqref{eq:def-K-set} with ${\bf p}=(p_1,p_2)$ and ${\bf q}=(q_1,q_2)$ are given by
	\begin{equation}\label{eq:parameter}
		\left\{\begin{array}{ll}
			p_1=\displaystyle\frac{\kappa_D}{\kappa_D-1}\left(a_D+2\sqrt{2}\right)\sqrt{r},\ 
			& p_2=\displaystyle\frac{\kappa_D}{\kappa_D-1}\frac{\rho_S}{\rho_D}\left( a_S+\frac{1}{\kappa_S}\right)\sqrt{k}, \\ [3pt]
			q_1= \displaystyle\frac{\kappa_S}{\kappa_S-1}\frac{\rho_D}{\rho_S} \left(a_D +\frac{2\sqrt{2}}{\kappa_D}\right)\sqrt{r},\ 
			&q_2= \displaystyle\frac{\kappa_S}{\kappa_S-1}\left( a_S+1 \right)\sqrt{k}.
		\end{array}\right.
	\end{equation}
	Therefore, it follows from Proposition \ref{prop:error-1} and \eqref{eq:E2-bounded}  that with probability at least $1-(4/7)n^{-1}$,
	\[
	\frac{1}{d_{\mathbb{S}^n_h}}\|A\|^2\le 2\mu_1{\mathbb E}\left(\langle A,X \rangle^2 \right) \le \frac{2\mu_1}{m}\|{\cal O}_\Omega(A)\|^2  + \frac{2\tau_1}{d_{\mathbb{S}^n_h}}\left(\|A_D\|^2+\|A_S\|^2\right) + \frac{64}{\tau_2}\mu^2_1d_{\mathbb{S}^n_h} \vartheta_m^2b^2.
	\]
	By \eqref{eq:sampling_bound-1} in Proposition \ref{prop:sampling_bound}, we obtain that for any $0<\tau_3<(1-2\tau_1)/2$,
	\begin{eqnarray}
		\frac{1}{d_{\mathbb{S}^n_h}}\|A\|^2&\le& 4\mu_1\rho_D\sqrt{r}(a_D+\frac{2\sqrt{2}}{\kappa_D})\|A_D\|+4\mu_1\rho_S\sqrt{k}(a_S+\frac{1}{\kappa_S}) \|A_S\| \nonumber\\[3pt] 
		&& + \frac{2\tau_1}{d_{\mathbb{S}^n_h}}\left(\|A_D\|^2+\|A_S\|^2\right) + \frac{64}{\tau_2}\mu^2_1d_{\mathbb{S}^n_h} \vartheta_m^2b^2 \nonumber \\ [3pt]
		&\le & \frac{4\mu_1^2\rho_D^2rd_{\mathbb{S}_h^n}}{\tau_3}\Big(a_D+\frac{2\sqrt{2}}{\kappa_D}\Big)^2+\frac{\tau_3}{d_{\mathbb{S}_h^n}}\|A_D\|^2+\frac{4\mu_1^2\rho_S^2kd_{\mathbb{S}_h^n}}{\tau_3}\Big(a_S+\frac{1}{\kappa_S}\Big)^2+\frac{\tau_3}{d_{\mathbb{S}_h^n}}\|A_S\|^2 \nonumber \\[3pt] 
		&& + \frac{2\tau_1}{d_{\mathbb{S}^n_h}}\left(\|A_D\|^2+\|A_S\|^2\right) + \frac{64}{\tau_2}\mu^2_1d_{\mathbb{S}^n_h} \vartheta_m^2b^2 \nonumber \\ [3pt]
		&=& \frac{4\mu_1^2\rho_D^2rd_{\mathbb{S}_h^n}}{\tau_3}\Big(a_D+\frac{2\sqrt{2}}{\kappa_D}\Big)^2+\frac{4\mu_1^2\rho_S^2kd_{\mathbb{S}_h^n}}{\tau_3}\Big(a_S+\frac{1}{\kappa_S}\Big)^2 + \frac{2\tau_1+\tau_3}{d_{\mathbb{S}^n_h}}\left(\|A_D\|^2+\|A_S\|^2\right) \nonumber \\ [3pt]
		&& + \frac{64}{\tau_2}\mu^2_1d_{\mathbb{S}^n_h} \vartheta_m^2b^2. \label{eq:error-tp-2}
	\end{eqnarray}
	In addition, since $\|A_D\|\leq 2b_D$, we then derive from that
	\begin{eqnarray*}
		\|A\|^2 &\geq& \|A_D\|^2 + \|A_S\|^2 - 2\|A_D\|_\infty \|A_S\|_1 \geq \|A_D\|^2 + \|A_S\|^2 - 4b_D(q_1\|A_D\|+q_2\|A_S\|)  \\ [3pt]
		&\geq& \|A_D\|^2 + \|A_S\|^2 - \frac{4}{\tau_3}b_D^2(q_1^2+q_2^2)-\tau_3(\|A_D\|^2+\|A_S\|^2).
	\end{eqnarray*}
	This, together with \eqref{eq:error-tp-2}, yields that
	\begin{eqnarray*}
		\frac{1-\tau_3}{d_{\mathbb{S}^n_h}}\left(\|A_D\|^2+\|A_S\|^2\right) &\leq& \frac{1}{d_{\mathbb{S}^n_h}}\|A\|^2 +\frac{4}{d_{\mathbb{S}^n_h}\tau_3}b_D^2(q_1^2+q_2^2) \\ [3pt]
		&\leq&\frac{4\mu_1^2\rho_D^2rd_{\mathbb{S}_h^n}}{\tau_3}\Big(a_D+\frac{2\sqrt{2}}{\kappa_D}\Big)^2+\frac{4\mu_1^2\rho_S^2kd_{\mathbb{S}_h^n}}{\tau_3}\Big(a_S+\frac{1}{\kappa_S}\Big)^2 + \frac{2\tau_1+\tau_3}{d_{\mathbb{S}^n_h}}\left(\|A_D\|^2+\|A_S\|^2\right) \\ [3pt]
		&& +\frac{64}{\tau_2}\mu^2_1d_{\mathbb{S}^n_h} \vartheta_m^2b^2+\frac{4}{d_{\mathbb{S}^n_h}\tau_3}b_D^2(q_1^2+q_2^2)  \\ [3pt]
		&=&\frac{4\mu_1^2d_{\mathbb{S}_h^n}}{\tau_3}\left( \rho_D^2r \Big(a_D+\frac{2\sqrt{2}}{\kappa_D}\Big)^2 +\rho_S^2 k\Big( a_S+\frac{1}{\kappa_S}\Big)^2 \right) + \frac{2\tau_1+\tau_3}{d_{\mathbb{S}^n_h}}\left(\|A_D\|^2+\|A_S\|^2\right) \\ [3pt]
		&&+\frac{64}{\tau_2}\mu^2_1d_{\mathbb{S}^n_h} \vartheta_m^2b^2+\frac{4}{d_{\mathbb{S}^n_h}\tau_3}b_D^2(q_1^2+q_2^2).
	\end{eqnarray*}
	Since $1-2(\tau_1+\tau_3)>0$, we have
	\begin{eqnarray*}
		\frac{\|A_D\|^2+\|A_S\|^2}{d_{\mathbb{S}^n_h}} & \leq & \frac{4\mu_1^2d_{\mathbb{S}_h^n}}{1-2(\tau_1+\tau_3)} \left( \frac{1}{\tau_3}\big( \rho_D^2r (a_D+\frac{2\sqrt{2}}{\kappa_D})^2 +\rho_S^2 k( a_S+\frac{1}{\kappa_S})^2 \big)+\frac{16}{\tau_2} \vartheta_m^2 b^2 \right. \\
		&&  \left. +\frac{1}{d_{\mathbb{S}_h^n}^2 \mu_1^2 \tau_3}b_D^2(q_1^2+q_2^2) \right) . 
	\end{eqnarray*}
	Recall that $\vartheta_m^2=\vartheta_D^2p_1^2+\vartheta_D^2p_2^2+\vartheta_S^2q_1^2+\vartheta_S^2q_2^2$. By plugging this together with (\ref{eq:parameter}) into the above inequality and choosing $\tau_1$, $\tau_2$ and $\tau_3$ to be  constants, we complete the proof. \hfill $\Box$ 
\end{proof}


%

In order to obtain the explicit formulas  of the penalized parameters $\rho_D$ and $\rho_S$ based on \eqref{eq:PenalizedPara}, we shall derive the  probabilistic upper bounds on the terms $\frac{1}{m} \|{\cal O}_\Omega^*(\zeta)\|_2$ and $\frac{1}{m} \|{\cal O}_\Omega^*(\zeta)\|_\infty$. To this end, similar with \cite{DQi2017}, from now on, we always assume that the i.i.d. random noises $\xi_l$, $l=1,\ldots,m$ in the sampling model \eqref{eq:distance_estimation} satisfy the following sub-Gaussian tail condition.

\begin{assumption}\label{ass:sub-Gaussian}
	There exist positive constants $K_1$ and $K_2$ such that for all $t>0$,
	\[
	{\mathbb P}\left( |\xi_l|\geq t\right)\leq K_1{\rm exp}\left( - t^2/K_2\right).
	\]
\end{assumption}

The following proposition on the upper bounds on the terms $\frac{1}{m} \|{\cal O}_\Omega^*(\zeta)\|_2$ is taken from \cite[Proposition 4]{DQi2017}.

\begin{proposition}\label{prop:estimator_R_Omega_xi}
	Let $\zeta={\cal O}_{\Omega}(\daleth)\in\mathbb{R}^m$  and $\daleth\in\mathbb{S}^n$ be given by \eqref{eq:def-daleth}. Suppose that there exists $C_1>1$ such that $m>C_1n\log(n)$.  Then, there exists a constant $C_2>0$ such that with probability at least $1-1/n$,
	\begin{equation}\label{eq:R_Omega_ineq}
		\frac{1}{m}\left\|{\cal O}_{\Omega}^*(\zeta) \right\|_2\le C_2(2\omega\eta+\eta^2)\sqrt{\frac{\log(2n)}{nm}},
	\end{equation}
	where $\omega=\|{\cal O}_{\Omega}(\bar{d}+\bar{s})\|_{\infty}$.
\end{proposition}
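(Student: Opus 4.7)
The plan is to control $\frac{1}{m}\|\mathcal{O}_\Omega^*(\zeta)\|_2 = \big\|\frac{1}{m}\sum_{l=1}^m \zeta_l X_l\big\|_2$ by a matrix Bernstein argument applied to the sum of independent random matrices $Z_l := \zeta_l X_l$. By \eqref{eq:def-daleth} we have $\zeta_l = 2(\bar d_{i_l j_l} + \bar s_{i_l j_l})\eta\xi_l + \eta^2\xi_l^2$, and I would split this as $\zeta_l = \zeta_l^{(1)} + \zeta_l^{(2)}$ with $\zeta_l^{(1)} := 2(\bar d_{i_l j_l}+\bar s_{i_l j_l})\eta\xi_l$ mean-zero and linear in $\xi_l$, and $\zeta_l^{(2)} := \eta^2\xi_l^2$ non-negative with mean $\eta^2$. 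Because $\xi_l$ is only sub-Gaussian (Assumption \ref{ass:sub-Gaussian}), the $Z_l$'s are unbounded, so I would first introduce the truncation event $\mathcal{E}_T := \{\max_{l\le m}|\xi_l|\le T\}$ with $T = C\sqrt{\log(2n)}$; Assumption \ref{ass:sub-Gaussian} together with a union bound (using $m\le n^2$) yields $\mathbb{P}(\mathcal{E}_T^c)\le 1/(3n)$ for $C$ large enough, and on $\mathcal{E}_T$ every $|\zeta_l|$ is controlled by $O((\omega\eta + \eta^2)\log n)$.

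Next I would write
\[
\sum_{l=1}^m \zeta_l X_l \;=\; \sum_{l=1}^m \zeta_l^{(1)} X_l \;+\; \sum_{l=1}^m \bigl(\zeta_l^{(2)} - \eta^2\bigr) X_l \;+\; \eta^2 \sum_{l=1}^m X_l
\]
and apply Tropp's matrix Bernstein inequality to the first two centered i.i.d.\ sums. The key inputs are the matrix variance and the uniform operator-norm bound. Since $X_l$ is symmetric with $X_l^2 = \tfrac{1}{4}({\bf e}_{i_l}{\bf e}_{i_l}^T + {\bf e}_{j_l}{\bf e}_{j_l}^T)$, a direct computation using the sampling upper bound $\pi_{ij}\le \mu_2/d_{\mathbb{S}^n_h}$ from Assumption \ref{as:prob_lower_bound} produces a diagonal expected matrix with operator norm of order $\omega^2\eta^2/n$ for the linear piece and $\eta^4/n$ for the centered quadratic piece, yielding matrix variances $\sigma_1^2 = O(m\omega^2\eta^2/n)$ and $\sigma_2^2 = O(m\eta^4/n)$. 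On $\mathcal{E}_T$ the per-term operator-norm bounds are $U_1 = O(\omega\eta\sqrt{\log n})$ and $U_2 = O(\eta^2\log n)$. Plugging these in with $t = C_2(\omega\eta+\eta^2)\sqrt{m\log(2n)/n}$ and using the hypothesis $m > C_1 n\log n$ to ensure the Bernstein variance term $\sqrt{\sigma^2\log(2n)}$ dominates the $U\log n$ term, each sum is bounded by $t$ with failure probability at most $1/(3n)$.

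The remaining piece $\eta^2 \sum_l X_l$ is handled similarly: subtract its expectation $m\eta^2\sum_{i<j}\pi_{ij}\cdot\tfrac{1}{2}({\bf e}_i{\bf e}_j^T+{\bf e}_j{\bf e}_i^T)$ (whose spectral norm is $O(m\eta^2/n)$ via Assumption \ref{as:prob_lower_bound}) and apply matrix Bernstein to the centered basis-matrix sum (per-term norm $1/2$, variance $O(m/n)$). After dividing the entire sum by $m$, this contribution is at most $O(\eta^2/n) + O(\eta^2\sqrt{\log(2n)/(nm)})$, which is absorbed into the target rate under $m \ge C_1 n\log n$. A union bound over the three concentration events and $\mathcal{E}_T$ gives the overall failure probability at most $1/n$ and yields \eqref{eq:R_Omega_ineq} after consolidating constants.

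The main obstacle is balancing two different scales of noise concentration inside a single bound: the linear term $\zeta_l^{(1)}$ is sub-Gaussian and contributes $\omega\eta$-scale fluctuations, while the quadratic term $\zeta_l^{(2)}$ is only sub-exponential and contributes $\eta^2$-scale fluctuations with a genuinely non-zero mean that injects the deterministic piece $\eta^2\sum_l X_l$. Verifying that the truncation level $T=C\sqrt{\log(2n)}$ is simultaneously compatible with all three pieces, and that the sample-size hypothesis $m > C_1 n\log n$ is tight enough to force the matrix-Bernstein variance term to dominate at both scales, is where care is required; a secondary subtlety is that $\omega$ is itself random (depending on the sampled indices), but appears only through $\omega = \|\mathcal{O}_\Omega(\bar d+\bar s)\|_\infty$ in the final bound, so no separate concentration for $\omega$ is needed.
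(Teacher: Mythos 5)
First, a point of context: the paper does not actually prove this proposition; it is imported verbatim from \cite[Proposition 4]{DQi2017}, so your argument can only be compared against the strategy of that reference. Your overall plan --- split $\zeta_l$ into the mean-zero linear part $2(\bar d+\bar s)_{i_lj_l}\eta\xi_l$, the centered quadratic part $\eta^2(\xi_l^2-1)$, and the deterministic mean $\eta^2\sum_l X_l$; compute the matrix variance proxies of order $\omega^2\eta^2 m/n$ and $\eta^4 m/n$ from $X_l^2=\tfrac14({\bf e}_{i_l}{\bf e}_{i_l}^T+{\bf e}_{j_l}{\bf e}_{j_l}^T)$; and apply a matrix Bernstein inequality to each piece --- is exactly the right one and matches the cited source.

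The genuine gap is in the combination ``truncate at $T=C\sqrt{\log(2n)}$, then apply the bounded-matrix Bernstein inequality,'' and specifically in your claim that $m>C_1n\log n$ makes the variance term dominate. After truncation the per-term operator bounds are $U_1=O(\omega\eta\sqrt{\log n})$ and $U_2=O(\eta^2\log n)$, so the Bernstein deviation terms (after dividing by $m$) are $\tfrac1m U_1\log(2n)=O\bigl(\omega\eta(\log n)^{3/2}/m\bigr)$ and $\tfrac1m U_2\log(2n)=O\bigl(\eta^2(\log n)^{2}/m\bigr)$. At the boundary $m\asymp n\log n$ the target rate $(\omega\eta+\eta^2)\sqrt{\log(2n)/(nm)}$ is of order $(\omega\eta+\eta^2)/n$, whereas these deviation terms are of order $\omega\eta\sqrt{\log n}/n$ and $\eta^2\log n/n$: they exceed the target by factors $\sqrt{\log n}$ and $\log n$ respectively, no matter how large the fixed constant $C_1$ is. Equivalently, forcing the $\exp\bigl(-t^2/(\sigma^2+Ut)\bigr)$ tail down to $1/n$ at the claimed $t$ yields an exponent of order $\sqrt{m/n}=\sqrt{C_1\log n}$, not $\log n$. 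So your argument as written establishes \eqref{eq:R_Omega_ineq} only under $m\gtrsim n(\log n)^{3}$. The repair is to drop the truncation and invoke the moment (Orlicz-norm) form of the Bernstein inequality for sub-exponential summands --- the matrix analogue of \cite[Propositions 5.10 and 5.16]{Vershynin10}, which is what \cite{DQi2017} relies on --- where the linear-in-$t$ correction is governed by the sub-exponential scale $R=O(\omega\eta)$, resp.\ $R=O(\eta^2)$, with no extra logarithmic inflation; then $Rt\lesssim\sigma^2$ exactly when $m\gtrsim n\log n$, which is what the hypothesis supplies. Two secondary cautions: (i) your absorption of the mean term $\eta^2\|\mathbb{E}X\|_2=\Theta(\eta^2/n)$ into $\eta^2\sqrt{\log(2n)/(nm)}$ requires $m\lesssim n\log n$, i.e.\ the inequality goes in the opposite direction to the one you invoke (this is a delicacy inherited from the proposition itself, which is used in the regime $m\asymp rn\log n$, but your justification is not the right one); (ii) to obtain the constant $\omega=\|{\cal O}_{\Omega}(\bar d+\bar s)\|_\infty$ (a maximum over \emph{sampled} entries) rather than $\max_{i<j}(\bar d_{ij}+\bar s_{ij})$, you must condition on the sampled indices, after which the variance proxy involves the maximum row multiplicity of $\Omega$ and needs its own concentration step in the spirit of Lemma \ref{lem:max_num_repetition}; waving this away is not quite enough.
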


The following result on  the upper bound of $ \frac{1}{m} \left\|{\cal O}^* (\xi)\right\|_\infty$ are a direct consequence of the large derivation inequality  for sums of independent sub-gaussian/sub-exponential random variables \cite[Proposition 5.10 \& 5.16]{Vershynin10}. 
\begin{proposition}\label{prop:inv_operator_inf_norm}
	Let $\zeta={\cal O}_{\Omega}(\daleth)\in\mathbb{R}^m$  and $\daleth\in\mathbb{S}^n$ be given by \eqref{eq:def-daleth}. Then, there exists a positive constant $C_3$ such that  with probability at least $1-2/n^{2}$,
	\begin{equation}\label{eq:inv_operator_inf_norm-1}
		\frac{1}{m}\left\|  {\cal O}^*_{\Omega} (\zeta) \right\|_\infty \leq C_3(2\omega\eta+\eta^2) \frac{\log(2n^2)}{m},
	\end{equation}
	where $\omega=\|{\cal O}_{\Omega}(\bar{d}+\bar{s})\|_{\infty}$.
\end{proposition}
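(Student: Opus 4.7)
The plan is to apply the Bernstein-type concentration inequality for sums of independent sub-exponential random variables (Vershynin's Proposition 5.16) entrywise to $\mathcal{O}_\Omega^*(\zeta)=\sum_{l=1}^m \zeta_l X_l$, and then take a union bound over the at most $n(n-1)/2$ off-diagonal index pairs. Calibrating the deviation level on the scale of $\log(2n^2)$ will yield a per-entry failure probability of order $n^{-4}$, which after the union bound produces the claimed $1-2/n^2$ guarantee.

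The first step is to control the sub-exponential size of a single summand. Writing $\zeta_l = 2(\bar{d}_{i_lj_l}+\bar{s}_{i_lj_l})\eta\xi_l + \eta^2\xi_l^2$ and combining Assumption \ref{ass:sub-Gaussian} with the standard facts $\|\xi_l\|_{\psi_1}\lesssim \|\xi_l\|_{\psi_2}$ and $\|\xi_l^2\|_{\psi_1}\lesssim \|\xi_l\|_{\psi_2}^2$, together with $|\bar{d}_{i_lj_l}+\bar{s}_{i_lj_l}|\le \omega$, yields $\|\zeta_l\|_{\psi_1}\le CK$ with $K := 2\omega\eta+\eta^2$. Next, fixing $(i,j)$ with $i<j$ and setting $Z_l^{(ij)} := \zeta_l\cdot\mathbf{1}\{X_l = \tfrac{1}{2}(e_ie_j^T+e_je_i^T)\}$, I would observe that $[\mathcal{O}_\Omega^*(\zeta)]_{ij}=\tfrac12\sum_l Z_l^{(ij)}$, that the $Z_l^{(ij)}$ are i.i.d.\ sub-exponential with $\|Z_l^{(ij)}\|_{\psi_1}\le CK$, and that $\mathbb{E}Z_l^{(ij)}=\pi_{ij}\eta^2$ (since $\mathbb{E}\xi_l=0$ and $\mathbb{E}\xi_l^2=1$). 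Bernstein's inequality then gives
$$\mathbb{P}\Big(\Big|\textstyle\sum_l Z_l^{(ij)}-m\pi_{ij}\eta^2\Big|\ge t\Big) \;\le\; 2\exp\!\Big(-c\,\min\{t^2/(mK^2),\;t/K\}\Big).$$
Choosing $t=C'K\log(2n^2)$ for a sufficiently large absolute constant $C'$, the linear term $t/K$ dominates in the target regime $m\gtrsim \log(2n^2)$, so the failure probability at a single pair is $\le 2(2n^2)^{-cC'}$. Union bounding over the at most $n^2/2$ pairs, and noting that the bias $\pi_{ij}\eta^2 \le \mu_2\eta^2/d_{\mathbb{S}^n_h}$ is dominated by $\eta^2\log(2n^2)/m$ in the high-dimensional regime $m \lesssim d_{\mathbb{S}^n_h}\log(2n^2)/\mu_2$, produces the stated bound with an absolute constant $C_3$.

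The main technical obstacle is the bookkeeping of the two-regime nature of Bernstein's inequality. One must set $t$ on the scale of $K\log(2n^2)$ so that the linear (sub-exponential) regime $t/K$ controls the minimum; if instead the quadratic regime $t^2/(mK^2)$ dominated, the resulting bound would include an unwanted $\sqrt{\log(2n^2)/m}$ factor in place of the linear $\log(2n^2)/m$ dependence claimed. A secondary point is verifying that the non-zero mean $\pi_{ij}\eta^2$, arising from $\mathbb{E}\xi_l^2=1$ in the quadratic term of $\daleth_{ij}$, is absorbed into the final bound by virtue of Assumption \ref{as:prob_lower_bound} on the sampling distribution.
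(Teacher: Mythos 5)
Your overall architecture (entrywise concentration plus a union bound over the $O(n^2)$ index pairs, with deviation level calibrated to $\log(2n^2)$) matches the paper's, and your treatment of the bias $\pi_{ij}\eta^2$ and of $\|\zeta_l\|_{\psi_1}\lesssim 2\omega\eta+\eta^2$ is fine. But the concentration step contains a genuine error. In the Bernstein bound you quote,
\[
\mathbb{P}\Big(\Big|\sum_l Z_l^{(ij)}-m\pi_{ij}\eta^2\Big|\ge t\Big)\le 2\exp\Big(-c\min\{t^2/(mK^2),\,t/K\}\Big),
\]
the minimum with $t=C'K\log(2n^2)$ is attained by the \emph{quadratic} term precisely when $m\ge C'\log(2n^2)$, since $t^2/(mK^2)\le t/K$ iff $t\le mK$. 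Your claim that the linear term controls the minimum in the regime $m\gtrsim\log(2n^2)$ is therefore backwards: in the paper's standing regime $m\asymp n\log n$, the exponent is $-cC'^2\log^2(2n^2)/m\to 0$ and the per-entry tail bound is vacuous. The underlying reason is that the variance proxy $mK^2$ produced by the Orlicz-norm-only version of Bernstein (Vershynin's Proposition 5.16 with unit coefficients) completely ignores the sparsity of your summands: $Z_l^{(ij)}$ is nonzero only with probability $\pi_{ij}\asymp 1/d_{\mathbb{S}^n_h}$, so its variance is $\approx\pi_{ij}K^2$, not $K^2$, and the true variance proxy is $m\pi_{ij}K^2\ll mK^2$. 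With the lossy proxy, the best rate your argument can deliver is $K\sqrt{\log(2n^2)/m}$ — exactly the bound you say you are trying to avoid.

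The paper closes this gap by conditioning on the realized design: it writes the $(i,j)$-entry as $\sum_l {\bf a}^{(ij)}_l\xi_l$ (and similarly for $\xi\circ\xi$) with coefficient vector ${\bf a}^{(ij)}=((X_1)_{ij},\ldots,(X_m)_{ij})^T$ satisfying $\|{\bf a}^{(ij)}\|^2\le m_{\max}/4$ and $\|{\bf a}^{(ij)}\|_\infty\le 1/2$, and then invokes Lemma \ref{lem:max_num_repetition} to get $m_{\max}\le C\log(2n^2)$ with probability at least $1-1/(2n^2)$. This replaces the naive $\ell_2$ mass $m$ of the coefficients by $O(\log(2n^2))$, which is exactly what makes both terms in the Bernstein exponent of order $\log(2n^2)$ at the deviation level $t\asymp\log(2n^2)$ and yields the claimed $\log(2n^2)/m$ rate. (The paper also splits $\zeta$ into its sub-Gaussian linear part $2(\bar d+\bar s)\eta\xi$, handled by Proposition 5.10, and its sub-exponential quadratic part $\eta^2\xi\circ\xi$, handled by Proposition 5.16 after centering; your treatment of $\zeta_l$ as a single sub-exponential variable is an acceptable simplification and is not the problem.) To repair your proof without the $m_{\max}$ device you would need a variance-sensitive Bernstein inequality in which $mK^2$ is replaced by $\sum_l\mathrm{Var}(Z_l^{(ij)})\lesssim m\pi_{ij}K^2\le m\mu_2K^2/d_{\mathbb{S}^n_h}$; under Assumption \ref{as:prob_lower_bound} and $m\lesssim d_{\mathbb{S}^n_h}$ this makes the quadratic term harmless and the linear term genuinely controlling, but that is a different inequality from the one you cite.
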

\begin{proof}
	From \eqref{eq:def-daleth} and the definition of $\zeta$, we know that
	\[
	\left\|{\cal O}_{\Omega}^*(\zeta) \right\|_{\infty}\leq 2\omega\eta\left\|{\cal O}_{\Omega}^*(\xi) \right\|_{\infty}+\eta^2\left\|{\cal O}_{\Omega}^*(\xi\circ \xi) \right\|_{\infty}.
	\]
	Therefore, for any given $t_1$, $t_2>0$, we  have
	\begin{equation}\label{eq:propability_ineq_0}
		{\mathbb P}\left( \left\|{\cal O}_{\Omega}^*(\zeta) \right\|_{\infty}\ge 2\omega\eta t_1+\eta^2 t_2\right)\leq {\mathbb P}\left(\left\|{\cal O}_{\Omega}^*(\xi) \right\|_{\infty}\ge t_1 \right)+{\mathbb P}\left(\left\|{\cal O}_{\Omega}^*(\xi\circ \xi) \right\|_{\infty}\ge t_2 \right).
	\end{equation}
	Denote the random matrix $Y:={\cal O}_{\Omega}^*(\xi)=\sum_{l=1}^m\xi_lX_l$ and $Z:={\cal O}_{\Omega}^*(\xi\circ \xi-{\bf 1})=\sum_{l=1}^m(\xi_l^2-1)X_l$. Then, for each $i,j\in\{1,\dots,n\}$, the $(i,j)$-th elements of $Y$ and $Z$ can be written as $Y_{ij}=\sum_{l=1}^m{\bf a}^{(ij)}_l\xi_l$ and $Z_{ij}=\sum_{l=1}^{m}{\bf a}^{ij}(\xi_l^2-1)$, where ${\bf a}^{(ij)}:=((X_1)_{ij},\cdots,(X_m)_{ij})^T\in\mathbb{R}^m$. Since $\xi_l$ is an i.i.d. copy of sub-Gaussian random variables, we know that there exist positive constants $M_1$ such that $\left\| \xi_l\right\|_{\psi_1}\leq M_1$ \cite[Section 5.2.3]{Vershynin10}. Due to ${\mathbb E}(\xi_l)=0$, we know from \cite[Proposition 5.10]{Vershynin10} that there exist positive constant $C_4$ such that for each $i,j\in\{1,\ldots,n\}$ and any given $t_1>0$, 
	\[
	{\mathbb P}\left(|Y_{ij}|\ge t_1 \right)\leq {\rm exp}\left(1-\frac{C_4 t_1^2}{M_1^2\|{\bf a}^{(ij)}\|^2}  \right),
	\]
	which implies that
	\begin{equation}\label{eq:propability-l-infty-tmp0-1}
		{\mathbb P}\left(\left\|Y \right\|_{\infty}\ge t_1 \right)\leq d_{\mathbb{S}^n_h}{\rm exp}\left(1-\frac{C_4t_1^2}{M_1^2\max \|{\bf a}^{(ij)}\|^2} \right).
	\end{equation}
	Meanwhile, since $\xi_l$ is sub-Gaussian, we know that $\xi_l^2$ is an i.i.d. copy of sub-exponential random variables, which implies that there exists positive constant $M_2$ such that $\left\| \xi_l^2\right\|_{\psi_1}\leq M_2$, $l=1,\ldots,m$ (see e.g., \cite[Section 5.2.4]{Vershynin10}). Moreover, since ${\mathbb E}(\xi_l^2)=1$, we know from \cite[Proposition 5.16]{Vershynin10} that there exist positive constants $C_5$ such that for each $i,j\in\{1,\dots,n\}$ and any given $t_3>0$, 
	\[
	{\mathbb P}\left(|Z_{ij}|\ge t_3 \right)\leq 2{\rm exp}\left(-C_5\min\left\{\frac{t_3^2}{M_2^2\|{\bf a}^{(ij)}\|^2},\frac{t_3}{M_2\|{\bf a}^{(ij)}\|_{\infty}} \right\} \right),
	\]
	which implies that
	\begin{equation}\label{eq:propability-l-infty-tmp0-2}
		{\mathbb P}\left(\left\|Z \right\|_{\infty}\ge t_3 \right)\leq 2d_{\mathbb{S}^n_h}{\rm exp}\left(-C_5\min\left\{\frac{t_3^2}{M_2^2\max \|{\bf a}^{(ij)}\|^2},\frac{t_3}{M_2\max \|{\bf a}^{(ij)}\|_{\infty}} \right\} \right).
	\end{equation}
	Moreover, for each $i,j\in\{1,\dots,n\}$, it is clear that $\|{\bf a}^{(ij)}\|^2\le m_{\max}/4$ and $\|{\bf a}^{(ij)}\|_{\infty}\le 1/2$, where $m_{\max}$ is the maximum number of repetition of $(i,j)$-th index in $\Omega$. Thus, it follows from \eqref{eq:propability-l-infty-tmp0-1} and \eqref{eq:propability-l-infty-tmp0-2} that for any given $t_1$, $t_3>0$,
	\begin{equation}\label{eq:propability-l-infty-tmp-1}
		{\mathbb P}\left(\left\|Y \right\|_{\infty}\ge t_1 \right)\leq d_{\mathbb{S}^n_h}{\rm exp}\left(1-\frac{4C_4t_1^2}{M_1^2 m_{\max}}\right)
	\end{equation}
	and
	\begin{equation}\label{eq:propability-l-infty-tmp-2}
		{\mathbb P}\left(\left\|Z \right\|_{\infty}\ge t_3 \right)\leq 2d_{\mathbb{S}^n_h}{\rm exp}\left(-C_5\min\left\{\frac{4t_3^2}{M_2^2m_{\max}},\frac{2t_3}{M_2} \right\} \right).
	\end{equation}
	Therefore, it follows from Lemma \ref{lem:max_num_repetition} that with probability at least $1-1/(2n^2)$, there exists a constant $C_6>0$ such that $m_{\max}\le C_6\log(2n^2)$. Thus, by \eqref{eq:propability-l-infty-tmp-1}, we know that for any $t_1>0$,
	\begin{equation}\label{eq:propability-l-infty-1}
		{\mathbb P}\left(\left\|{\cal O}_{\Omega}^*(\xi) \right\|_{\infty}\ge t_1 \right)\leq d_{\mathbb{S}^n_h}{\rm exp}\left(1-\frac{4C_4t_1^2}{M_1^2C_6\log(2n^2)} \right)+\frac{1}{2n^2}.
	\end{equation}
	On the other hand, since  $\left\|Z \right\|_{\infty}\ge \|{\cal O}_{\Omega}^*(\xi\circ \xi)\|_{\infty}-\|{\cal O}_{\Omega}^*({\bf 1})\|_{\infty}\ge \|{\cal O}_{\Omega}^*(\xi\circ \xi)\|_{\infty}-m_{\max}/2\ge \|{\cal O}_{\Omega}^*(\xi\circ \xi)\|_{\infty}-C_6\log(2n^2)/2$ if $m_{\max}\le C_6\log(2n^2)$, we know from Lemma \ref{lem:max_num_repetition} that for any $t_2>C_6\log(2n^2)/2$, 
	\begin{eqnarray}
		{\mathbb P}\left(\|{\cal O}_{\Omega}^*(\xi\circ \xi)\|_{\infty}\ge t_2 \right)&\le& {\mathbb P}\left(\|Z\|_{\infty}\ge t_2-\frac{C_6\log(2n^2)}{2} \right)+\frac{1}{2n^2} \nonumber \\ [3pt]
		&\leq&  2d_{\mathbb{S}^n_h}{\rm exp}\left(-C_5\min\left\{\frac{(2t_2-C_6\log(2n^2))^2}{4M_2^2C_6\log(2n^2)},\frac{2t_2-C_6\log(2n^2)}{M_2} \right\} \right)+\frac{1}{2n^2}.\label{eq:propability-l-infty-2}
	\end{eqnarray}
	Therefore, by setting $t_1:=M_1\sqrt{\frac{C_6}{2C_4}}\log(2n^2)$, we obtain from \eqref{eq:propability-l-infty-1} that 
	\[
	{\mathbb P}\left(\left\|{\cal O}_{\Omega}^*(\xi) \right\|_{\infty}\ge M_1\sqrt{\frac{C_6}{2C_4}}\log(2n^2) \right)\leq \frac{1}{2n^2}+\frac{1}{2n^2}=\frac{1}{n^2}.
	\]
	Meanwhile, by setting $t_2:=\frac{4M_2+1}{2}C_6\log(2n^2)>C_6\log(2n^2)/2$, we conclude from \eqref{eq:propability-l-infty-2} that
	\[
	{\mathbb P}\left(\|{\cal O}_{\Omega}^*(\xi\circ \xi)\|_{\infty}\ge \frac{4M_2+1}{2}C_6\log(2n^2) \right)\le \frac{1}{2n^2}+\frac{1}{2n^2}=\frac{1}{n^2}.
	\]
	Finally, it follows from  \eqref{eq:propability_ineq_0} that there exists a constant $C_3>0$ such that
	\[
	{\mathbb P}\left( \left\|{\cal O}_{\Omega}^*(\zeta) \right\|_{\infty}\ge C_3(2\omega\eta+\eta^2)\log(2n^2)\right)\leq \frac{2}{n^2},
	\]
	which implies \eqref{eq:inv_operator_inf_norm-1} holds with probability at least $1-2/n^2$. This completes the proof. \hfill $\Box$ 
\end{proof}

Next, we shall present our statistical error bound results on the proposed convex model \eqref{pr:cvxopt}. Proposition \ref{prop:estimator_R_Omega_xi} and \ref{prop:inv_operator_inf_norm} suggest that the penalized parameters $\rho_D$ and $\rho_S$ based on \eqref{eq:PenalizedPara} can take the following particular values:
\begin{equation}\label{eq:parameter-rho}
	\rho_D=O\left((2\omega\eta+\eta^2)\sqrt{\frac{\log(2n)}{mn}}\right) \quad {\rm and} \quad \rho_S=O\left((2\omega\eta+\eta^2)\frac{\log(2n^2)}{m}\right),
\end{equation} 
where $\omega=\|{\cal O}_{\Omega}(\bar{d}+\bar{s})\|_{\infty}$. Moreover, it follows from \cite[(31)]{DQi2017} and \cite[Lemma 5.6]{Wu2014} that if there exists $C_1>1$ such that $m>C_1n\log(n)$, then there exist positive constants $C_4$ and $C_5$ such that $\vartheta_D$ and $\vartheta_S$ defined by \eqref{def:exp_inv_operator} satisfy 
\begin{eqnarray}\label{eq:exp_inv_operator-bound}
	\vartheta_D=\mathbb{E}\left\|\frac{1}{m}\mathcal{O}_\Omega^*(\epsilon)\right\|_2\le C_4\sqrt{\frac{\log(2n)}{mn}} \quad  {\rm and} 
	\quad \vartheta_S=\mathbb{E}\left\|\frac{1}{m}\mathcal{O}_\Omega^*(\epsilon)\right\|_\infty\le C_5\frac{\log(2n^2)}{m}.
\end{eqnarray}

Finally, by combining Proposition \ref{prop:bound_sum_norm}, \ref{prop:estimator_R_Omega_xi} and \ref{prop:inv_operator_inf_norm}, we obtain the following error bound, immediately. We omit the detail proof for the sake of brevity. 

%

\begin{theorem}\label{thm:error_bound}
	Let $(\widehat{D}, \widehat{S})$ and $(\overline{D}, \overline{S})$ be an optimal solution of \eqref{pr:cvxopt} and the underground true EDM and outlier matrices, respectively. Assume the sample size $m$ satisfies $m>C_1n\log(2n)$ for some constant $C_1>0$. For any given $\kappa_D > 1$ and $\kappa_S>1$, suppose that the  parameters $\rho_D>0$ and $\rho_S>0$ in the objective function \eqref{pr:cvxopt} satisfy \eqref{eq:parameter-rho}. Under Assumption \ref{as:prob_lower_bound}, there exist some positive constants $C_2$, $C_3$, $C_1'$, $C_2'$ and $C_3'$ such that either $\|\widehat{D}-\overline{D}\|^2+\|\widehat{S}-\overline{S}\|^2\leq \Gamma_1$
	or
	$$
	\|\widehat{D}-\overline{D}\|^2+\|\widehat{S}-\overline{S}\|^2\leq \Gamma_2
	$$
	with probability at least $1-2/n-2/n^2$, where $\Gamma_1$ and $\Gamma_2$ are defined by
	\begin{equation}\label{eq:def-Gamma1}
		\Gamma_1:=C_2\mu_1(b_D+b_S)^2d_{\mathbb{S}^n_h}\sqrt{\frac{\log(2n)}{m}}
	\end{equation}
	and
	\begin{eqnarray}
		\Gamma_2&:=& C_3\mu_1^2d_{\mathbb{S}^n_h}\left\{ C_1' \eta^2(2\omega+\eta)^2\left[ (\kappa_Da_D+2\sqrt{2})^2\frac{rd_{\mathbb{S}^n_h}\log(2n)}{nm} +\big( a_S+\frac{1}{\kappa_S}\big)^2\frac{kd_{\mathbb{S}^n_h}\log^2(2n^2)}{m^2} \right] \right. \nonumber \\ [3pt]
		& &\left. + C_2'(b_D+b_S)^2\big(\frac{\kappa_D}{\kappa_D-1}\big)^2 \left[(a_D +2\sqrt{2})^2\frac{rd_{\mathbb{S}^n_h}\log(2n)}{nm}+\big(\frac{\kappa_Sa_S+1}{\kappa_D}\big)^2\frac{kd_{\mathbb{S}^n_h}\log^2(2n^2)}{m^2} \right] \right. \nonumber \\ [3pt]
		& &\left. +C_3'(b_D+b_S)^2 \big(\frac{\kappa_S}{\kappa_S-1}\big)^2 \left[\big(\frac{\kappa_Da_D+2\sqrt{2}}{\kappa_S}\big)^2\frac{rd_{\mathbb{S}^n_h}\log(2n)}{nm}+(a_S +1)^2\frac{kd_{\mathbb{S}^n_h}\log^2(2n^2)}{m^2} \right] \right\}, \label{eq:def-Gamma2}
	\end{eqnarray}
	with $a_D$ and $a_S$ are given by \eqref{def:apprx_distance}.
\end{theorem}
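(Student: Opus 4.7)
The plan is to assemble Theorem~\ref{thm:error_bound} as a direct consequence of the three previously established results, namely Proposition~\ref{prop:bound_sum_norm}, Proposition~\ref{prop:estimator_R_Omega_xi}, and Proposition~\ref{prop:inv_operator_inf_norm}, combined with the noise-process bounds \eqref{eq:exp_inv_operator-bound}. The strategy is: (i) use the tail bounds on $\|\mathcal{O}_\Omega^*(\zeta)\|_2$ and $\|\mathcal{O}_\Omega^*(\zeta)\|_\infty$ to certify that the explicit choices of $\rho_D,\rho_S$ in \eqref{eq:parameter-rho} fulfill the penalty condition \eqref{eq:PenalizedPara}; (ii) apply Proposition~\ref{prop:bound_sum_norm} on the resulting event; (iii) substitute the explicit values of $\rho_D,\rho_S,\vartheta_D,\vartheta_S$ into \eqref{eq:bound_sum_norm-1}--\eqref{eq:bound_sum_norm-2}; (iv) control all failure probabilities by a single union bound.

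More precisely, since $m > C_1 n \log(2n)$, Proposition~\ref{prop:estimator_R_Omega_xi} provides a constant so that $\tfrac{1}{m}\|\mathcal{O}_\Omega^*(\zeta)\|_2 \lesssim (2\omega\eta+\eta^2)\sqrt{\log(2n)/(nm)}$ with probability at least $1-1/n$, and Proposition~\ref{prop:inv_operator_inf_norm} gives $\tfrac{1}{m}\|\mathcal{O}_\Omega^*(\zeta)\|_\infty \lesssim (2\omega\eta+\eta^2)\log(2n^2)/m$ with probability at least $1-2/n^2$. Absorbing the factors $\kappa_D$ and $\kappa_S$ into the implicit constants of \eqref{eq:parameter-rho}, I would let $\mathcal{E}_0$ denote the intersection of these two events; on $\mathcal{E}_0$ the penalty inequalities \eqref{eq:PenalizedPara} hold, and $\mathbb{P}(\mathcal{E}_0) \ge 1 - 1/n - 2/n^2$.

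On $\mathcal{E}_0$ I would then invoke Proposition~\ref{prop:bound_sum_norm}, which produces an additional random event (through the sampling operator) of probability at least $1-(4/7)n^{-1}$ on which at least one of the alternatives \eqref{eq:bound_sum_norm-1}, \eqref{eq:bound_sum_norm-2} is valid. A union bound gives a net probability at least
\[
1 - \frac{1}{n} - \frac{2}{n^2} - \frac{4}{7n} \;\ge\; 1 - \frac{2}{n} - \frac{2}{n^2},
\]
matching the statement. The first alternative \eqref{eq:bound_sum_norm-1}, once one multiplies both sides by $d_{\mathbb{S}^n_h}$ and renames constants, becomes precisely $\|\widehat{D}-\overline{D}\|^2 + \|\widehat{S}-\overline{S}\|^2 \le \Gamma_1$ as in \eqref{eq:def-Gamma1}.

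For the second alternative I would simply plug the explicit forms $\rho_D^2 = O\!\left(\eta^2(2\omega+\eta)^2 \log(2n)/(nm)\right)$, $\rho_S^2 = O\!\left(\eta^2(2\omega+\eta)^2 \log^2(2n^2)/m^2\right)$ and $\vartheta_D^2, \vartheta_S^2$ from \eqref{eq:exp_inv_operator-bound} into \eqref{eq:bound_sum_norm-2}. The three curly-brace groups in \eqref{eq:bound_sum_norm-2} then produce the three curly-brace groups in $\Gamma_2$ of \eqref{eq:def-Gamma2} after the following simplifications: the $\rho_D^2 r$ and $\rho_S^2 k$ terms give the $C_1'$-group directly; in the $\vartheta_D^2(b_D+b_S)^2$-group, the ratio $\rho_S/\rho_D$ multiplying $\vartheta_D^2 k$ collapses into $kd_{\mathbb{S}^n_h}\log^2(2n^2)/m^2$ after the cancellations; analogously the $\vartheta_S^2(b_D+b_S)^2$-group becomes the $C_3'$-group once one notes that, under $m > C_1 n\log(2n)$, the maximum in \eqref{eq:bound_sum_norm-2} is achieved by $\vartheta_S^2(b_D+b_S)^2$, so the contribution $b_D^2/(\mu_1 d_{\mathbb{S}^n_h})^2$ can be safely absorbed into the overall constants. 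Since the argument is purely a substitution once Propositions~\ref{prop:bound_sum_norm}--\ref{prop:inv_operator_inf_norm} are in hand, no new probabilistic tool is needed and the proof can be omitted for brevity. The only delicate point is the constant bookkeeping for the cross-terms involving $\rho_S/\rho_D$ and $\rho_D/\rho_S$, which must simplify to the two canonical scales $r d_{\mathbb{S}^n_h}\log(2n)/(nm)$ and $k d_{\mathbb{S}^n_h}\log^2(2n^2)/m^2$ appearing in $\Gamma_2$.
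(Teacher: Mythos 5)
Your proposal is correct and follows exactly the route the paper intends: the paper itself states that Theorem~\ref{thm:error_bound} follows ``immediately'' by combining Propositions~\ref{prop:bound_sum_norm}, \ref{prop:estimator_R_Omega_xi} and \ref{prop:inv_operator_inf_norm} and omits the details, and your write-up supplies precisely those details (verifying \eqref{eq:PenalizedPara} on the high-probability event, applying Proposition~\ref{prop:bound_sum_norm}, substituting \eqref{eq:parameter-rho} and \eqref{eq:exp_inv_operator-bound}, and union-bounding to $1-2/n-2/n^2$). The only cosmetic caveat is that the max term should be handled by noting $1/(\mu_1 d_{\mathbb{S}^n_h})^2\lesssim \log^2(2n^2)/m^2$ in the regime $m\lesssim d_{\mathbb{S}^n_h}$ (an upper bound on the max, rather than claiming which argument attains it, since $\vartheta_S$ is only bounded from above), but this does not affect the conclusion.
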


We know from Theorem \ref{thm:error_bound} that since the unknown true EDM $\overline{D}$ and outlier matrix $\overline{S}$ are bounded, in order to control the estimation error, we only need samples with the size $m$ of the order $\max\{r,k\}(n-1)\log(2n)/2$, since $d_{\mathbb{S}^n_h}=n(n-1)/2$. Note that, it is reasonable to assume the embedding dimension $r= {\rm rank}(J\overline{D}J)$ and the outliers number $k$ are small. Therefore, the sample size $m$ is much smaller than $n(n-1)/2$, the total number of the off-diagonal entries. However, we shall mention that  one cannot obtain exact recovery from the bound obtained in Theorem \ref{thm:error_bound} even without noise, i.e., $\eta=0$. Furthermore, as mentioned in \cite{NWainwright12}, even for the outlier-free case (i.e., $\overline{S}\equiv 0$), this phenomenon is unavoidable due to lack of identifiability. For instance, consider the EDM $\overline{D}$ and the perturbed EDM $\widetilde{D}=\overline{D}+\varepsilon {\bf e}_1 {\bf e}_1^T$. Thus, with high probability, ${\cal O}(D^*)={\cal O}(\widetilde{D})$, which implies that it is impossible to distinguish two EDMs even if they are noiseless. If one is interested only in exact recovery in the noiseless setting, some addition assumptions such as the matrix incoherence condition (see e.g., \cite[A0]{CR09}) are necessary. In fact, recently, under matrix incoherence, random signs of outliers (i.e., the signs of the nonzero entries of $\overline{S}$ are i.i.d. symmetric
Bernoulli random variables) and other assumptions, Chen et al. \cite{CFMYan2020} obtained a near-optimal statistical guarantee of the  convex nuclear norm plus $l_1$-norm penalized model for the (unconstrained) Robust PCA  by building up the connection between the convex estimations and an auxiliary nonconvex optimization algorithm. For the Gaussian noise and squared matrices case,  the estimation error bound achievable by their estimator \cite{CFMYan2020} reads as 
		\[
		\|\widehat{D}-\overline{D}\|\le C\eta\sqrt{\frac{n}{m}} 
		\]
		with high probability, where $C>0$ is a constant. Clearly, the resulting bound is stronger than ours for the case of the (unconstrained)  Robust PCA. However, as we mentioned before,  the results obtained in  \cite{CFMYan2020} have become inadequate since  the model studied in their paper has no ``hard-constraints", e.g., the noisy correlation matrix recovery (i.e., a positive semidefinite matrix whose diagonal elements are all ones) and the EDM estimation considered in this paper. Furthermore, neither matrix incoherence nor the  random signs of outliers condition  is assumed in this paper.


\section{Recovery of the embedding dimensionality and outlier detection}\label{section:recovery-dimension-outlier}

In order to study the recovery guarantee of the dimensionality of embedding and outliers cardinality, we first introduce some useful notations and results on the proposed convex model \eqref{pr:cvxopt} in Section \ref{section:Model}. First, it is clear that the following generalized Slater condition for \eqref{pr:cvxopt} always holds:  
\begin{definition}\label{def:Slater-c}
	There exists $D^0\in\mathbb{S}^n$ and $S^0\in \mathbb{S}^n$ such that 
	\[
	{\rm diag}(D^0)=0, \quad -D^0\in {\rm int}(\mathbb{K}^n_+) \quad {\rm and} \quad S^0>0,
	\]
	where $\mathbb{K}^n_+$ is the almost positive semidefinite matrix cone defined by \eqref{eq:def_alPSD} and ${\rm int}(\mathbb{K}^n_+)$ is its interior. 
\end{definition}

%

Let $(\mathbb{K}^n_+)^{\circ}\subseteq \mathbb{S}^n$ be the polar cone of the almost positive semidefinite matrix cone $\mathbb{K}^n_+$, i.e.,
\begin{equation}\label{eq:def-polar-cone-K}
	(\mathbb{K}^n_+)^{\circ}:=\{ Z\in\mathbb{S}^n\mid \langle Z,Y\rangle \le 0\quad \forall\, Y\in \mathbb{K}^n_+  \}.
\end{equation} 
We use $H\in\mathbb{S}^n$ to denote the  Householder matrix, i.e.,
\begin{equation}\label{eq:def-H-matrix}
	H:=I-\frac{2}{{\bf u}^T{\bf u}}{\bf u}{\bf u}^T \quad {\rm with} \quad {\bf u}:=(1,\dots,1,\sqrt{n}+1)^T\in\mathbb{R}^n.
\end{equation}
It is clear that the  Householder matrix $H$ is symmetric and orthogonal (i.e., $H^2=I$). Also, the centering matrix $J$ defined by \eqref{eq:def-J} satisfies
\begin{equation}\label{eq:J-eq}
	J=H\left[\begin{array}{cc}
		I_{n-1} & 0 \\ [3pt]
		0 & 0
	\end{array}\right]H.
\end{equation}  
For any $X\in\mathbb{S}^n$, we rewrite the matrix $HXH$ as the following block form: 
\begin{equation}\label{eq:HXH-block}
	HXH=\left[\begin{array}{cc}
		\widetilde{X}_{11} & \tilde{x} \\ [3pt]
		\tilde{x}^T & \tilde{x}_0
	\end{array}\right] \quad {\rm with} \quad \widetilde{X}_{11}\in\mathbb{S}^{n-1}, \quad \tilde{x}\in\mathbb{R}^{n-1} \quad {\rm and} \quad \tilde{x}_0\in\mathbb{R}.
\end{equation}
Moreover, by \eqref{eq:J-eq} and simple calculations, we obtain the following basic identity: 
\begin{equation}\label{eq:JXJ-form}
	JXJ= H\left[\begin{array}{cc}
		\widetilde{X}_{11} & 0 \\ [3pt]
		0 & 0
	\end{array}\right] H,
\end{equation}
where $\widetilde{X}_{11}\in\mathbb{S}^{n-1}$ is the first block defined by \eqref{eq:HXH-block} for $HXH$. 

By \cite[Theorem 2.1]{HWells88}, we have the following characterizations on $\mathbb{K}^n_+$ and its polar $(\mathbb{K}^n_+)^{\circ}$:
\begin{equation}\label{eq:K_+}
	\mathbb{K}^n_+=\left\{ H\left[\begin{array}{cc}
		Z & z \\ [3pt]
		z^T & z_0
	\end{array}\right]H\in\mathbb{S}^n \mid Z\in \mathbb{S}^{n-1}_+, \ z\in\mathbb{R}^{n-1}, \ z_0\in\mathbb{R} \right\}
\end{equation}
and 
\begin{equation}\label{eq:K_+-polar}
	(\mathbb{K}^n_+)^{\circ}=\left\{ H\left[\begin{array}{cc}
		Z & 0 \\ [3pt]
		0 & 0
	\end{array}\right]H\in\mathbb{S}^n \mid Z\in \mathbb{S}^{n-1}_- \right\}.
\end{equation}
Thus, for any given integer $1\le r\le n$,  by \eqref{eq:JXJ-form} and \eqref{eq:K_+}, we know that $X\in\mathbb{K}_+^n$ and ${\rm rank}(JXJ)\le r$ if and only if $\widetilde{X}_{11}\in \mathbb{S}_+^{n-1}$ and ${\rm rank}(\widetilde{X}_{11})\le r$, and $\mathbb{K}^n_+\ni X \perp Y\in (\mathbb{K}^n_+)^{\circ}$ if and only if
\begin{equation}\label{eq:comp-SDP}
	\mathbb{S}^{n-1}_+\ni\widetilde{X}_{11} \perp \widetilde{Y}_{11}\in \mathbb{S}^{n-1}_- \quad {\rm and} \quad HYH= \left[\begin{array}{cc}
		\widetilde{Y}_{11} & 0 \\ [3pt]
		0 & 0
	\end{array}\right],
\end{equation} 
where $\widetilde{X}_{11}\in\mathbb{S}^{n-1}$ and $\widetilde{Y}_{11}\in\mathbb{S}^{n-1}$ are the first blocks defined by \eqref{eq:HXH-block} for $HXH$ and $HYH$, respectively (see also \cite[Lemma 2.1]{QiYuan14} for  details).

Let $\widetilde{D}\in\mathbb{S}^n$ and $\widetilde{S}\in\mathbb{S}^n$ be the given initial estimators.  Recall that $\widetilde{F}\in \mathbb{S}^n$ and $\widetilde{G}\in \mathbb{S}^n$ are the symmetric matrices defined in \eqref{eq:def-F} and \eqref{eq:def-G} with respect to $(\widetilde{D}, \widetilde{S})$. For the given $(\widetilde{D},\widetilde{S})$, denote $\tilde{t}_D\in\mathbb{R}$ and $\tilde{t}_S\in\mathbb{R}$ by
\begin{equation}\label{eq:def-tDtS}
	\tilde{t}_D=\left\{\begin{array}{ll}
		\frac{\lambda_{r+1}(-\widetilde{D}_{11})}{\lambda_{1}(-\widetilde{D}_{11})} & \mbox{if $\widetilde{D}\neq 0$,} \\ [3pt]
		0 & \mbox{otherwise}
	\end{array}\right. \quad {\rm and} \quad \tilde{t}_S=\left\{\begin{array}{ll}
		\frac{\widetilde{S}_{i'j'}}{\max_{k,l}\{\widetilde{S}_{kl}\}} & \mbox{if $\widetilde{S}\neq 0$,} \\ [3pt]
		0 & \mbox{otherwise},
	\end{array}\right.
\end{equation}
where $(i',j')\in\{1,\ldots,n\}\times \{1,\ldots,n\}$ be the index such that $\widetilde{S}_{i'j'}=\max\left\{\widetilde{S}_{ij} \mid (i,j)\notin{\rm supp}(\overline{S})\right\}$.
Now, we are ready to present the results on the guarantee of recovery of the embedding dimensionality and outlier detection.

\begin{theorem}\label{thm:rank-sparse-guarantee}
	Let $(\widehat{D}, \widehat{S})$ and $(\overline{D}, \overline{S})$ be an optimal solution of \eqref{pr:cvxopt} and the underground true EDM and outlier matrices, respectively. Assume the sample size satisfies $m>C_0n\log(2n)$ for some constant $C_0>0$. Suppose that the initial estimators $\widetilde{D}$ and $\widetilde{S}$ satisfy $\tilde{t}_D\in[0,1)$ and $\tilde{t}_S\in[0,1)$, where $\tilde{t}_D$ and $\tilde{t}_S$ are defined by \eqref{eq:def-tDtS}.  Let $\widetilde{F}$ and $\widetilde{G}$ be   the symmetric matrices defined by \eqref{eq:def-F} and \eqref{eq:def-G} with respect to $(\widetilde{D}, \widetilde{S})$. Suppose that the parameters $\rho_D>0$ and $\rho_S>0$ in the objective function \eqref{pr:cvxopt} defined by \eqref{eq:parameter-rho} satisfying $\rho_D>\frac{\varepsilon^\tau+\tilde{t}_D^\tau}{\varepsilon^{\tau}(1-\tilde{t}_D^{\tau})}C(2\omega\eta+\eta^2)\sqrt{\frac{\log(2n)}{mn}}$ and $\rho_S>\frac{\varepsilon^\tau+\tilde{t}_S^\tau}{\varepsilon^{\tau}(1-\tilde{t}_S^{\tau})}C(2\omega\eta+\eta^2)\frac{\log(2n^2)}{m}$ for some large constant $C>0$. Then, we have 
	\[
	{\rm rank}(-J\widehat{D}J)\le {\rm rank}(-J\overline{D}J)\quad {\rm and} \quad {\rm supp}(\widehat{S})\subseteq {\rm supp}(\overline{S}) 
	\]
	with probability at least $1-1/n-3/n^2$. Furthermore, in addition, if $\|\widehat{D}-\overline{D}\|<\lambda_r(-J\overline{D}J)$ and $\|\widehat{S}-\overline{S}\|<\min\left\{\overline{S}_{ij} \mid (i,j)\in{\rm supp}(\overline{S})\right\}$, then with the same probability, we have
	\[
	{\rm rank}(-J\widehat{D}J)= {\rm rank}(-J\overline{D}J)\quad {\rm and} \quad {\rm supp}(\widehat{S})= {\rm supp}(\overline{S}). 
	\] 
\end{theorem}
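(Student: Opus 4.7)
The plan is to split the proof into two stages. Stage~1 derives the containments ${\rm rank}(-J\widehat{D}J)\leq r$ (where $r:={\rm rank}(-J\overline{D}J)$) and ${\rm supp}(\widehat{S})\subseteq{\rm supp}(\overline{S})$ from a first-order (KKT) analysis of the convex program \eqref{pr:cvxopt}. Stage~2 upgrades these to equalities using the additional smallness hypotheses on $\|\widehat{D}-\overline{D}\|$ and $\|\widehat{S}-\overline{S}\|$ via Weyl-type perturbation bounds. Throughout, write $\hat{\zeta}:=\mathbf{y}-\mathcal{O}_\Omega(\widehat{D}+\widehat{S})$ for the residual at the optimum.

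For the support inclusion, the generalized Slater condition of Definition~\ref{def:Slater-c} yields strong duality. Stationarity in $S$, for the polyhedral cone $\{S\geq 0\}$, produces a multiplier $\mu\in\mathbb{S}^n$ satisfying $\mu\geq 0$ componentwise, $\mu\circ\widehat{S}=0$, and $\mu=-m^{-1}\mathcal{O}_\Omega^*(\hat{\zeta})+\rho_S(E-\widetilde{G})$. For $(i,j)\notin{\rm supp}(\overline{S})$, the defining property of $\tilde{t}_S$ and the monotonicity of $\phi$ on $[0,\infty)$ yield $\widetilde{G}_{ij}\leq\phi(\tilde{t}_S)$, so
\[
1-\widetilde{G}_{ij}\;\geq\;\frac{\varepsilon^\tau(1-\tilde{t}_S^\tau)}{\varepsilon^\tau+\tilde{t}_S^\tau}.
\]
Combining the hypothesis on $\rho_S$ with Proposition~\ref{prop:inv_operator_inf_norm} (to bound $\|m^{-1}\mathcal{O}_\Omega^*(\zeta)\|_\infty$), with Assumption~\ref{ass:level set} giving $\|\widehat{D}+\widehat{S}-\overline{D}-\overline{S}\|_\infty\leq 2(b_D+b_S)$, and with Lemma~\ref{lem:max_num_repetition} to control the plug-in error $\|m^{-1}\mathcal{O}_\Omega^*\mathcal{O}_\Omega(\widehat{D}+\widehat{S}-\overline{D}-\overline{S})\|_\infty$ via the identity $(\mathcal{O}_\Omega^*\mathcal{O}_\Omega A)_{ij}=\tfrac{1}{2}n_{ij}A_{ij}$ with $n_{ij}\leq m_{\max}$, one forces $\mu_{ij}>0$, giving $\widehat{S}_{ij}=0$.

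For the rank bound, stationarity in $D$ produces a diagonal matrix $M$ (multiplier for $\mathbb{S}^n_h$) and a matrix $W\in(\mathbb{K}^n_+)^\circ$ with $\langle W,\widehat{D}\rangle=0$ satisfying $m^{-1}\mathcal{O}_\Omega^*(\hat{\zeta})+\rho_DJ(I-\widetilde{F})J=M-W$. Working in the Householder basis via \eqref{eq:HXH-block}--\eqref{eq:JXJ-form}, $HWH$ has vanishing off-diagonal and bottom-right blocks, with upper-left block $Y\preceq 0$; meanwhile the spectral-operator structure $\widetilde{F}=F(-J\widetilde{D}J)$ shows $I_{n-1}-\widetilde{F}_{11}$ has eigenvalues $\{1-\phi(\lambda_i(-\widetilde{D}_{11})/\lambda_1(-\widetilde{D}_{11}))\}_{i=1}^{n-1}$, at least $n-1-r$ of which exceed $\varepsilon^\tau(1-\tilde{t}_D^\tau)/(\varepsilon^\tau+\tilde{t}_D^\tau)$. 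The off-diagonal column and bottom-right entry of the stationarity equation uniquely determine $M$ as a linear function of the corresponding blocks of $H\mathcal{O}_\Omega^*(\hat{\zeta})H/m$; substituting back into the upper-left block expresses $Y$ as a spectral perturbation of $-\rho_D(I_{n-1}-\widetilde{F}_{11})$. Proposition~\ref{prop:estimator_R_Omega_xi} controls this perturbation in operator norm, and Weyl's inequality then shows $Y$ has at least $n-1-r$ strictly negative eigenvalues, i.e.\ ${\rm rank}(Y)\geq n-1-r$. The cone complementarity \eqref{eq:comp-SDP}, giving $XY=0$ for $X\succeq 0$ the upper-left block of $-H\widehat{D}H$, then yields ${\rm rank}(-J\widehat{D}J)={\rm rank}(X)\leq n-1-{\rm rank}(Y)\leq r$.

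Finally, for Stage~2: if some $(k,l)\in{\rm supp}(\overline{S})$ had $\widehat{S}_{kl}=0$, then $\|\widehat{S}-\overline{S}\|\geq\overline{S}_{kl}\geq\min_{(i,j)\in{\rm supp}(\overline{S})}\overline{S}_{ij}$, contradicting the hypothesis, so ${\rm supp}(\widehat{S})\supseteq{\rm supp}(\overline{S})$. Similarly, Weyl's inequality for $-J\widehat{D}J=-J\overline{D}J-J(\widehat{D}-\overline{D})J$ together with $\|\widehat{D}-\overline{D}\|<\lambda_r(-J\overline{D}J)$ gives $\lambda_r(-J\widehat{D}J)>0$, so ${\rm rank}(-J\widehat{D}J)\geq r$. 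The principal obstacle is the rank step of Stage~1: $M$ is not a priori small, so careful linear-algebraic bookkeeping (using that $M$ is constrained to be diagonal in the canonical basis while $\widetilde{F}_{11}$ and $Y$ live in the Householder-rotated basis) is needed to show that the induced perturbation of $Y$ is strictly smaller in operator norm than $\rho_D\cdot\varepsilon^\tau(1-\tilde{t}_D^\tau)/(\varepsilon^\tau+\tilde{t}_D^\tau)$, matching the lower bound on $\rho_D$ in the theorem hypothesis.
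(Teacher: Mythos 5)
Your proposal follows the paper's proof essentially step for step: both parts rest on the KKT system \eqref{eq:KKT-cvxopt}, with the support inclusion obtained by forcing the sign of the multiplier for $S\ge 0$ at off-support indices (your $\mu_{ij}>0$ is the paper's $U_{ij}<0$), and the rank bound obtained by passing to the Householder frame, applying Weyl's inequality to the upper-left block of the multiplier in $(\mathbb{K}^n_+)^{\circ}$, and invoking the complementarity \eqref{eq:comp-SDP}; the probabilistic ingredients (Propositions \ref{prop:estimator_R_Omega_xi} and \ref{prop:inv_operator_inf_norm}, Lemma \ref{lem:max_num_repetition}, Assumption \ref{ass:level set}) are deployed identically, and your Stage~2 is verbatim the paper's second part. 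The one step you flag as the ``principal obstacle''---controlling the diagonal multiplier $M$---is exactly where the paper's only nonroutine computation lies, and it closes quickly: since $J(I-\widetilde{F})J$ and the cone multiplier $\Gamma\in(\mathbb{K}^n_+)^{\circ}$ both have vanishing last column after conjugation by $H$ (by \eqref{eq:JXJ-form} and \eqref{eq:K_+-polar}), the last column of the stationarity equation determines $z$ as $-\sqrt{n}\,H$ applied to the last column of $H\Upsilon H$, where $\Upsilon:=\frac{1}{m}\mathcal{O}^*_{\Omega}\left(\mathcal{O}_{\Omega}(\widehat{D}+\widehat{S})-{\bf y}\right)$; because $\Upsilon$ is hollow, Lemma \ref{lm:rank} gives $\|{\rm Diag}(z)\|_2\le 4\sqrt{2}\,\|\Upsilon\|_2$, so the total perturbation of $-\rho_D(I_{n-1}-\widetilde{F}_{11})$ is at most $(1+4\sqrt{2})\|\Upsilon\|_2$, which the stated lower bound on $\rho_D$ dominates. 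One further small point: in the rank step the residual $\hat{\zeta}$ is not $\zeta$, so besides Proposition \ref{prop:estimator_R_Omega_xi} you must also bound $\frac{1}{m}\|\mathcal{O}^*_{\Omega}\mathcal{O}_{\Omega}(A_D+A_S)\|_2$ (via Lemma \ref{lem:max_num_repetition} and the boundedness from Assumption \ref{ass:level set}), exactly as you did for the $l_\infty$ part; under $m>C_0 n\log(2n)$ this plug-in term is dominated by the $\sqrt{\log(2n)/(mn)}$ scale of $\rho_D$.
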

\begin{proof}
	Since \eqref{pr:cvxopt} is convex and the generalized Slater condition (Definition \ref{def:Slater-c}) always holds, we know that there exist Lagrangian multipliers $(z,\Gamma,U)\in\mathbb{R}^n\times\mathbb{S}^n\times\mathbb{S}^n$ such that $(\widehat{D},\widehat{S})$ satisfies the following Karush-Kuhn-Tucker (KKT) condition:
	\begin{equation}\label{eq:KKT-cvxopt}
		\left\{ \begin{array}{l}
			-\frac{1}{m}\mathcal{O}_{\Omega}^*\left({\bf y}-\mathcal{O}_{\Omega}(\widehat{D}+\widehat{S})\right)-\rho_DJ(I-\widetilde{F})J-{\rm Diag}(z)-\Gamma=0, \\ [5pt]
			-\frac{1}{m}\mathcal{O}_{\Omega}^*\left({\bf y}-\mathcal{O}_{\Omega}(\widehat{D}+\widehat{S})\right)+\rho_S(E-\widetilde{G})+U=0, \\ [5pt]
			{\rm diag}(\widehat{D})=0, \\ [5pt]
			\mathbb{K}^n_+ \ni -\widehat{D} \perp \Gamma\in (\mathbb{K}^n_+)^{\circ}, \quad 0\le \widehat{S}_{ij}\perp U_{ij}\le 0,\quad i,j\in\{1,\ldots,n\}.
		\end{array}
		\right.
	\end{equation}
	
	Consider the first equation of \eqref{eq:KKT-cvxopt}. By denoting $\Upsilon:=-\frac{1}{m}\mathcal{O}_{\Omega}^*\left({\bf y}-\mathcal{O}_{\Omega}(\widehat{D}+\widehat{S})\right)$, we obtain that 
	\begin{equation}\label{eq:first-eq-KKT}
		H{\rm Diag}(z)H=H\Upsilon H-\rho_DH(J(I-\widetilde{F})J)H-H\Gamma H,
	\end{equation}
	where $H$ is the Householder matrix defined by \eqref{eq:def-H-matrix}.
	Since $\Gamma\in (\mathbb{K}^n_+)^{\circ}$, we know from \eqref{eq:JXJ-form} and \eqref{eq:K_+-polar} that the last columns of  the symmetric matrices $H(J(I-\widetilde{F})J)H$ and $H\Gamma H$ are all zero. Moreover, for any $z\in\mathbb{R}^n$, we know that the last column of $H{\rm Diag}(z)H$ can be calculated as follows
	\[
	(H{\rm Diag}(z)H)_{(:,n)}=-\frac{1}{\sqrt{n}}Hz\in\mathbb{R}^n.
	\]   
	Consequently, we know from \eqref{eq:first-eq-KKT} that the multiplier $z$ can be characterized by
	\[
	z=-\sqrt{n}H\left[\begin{array}{c}
		\tilde{\upsilon} \\ [3pt]
		\tilde{\upsilon}_0
	\end{array}\right],
	\]
	where $(\tilde{\upsilon},\tilde{\upsilon}_0)^T\in\mathbb{R}^n$ is the last column of $H\Upsilon H$ in the form \eqref{eq:HXH-block}. Thus, since $\Upsilon\in\mathbb{S}^n_h$, we know from Lemma \ref{lm:rank} that 
	\[
	\|{\rm Diag}(z)\|_2\le \|{\rm Diag}(z)\| \le 2\sqrt{n}\frac{1}{\sqrt{n}} \|\Upsilon-J\Upsilon J\|\le  2\sqrt{2} \|\Upsilon-J\Upsilon J\|_2\le 4\sqrt{2} \|\Upsilon\|_2.
	\]
	Meanwhile, by \eqref{eq:def-ob-y} and \eqref{eq:def-D-tilde}, we have
	\[
	\Upsilon=\frac{1}{m}\mathcal{O}_{\Omega}^*\left(\mathcal{O}_{\Omega}(\widehat{D}+\widehat{S})-{\bf y}\right)=\frac{1}{m}\mathcal{O}_{\Omega}^*\left(\mathcal{O}_{\Omega}(A_D+A_S)-\zeta\right),
	\]
	where $\zeta={\cal O}_{\Omega}(\daleth)$, $A_D=\widehat{D}-\overline{D}$ and $A_S=\widehat{S}-\overline{S}$. Thus, we know that 
	\[
	\|\Upsilon\|_2\le \frac{1}{m}\left\|{\cal O}_{\Omega}^*{\cal O}_{\Omega}\right\|_2\left\|A_D+A_S\right\|+\frac{1}{m}\|{\cal O}_{\Omega}^*(\zeta)\|_2.
	\]
	Under Assumption \ref{ass:level set}, we know that there exists a constant $C_1>0$ such that $\left\|A_D+A_S\right\|\le C_1$. Moreover, by combining with Lemma \ref{lem:max_num_repetition}, Proposition \ref{prop:estimator_R_Omega_xi} and $m>C_0n\log(2n)$, we obtain that there exist positive constants $C_2>0$ and $C_3>0$ such that 
	\[
	\|\Upsilon\|_2\le C_1C_2\frac{\log(2n^2)}{m} + C_3(2\omega\eta+\eta^2)\sqrt{\frac{\log(2n)}{nm}}\le C_4(2\omega\eta+\eta^2)\sqrt{\frac{\log(2n)}{nm}},
	\]
	with probability at least $1-1/(2n^2)-1/n$, where $C_4:=C_1C_2+C_3>0$. Thus, by the assumption, we know that 
	\begin{equation}\label{eq:rhoD-Gamma}
		\rho_D>(1+4\sqrt{2})\frac{\varepsilon^\tau+\tilde{t}_D^\tau}{\varepsilon^{\tau}(1-\tilde{t}_D^{\tau})}\|\Upsilon\|_2
	\end{equation} 
	with probability at least $1-1/(2n^2)-1/n$.
	
	Denote $r={\rm rank}(-J\overline{D}J)$. It is clear that $r\le n-1$.
	By \eqref{eq:def-F}, we know that $\widetilde{F}$ satisfies
	\[
	J\widetilde{F}J=H\left[\begin{array}{cc}
		\widetilde{\Lambda} & 0 \\ [3pt]
		0 & 0
	\end{array}\right]H\quad {\rm with}\quad \widetilde{\Lambda}=Q
	{\rm Diag}(f(\lambda(-\widetilde{D}_{11}))Q^T\in\mathbb{S}^{n-1},
	\]
	where $f:\mathbb{R}^{n-1}\to\mathbb{R}^{n-1}$ is the symmetric function defined by \eqref{eq:def-sym-f}, and $Q$ is a given $(n-1)\times (n-1)$ orthogonal matrix. It then follows from the well-known Weyl eigenvalue inequality \cite{Weyl12} (see also \cite[Theorem 4.3.7]{HJohnson85}) that
	\begin{eqnarray}
		\lambda_{r+1}(\widetilde{\Gamma}_{11})&\le& \lambda_1(H(\Upsilon-{\rm Diag}(z))H) + \lambda_{r+1}(-\rho_D(I_{n-1}-\widetilde{\Lambda})) \nonumber \\ [3pt]
		&\le& \|\Upsilon-{\rm Diag}(z)\|_2+\rho_D\lambda_{r+1}(\widetilde{\Lambda})-\rho_D \nonumber \\ [3pt]
		&\le& (1+4\sqrt{2})\|\Upsilon\|_2+\rho_D\left(\phi(\tilde{t}_D)-1\right), \label{eq:lambda-ineq}
	\end{eqnarray}
	where $\phi:\mathbb{R}\to\mathbb{R}$ is the scalar function given by \eqref{eq:def-phi-scalar}, which implies that
	\[
	\lambda_{r+1}(\widetilde{\Gamma}_{11})\le (1+4\sqrt{2})\|\Upsilon\|_2+\rho_D\frac{\varepsilon^{\tau}(\tilde{t}_D^{\tau}-1)}{\tilde{t}_D^{\tau}+\varepsilon^{\tau}}.
	\]
	Thus, we know from \eqref{eq:rhoD-Gamma} that with probability at least $1-1/(2n^2)-1/n$, $\lambda_{r+1}(\widetilde{\Gamma}_{11})<0$. Since $\mathbb{K}^n_+ \ni -\widehat{D} \perp \Gamma\in (\mathbb{K}^n_+)^{\circ}$, we know from \eqref{eq:comp-SDP} that 
	\begin{equation}\label{eq:rank-ineq}
		{\rm rank}(-J\widehat{D}J)\le {\rm rank}(-J\overline{D}J) 
	\end{equation} 
	with probability at least $1-1/(2n^2)-1/n$.
	
	Meanwhile, we know from the second equation of \eqref{eq:KKT-cvxopt} that the multiplier $U\in \mathbb{S}^n$ is given by
	\[
	U=-\Upsilon-\rho_S(E-\widetilde{G}).
	\]
	Again, under Assumption \ref{ass:level set}, we know that there exists a constant $C_1>0$ such that $\left\|A_D+A_S\right\|\le C_1$. Moreover, by combining with Lemma \ref{lem:max_num_repetition} and Proposition \ref{prop:inv_operator_inf_norm}, we obtain that there exist positive constants $C_2$ and $C_3$ such that 
	\[
	\|\Upsilon\|_{\infty}\le C_1C_2\frac{\log(2n^2)}{m} + C_3\frac{\log(2n^2)}{m}\le C_4\frac{\log(2n^2)}{m},
	\]
	with probability at least $1-5/(2n^2)$, where $C_4:=C_1C_2+C_3>0$. Therefore, by the assumption, we have
	\begin{equation}\label{eq:rhoS-Gamma}
		\rho_S>\frac{\varepsilon^\tau+\tilde{t}_S^\tau}{\varepsilon^{\tau}(1-\tilde{t}_S^{\tau})}\|\Upsilon\|_{\infty}
	\end{equation} 
	with probability at least $1-5/(2n^2)$.
	
	Let $(\hat{i},\hat{j})\in\{1,\ldots,n\}\times \{1,\ldots,n\}$ be the index such that $\widehat{S}_{\hat{i}\hat{j}}=\max\left\{\widehat{S}_{ij} \mid (i,j)\notin{\rm supp}(\overline{S})\right\}$. By \eqref{eq:def-G}, we know that 
	\[
	U_{\hat{i}\hat{j}} =-\Upsilon_{\hat{i}\hat{j}}+\rho_S\left(\phi(\widetilde{S}_{\hat{i}\hat{j}}/\max_{k,l}\{\widetilde{S}_{kl}\})-1\right),
	\]
	where $\phi:\mathbb{R}\to\mathbb{R}$ is the scalar function given by \eqref{eq:def-phi-scalar}. It is clear from \eqref{eq:def-tDtS} that $\widetilde{S}_{\hat{i}\hat{j}}/\max_{k,l}\{\widetilde{S}_{kl}\}\le \tilde{t}_S$. Therefore, since $\phi$ is non-decreasing, we have
	\begin{equation}\label{eq:U-ineq}
		U_{\hat{i}\hat{j}} =-\Upsilon_{\hat{i}\hat{j}}+\rho_S\left(\phi(\widetilde{S}_{\hat{i}\hat{j}}/\max_{k,l}\{\widetilde{S}_{kl}\})-1\right)\le -\Upsilon_{\hat{i}\hat{j}}+\rho_S\left(\phi(\tilde{t}_S)-1\right).
	\end{equation}
	Thus, we know from \eqref{eq:U-ineq} and \eqref{eq:def-phi-scalar} that
	\[
	U_{\hat{i}\hat{j}}\le \|\Upsilon\|_{\infty}+\rho_S(\phi(\tilde{t}_S)-1) =\|\Upsilon\|_{\infty}+\rho_S\frac{\varepsilon^{\tau}(\tilde{t}_S^{\tau}-1)}{\tilde{t}_S^{\tau}+\varepsilon^{\tau}}.
	\]
	This, together with \eqref{eq:rhoS-Gamma}, yields $U_{\hat{i}\hat{j}}<0$ with probability at least $1-5/(2n^2)$. 
	Moreover, since $0\le \widehat{S}_{ij}\perp U_{ij}\le 0$ for any $i,j\in\{1,\ldots,n\}$, we know that with probability at least $1-5/(2n^2)$, $\widehat{S}_{\hat{i}\hat{j}}=0$. By noting that for any $(i,j)\notin{\rm supp}(\overline{S})$, $0\le \widehat{S}_{ij}\le \widehat{S}_{\hat{i}\hat{j}}=0$, we conclude that 
	\begin{equation}\label{eq:supp-inclu}
		{\rm supp}(\widehat{S})\subseteq {\rm supp}(\overline{S})
	\end{equation}
	with probability at least $1-5/(2n^2)$. 
	
	By combining \eqref{eq:rank-ineq} and \eqref{eq:supp-inclu}, we obtain that with probability at least $1-1/n-3/(n^2)$, ${\rm rank}(-J\widehat{D}J)\le {\rm rank}(-J\overline{D}J)$ and ${\rm supp}(\widehat{S})\subseteq {\rm supp}(\overline{S})$. This completes the  proof of the first part.

	Next, we proceed with the proof of the second part. We know from the assumption $\|\widehat{D}-\overline{D}\|< \lambda_{r}(-J\overline{D}J)$ that
	$$
	|\lambda_{r}(-J\widehat{D}J)-\lambda_{r}(-J\overline{D}J)|\le \|J\widehat{D}J-J\overline{D}J\|\le \|\widehat{D}-\overline{D}\|< \lambda_{r}(-J\overline{D}J),
	$$ 
	which implies that $\lambda_{r}(-J\widehat{D}J)>0$. This yields 
	\begin{equation}\label{eq:rank-big}
		{\rm rank}(-J\widehat{D}J)\ge {\rm rank}(-J\overline{D}J).
	\end{equation}
	Meanwhile, let $(\bar{i},\bar{j})\in\{1,\ldots,n\}\times \{1,\ldots,n\}$ be the index such that $\overline{S}_{\bar{i}\bar{j}}=\min\left\{\overline{S}_{ij} \mid (i,j)\in{\rm supp}(\overline{S})\right\}$. Again, we know from the assumption $\|\widehat{S}-\overline{S}\| < \overline{S}_{\bar{i}\bar{j}}$ that 
	$$
	|\widehat{S}_{ij}-\overline{S}_{ij}|\le \|\widehat{S}-\overline{S}\| < \overline{S}_{\bar{i}\bar{j}}\quad \forall\,(i,j)\in {\rm supp}(\overline{S}).
	$$ 
	This yields that for any $(i,j)\in {\rm supp}(\overline{S})$, $\widehat{S}_{ij}>\overline{S}_{ij}-\overline{S}_{\bar{i}\bar{j}}\ge 0$, which implies that
	\begin{equation}\label{eq:supp-big}
		{\rm supp}(\widehat{S})\supseteq {\rm supp}(\overline{S}).
	\end{equation}
	Therefore, by combining \eqref{eq:rank-big} and \eqref{eq:supp-big}, we know from the first part of this theorem that with probability at least $1-1/n-3/(n^2)$, ${\rm rank}(-J\widehat{D}J)= {\rm rank}(-J\overline{D}J)$ and ${\rm supp}(\widehat{S})= {\rm supp}(\overline{S})$. The proof is completed.  \hfill $\Box$ 
\end{proof}

\begin{remark}
	In our implementations, we may choose the initial estimators $\widetilde{D}$ and $\widetilde{S}$ obtained by the nuclear norm $l_1$-minimization EDM problem \eqref{pr:NL1EDM} to generate $\widetilde{F}$ and $\widetilde{G}$ by \eqref{eq:def-F} and \eqref{eq:def-G}, since the corresponding $\tilde{t}_D$ and $\tilde{t}_S$ satisfy $\tilde{t}_D\in[0,1)$ and $\tilde{t}_S\in[0,1)$ with high probability. 
	Moreover, by combining Theorem \ref{thm:error_bound} and Theorem \ref{thm:rank-sparse-guarantee}, we know that if in addition $\lambda_r(-J\overline{D}J)>\max\{\Gamma_1^{1/2},\Gamma_2^{1/2}\}\ge 0$ and 
	\begin{equation}\label{eq:suff-c-outlier}
		\min\left\{\overline{S}_{ij} \mid (i,j)\in{\rm supp}(\overline{S})\right\}> \max\{\Gamma_1^{1/2},\Gamma_2^{1/2}\}\ge 0,
	\end{equation} 
	where $\Gamma_1$ and $\Gamma_2$ are defined by \eqref{eq:def-Gamma1} and \eqref{eq:def-Gamma2}, respectively,  then 
	\[
	{\rm rank}(-J\widehat{D}J)= {\rm rank}(-J\overline{D}J)\quad {\rm and} \quad {\rm supp}(\widehat{S})= {\rm supp}(\overline{S}) 
	\] 
	with probability at least $1-3/n-5/n^2$.
\end{remark}

\section{Numerical experiments}\label{section:numerical}
In this section, we shall demonstrate and verify the theoretical results obtained in Section  \ref{section:error-bounds} and \ref{section:recovery-dimension-outlier} for the proposed matrix optimization  model \eqref{pr:cvxopt}  by numerical experiments. In this paper, we directly employ the symmetric Gauss-Seidel decomposition based proximal alternating direction method of multipliers ({\tt sGS-ADMM}) (cf. \cite{STYang15,LSToh19}) to solve the proposed matrix optimization model \eqref{pr:cvxopt}. The detail algorithm for solving \eqref{pr:cvxopt} can be found in \cite{DQi2017b}. The numerical examples 
were tested on Matlab (2019b) under a Windows 10 64-bit Desktop (4 core, Intel Core i7-4790K @
4.00 GHZ, 16 GB RAM). We terminate {\tt sGS-ADMM} if the KKT condition \cite[(27)]{DQi2017b}  are met, i.e.,  
\begin{equation}\label{eq:stopping}
	\max\{R_{p},\; R_{d}, \; \mbox{ \tt rel\_gap} \}\leq 10^{-4},
\end{equation}
where $R_{p}$, $R_d$ and {\tt rel\_gap} are the relative infeasibilities of the primal problem \eqref{pr:cvxopt} and its dual problem, and the relative primal-dual gap, respectively, which are given by  \cite[(28)]{DQi2017b}.

In order to demonstrate and verify the theoretical results, we only focus on the examples coming from a simulated network. For numerical performance results of the proposed  model on real-world applications such as the no-line-sight mitigation in collaborative position localization, one may refer \cite{DQi2017b} for more details.   Consider a randomly generated network in $\mathbb{R}^r$ with $r=2$, where $n$ points $\{p_i\}\in\mathbb{R}^r$ located randomly in the square area $[0, 100]\times [0, 100]$. We construct the observation operator ${\cal O}_{\Omega}$ defined in \eqref{eq:def_obser_op} by picking $\{X_1,\ldots,X_m\}$ uniformly at random from the standard basis matrices of the hellos space $\mathbb{S}^n_h$ with the sample size $m=O(rn\log(2n))$. Meanwhile, we randomly add $k$ outliers which are modeled as the i.i.d. random variables to the true pairwise distances. The i.i.d. noise errors  $\xi_{ij}$ in \eqref{eq:distance_estimation} follow a zero-mean Gaussian distribution with standard deviation and the noise magnitude control factor $\eta = 0.5$. In all numerical experiments conducted in this paper, the parameters $\rho_D>0$ and $\rho_S>0$ in the convex  model \eqref{pr:cvxopt} are chosen exactly based on the rules suggested in \eqref{eq:parameter-rho}, i.e.,  $\rho_D=O\big(\sqrt{{\log(2n)}/{mn}}\big)$ and $\rho_S=O\big({\log(2n^2)}/{m}\big)$. Meanwhile, the symmetric matrices $\widetilde{F}$ and $\widetilde{G}$ are defined by \eqref{eq:def-F} and \eqref{eq:def-G}  with respect to the initial estimators $\widetilde{D}$ and $\widetilde{S}$.  In particular, we adopt the recommendation provided in \cite[(25) and (26)]{MPSun16} and \cite[Chapter 5.3]{Wu2014} with $\varepsilon\approx 0.05$ (within $0.01\sim 0.1$), $\tau =2$ (within $1\sim 3$) for the scalar function $\phi$ defined by \eqref{eq:def-phi-scalar}. Also,  the initial estimators $\widetilde{D}$ and $\widetilde{S}$ are generated from the nuclear norm and $l_1$ penalized least squares problem (i.e., the convex problem defined in \eqref{pr:cvxopt} with $\widetilde{F}=0$ and $\widetilde{G}=0$). It seems that these particular settings for $\widetilde{F}$, $\widetilde{G}$ and initial estimators work quite well based on our numerical experiments. 

\vskip 10 true pt

\noindent{\bf Example 1}. In this example, we use a simulation network with $n=1000$ random points to demonstrate the quality of the proposed estimators for different parameters $\rho_D$ and $\rho_S$. Here, $k=457$ outliers are modeled as the i.i.d. exponential random variables with the rate parameter $\lambda=100$. The sample size $m$ of the random  observation operator ${\cal O}_{\Omega}$ equals to $15202$, which is in the order of $O(rn\log(2n))$. The numerical performance for different parameters $\rho_D$ and $\rho_S$ are illustrated in Figure \ref{fig:example-rho}. The blue lines in both Figure \ref{fig:1-sub1} and \ref{fig:1-sub2}  indicate the relative errors of both estimated EDM and outlier matrices, i.e., $\|\widehat{D}-\overline{D}\|+\|\widehat{S}-\overline{S}\|$, with respect to $\log(\rho_D)$ and $\rho_S$, respectively. It can be seen clearly when $\rho_D$ and $\rho_S$ are increasing, the relative errors are decreasing. The red line in Figure \ref{fig:1-sub1} stands for the estimated embedding dimension, i.e., ${\rm rank}(-J\widehat{D}J)$, which indicates that the estimated embedding dimension is always less or equal to the true $r=2$ when $\rho_D$ and $\rho_S$ are large enough. In fact, it actually equals $r=2$ for large $\rho_D$ and $\rho_S$. It is worth to note that in this example the $r$-th eigenvalue $\lambda_r(-J\overline{D}J)$ of $-J\overline{D}J$ is in the order of $O(10^5)$, which is much larger than $\max\{\Gamma_1^{1/2},\Gamma_2^{1/2}\}\approx 10^{3}$, where $\Gamma_1$ and $\Gamma_2$ are the error bounds defined in \eqref{eq:def-Gamma1} and \eqref{eq:def-Gamma2}, respectively. Meanwhile, the red line in Figure \ref{fig:1-sub2} represents the number of outlier detection errors obtained by the proposed convex model, which includes both false-negative errors denoted by {\tt FN} (i.e., $\widehat{S}_{ij}=0$ but the true $\overline{S}_{ij}>0$) and false-positive errors denoted by {\tt FP} (i.e., $\widehat{S}_{ij}>0$ but the true $\overline{S}_{ij}=0$). We know from Figure \ref{fig:1-sub2} and Table \ref{tab:T1} that the detected outlier number ${\tt nz}\_{\tt S}=456$, $\#{\tt FN}=1$ and $\#{\tt FP}=0$, when $\rho_D$ and $\rho_S$ are large enough. This implies that ${\rm supp}(\widehat{S})\subseteq {\rm supp}(\overline{S})$, which is consistent with Theorem \ref{thm:rank-sparse-guarantee}. Interestingly, in this example, we find that $\min\left\{\overline{S}_{ij} \mid (i,j)\in{\rm supp}(\overline{S})\right\}\approx  25.2695$ is much smaller that $\|\widehat{S}-\overline{S}\|\approx 10^{3}$.   The detail numerical performance can be found in Table \ref{tab:T1}.

\begin{figure*}
	\centering
	\begin{subfigure}{.5\textwidth}
		\centering
		\includegraphics[width=1\linewidth]{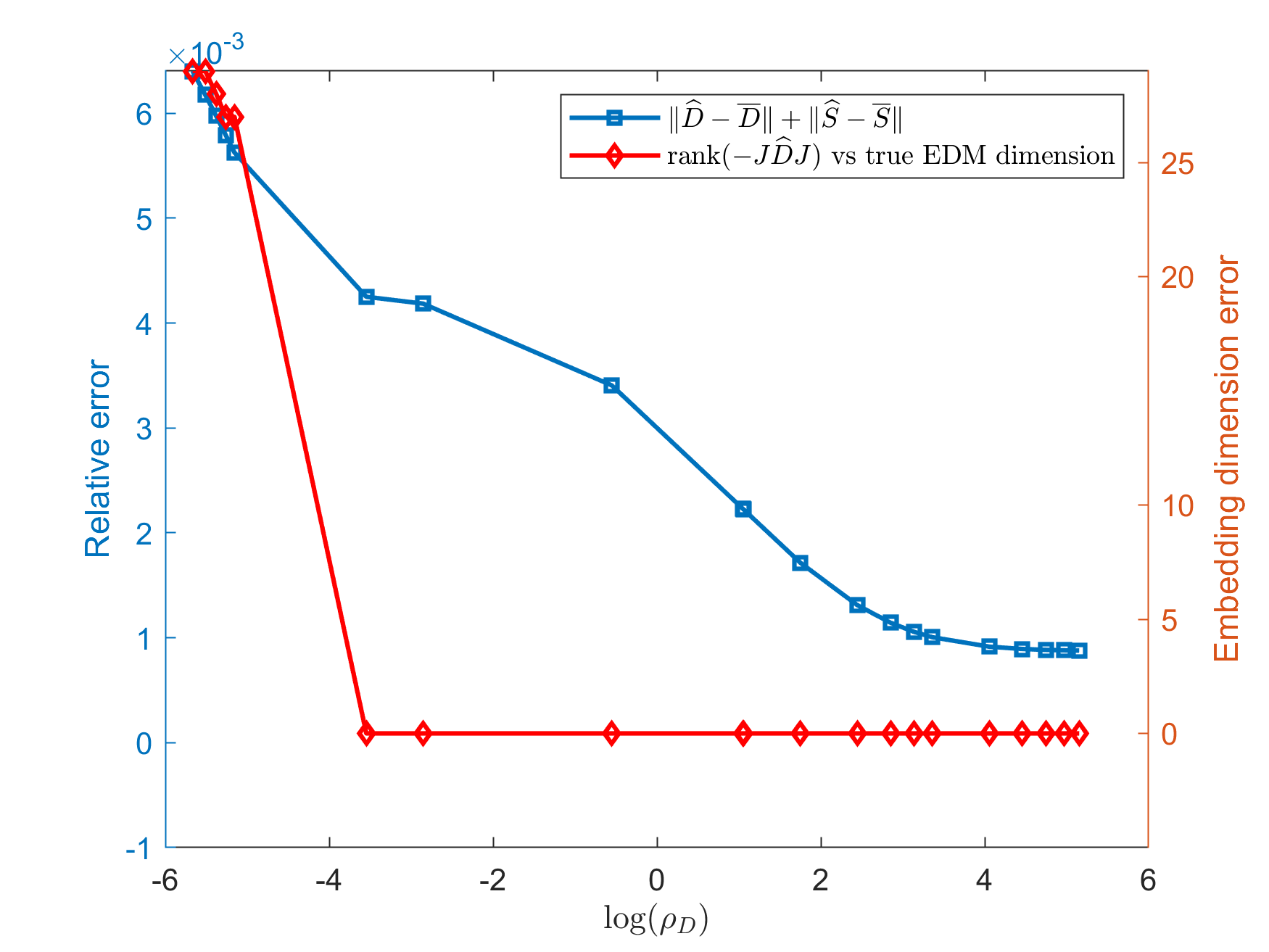}
		\caption{Relative error and dimension recovery error}
		\label{fig:1-sub1}
	\end{subfigure}%
	\begin{subfigure}{.5\textwidth}
		\centering
		\includegraphics[width=1\linewidth]{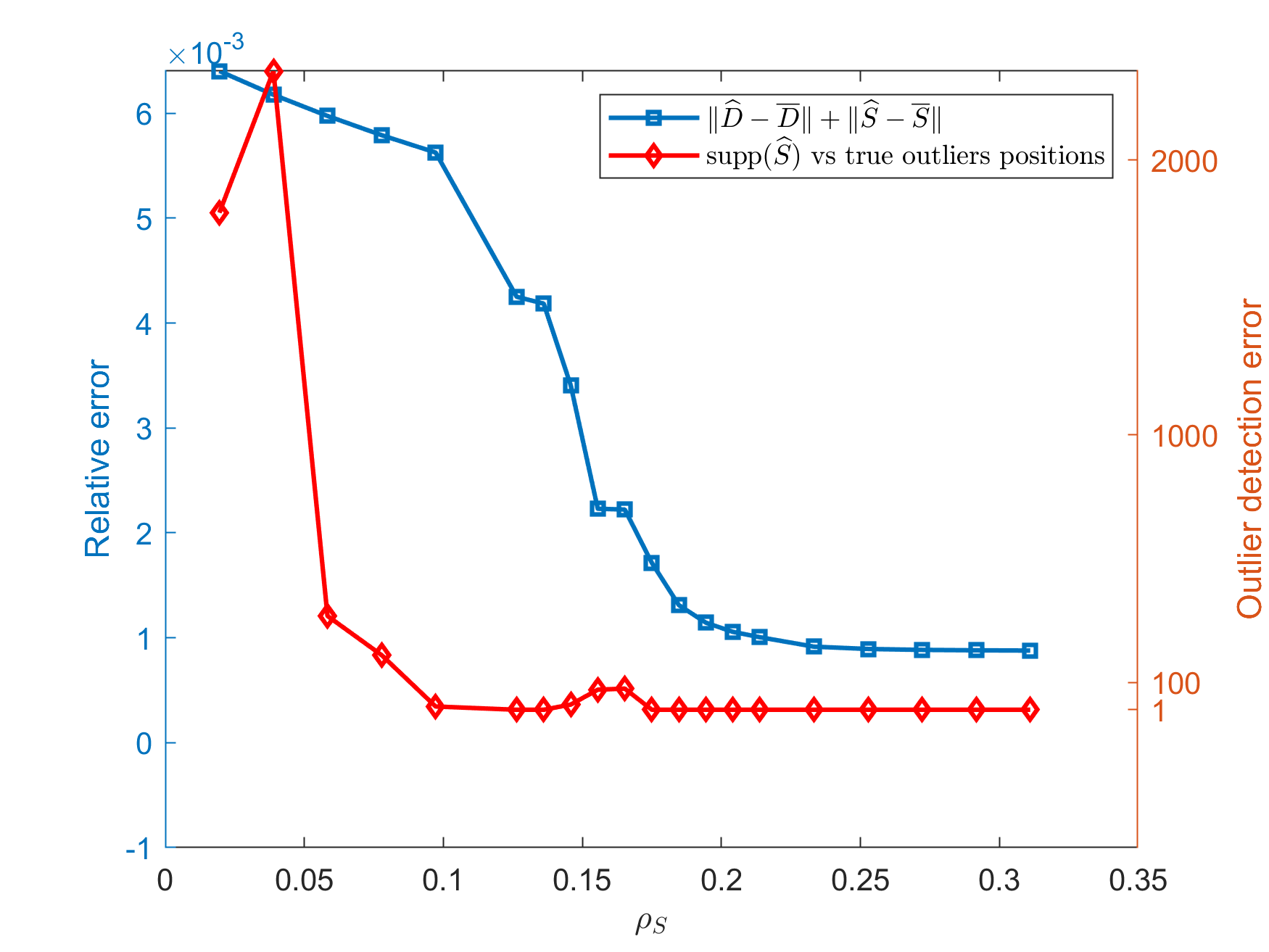}
		\caption{Relative error and outlier detection error}
		\label{fig:1-sub2}
	\end{subfigure}
	\caption{Performance comparison for different $\rho_D$ and $\rho_S$}
	\label{fig:example-rho}
\end{figure*}

\begin{table*}
	\begin{center} 
		{\footnotesize\begin{tabular}{c|cc|c}
				\hline
				($\rho_D$, $\rho_S$) & ($R_p$, $R_d$, {\tt rel\_gap}) &  ({\tt rel\_err},{\tt r\_dim},{\tt nz\_S},{\tt mis\_O}) & {\tt cpu(s)}\\ [3pt]
				\hline
				(0.0035, 0.0194)   & (4.25e-06, 7.18e-06, 8.83e-05) &  (0.0064, 31, 2265, 1808) & 360.84 \\ [3pt]
				(0.0040, 0.0389)   & (6.35e-06, 1.03e-05, 8.09e-05) &  (0.0062, 31, 2323, 2780) & 319.67 \\ [3pt]
				(0.0034, 0.0865)   & (1.61e-06, 2.35e-06, 9.55e-05) &  (0.0060, 30, 797, 342) & 321.04 \\ [3pt]
				(0.0052, 0.0778)   & (1.28e-06, 2.27e-06, 9.71e-05) &  (0.0058, 29, 655, 200) & 336.43 \\ [3pt]
				(0.0058, 0.0972)   & (5.27e-07, 5.07e-07, 9.91e-05) &  (0.0056, 29, 468, 13) & 430.75 \\ [3pt]
				(0.0288, 0.1264)   & (3.72e-07, 5.56e-07, 9.96e-05) &  (0.0043, 2, 456, 1) & 363.16 \\ [3pt]
				(0.0576, 0.1361)   & (3.17e-07, 4.35e-07, 9.94e-05) &  (0.0042, 2, 456, 1) & 304.29 \\ [3pt]
				(0.5762, 0.1458)   & (9.41e-07, 9.97e-07, 6.69e-05) &  (0.0034, 2, 475, 20) & 142.06 \\ [3pt]
				(2.8520, 0.1555)   & (4.73e-06, 8.29e-06, 2.50e-06) &  (0.0022, 2, 529, 74) & 328.50 \\ [3pt]
				(2.8808, 0.1653)   & (5.91e-06, 1.00e-05, 3.18e-06) &  (0.0022, 2, 533, 78) & 325.34 \\ [3pt]
				(5.7616, 0.1750)   & (4.71e-07, 3.85e-07, 9.59e-05) &  (0.0017, 2, 456, 1) & 795.77 \\ [3pt]
				(11.5233, 0.1847)   & (1.12e-07, 2.15e-08, 9.73e-05) &  (0.0013, 2, 456, 1) & 1357.65 \\ [3pt]
				(17.2849, 0.1944)  & (2.52e-08, 5.78e-09, 9.93e-05) &  (0.0011, 2, 456, 1) & 1890.97 \\ [3pt]
				(23.0465, 0.2041)  & (1.57e-09, 4.22e-09, 9.98e-05) &  (0.0011, 2, 456, 1) & 2401.03 \\ [3pt]
				(28.8082, 0.2139)  & (6.43e-10, 1.41e-09, 1.00e-04) &  (0.0010, 2, 456, 1) & 2783.24 \\ [3pt]
				(57.6164, 0.2333)  & (5.60e-11, 3.89e-11, 9.98e-05) &  (0.0009, 2, 456, 1) & 4079.01 \\ [3pt]
				(86.4245, 0.2527)  & (1.76e-11, 3.07e-11, 9.98e-05) &  (0.0009, 2, 456, 1) & 4581.41 \\ [3pt]
				(115.2327, 0.2722)  & (2.94e-11, 2.13e-11, 9.98e-05) &  (0.0009, 2, 456, 1) & 4592.07 \\ [3pt]
				(144.0409, 0.2916) & (1.73e-11, 2.46e-11, 9.99e-05) &  (0.0009, 2, 456, 1) & 4633.12 \\ [3pt]
				(172.8491, 0.3111) & (1.10e-11, 1.47e-11, 9.99e-05) &  (0.0009, 2, 456, 1) & 4809.25 \\ [3pt]
				\hline
		\end{tabular}}
	\end{center}
	\caption{Numerical performance of the network with $(n,m,k)=(1000, 15202, 457)$ for different $\rho_D$ and $\rho_S$: $R_p$, $R_d$ and {\tt rel\_gap} stand for the relative primal feasibility, dual feasibility and relative duality gap obtained by {\tt sGS-ADMM}, respectively; we use {\tt rel\_err}, {\tt r\_dim}, {\tt nz\_S}, {\tt mis\_O} to denote the relative error, recovery embedding dimension, detected outlier number and outlier detection error; {\tt cpu(s)} is the total computational time (in seconds) of {\tt sGS-ADMM}.}
	\label{tab:T1}
\end{table*}

\vskip 10 true pt

\noindent{\bf Example 2}. We use this example to illustrate the quality of the proposed estimators for  problems with different dimensions. Ten simulation networks with $n=\{200,400,\ldots,2000\}$ random points in the square area $[0, 100]\times [0, 100]$ are generated in a similar manner as Example 1. The sample sizes of the corresponding observation operators and the numbers of outliers for different networks are reported in Table \ref{tab:T2}. The black lines in both Figure \ref{fig:2-sub1} and \ref{fig:2-sub2} represent the  theoretical (relative) upper bounds in Theorem \ref{thm:error_bound}, i.e.,  $\max(\Gamma_1,\Gamma_2)/(1+\|\overline{D}\|+\|\overline{S}\|)$ and $\Gamma_1$ and $\Gamma_2$ are defined in \eqref{eq:def-Gamma1} and \eqref{eq:def-Gamma2}, respectively. The blue lines in both Figure \ref{fig:2-sub1} and \ref{fig:2-sub2} are the square sum of relative  errors with respect to $\widehat{D}$ and $\widehat{S}$, i.e., $\|\widehat{D}-\overline{D}\|^2+\|\widehat{S}-\overline{S}\|^2/(1+\|\overline{D}\|+\|\overline{S}\|)$. It can be seen clearly from Figure \ref{fig:2-sub1} and \ref{fig:2-sub2} that the square sum of (relative)  errors with respect to $\widehat{D}$ and $\widehat{S}$ is smaller than theoretical (relative) upper bounds defined by \eqref{eq:def-Gamma1} and \eqref{eq:def-Gamma2}. It can be seen from Figure \ref{fig:2-sub1} that the proposed convex model \eqref{pr:cvxopt}  provides the estimators $\widehat{D}$ with the true EDM dimension, i.e., ${\rm rank}(-J\widehat{D}J)=r=2$ in all ten networks. Similar with {\bf Example 1}, we note that for these ten cases, the $r$-th eigenvalues $\lambda_r(-J\overline{D}J)$ of $-J\overline{D}J$ are in the order of $O(10^5)$, and the upper bounds defined in \eqref{eq:def-Gamma1} and \eqref{eq:def-Gamma2} satisfy $\max\{\Gamma_1^{1/2},\Gamma_2^{1/2}\}\approx 10^{3}$. One the other hand, the red line in Figure \ref{fig:2-sub2} indicates the number of outlier detection errors obtained by \eqref{pr:cvxopt} for different networks. We know from Figure \ref{fig:2-sub2} and Table \ref{tab:T2} that for all networks with different dimension scales, the detected outlier numbers ${\tt nz\_S}\le k$ and ${\tt nz\_S}+{\tt mis\_O}=k$, which implies that for each case, the outlier detection errors if exist are the false-negative errors ({\tt FN}) and ${\rm supp}(\widehat{S})\subseteq{\rm supp}(\overline{S})$. Similarly with {\bf Example 1}, this result is consistent with the outlier detection guarantee results proposed in Theorem \ref{thm:rank-sparse-guarantee}. Also, it is worth noting that $10\approx\min\left\{\overline{S}_{ij} \mid (i,j)\in{\rm supp}(\overline{S})\right\} \ll \|\widehat{S}-\overline{S}\|\approx 10^3$ for these cases. The numerical details are reported in Table \ref{tab:T2}.

\begin{figure*}
	\centering
	\begin{subfigure}{.5\textwidth}
		\centering
		\includegraphics[width=1\linewidth]{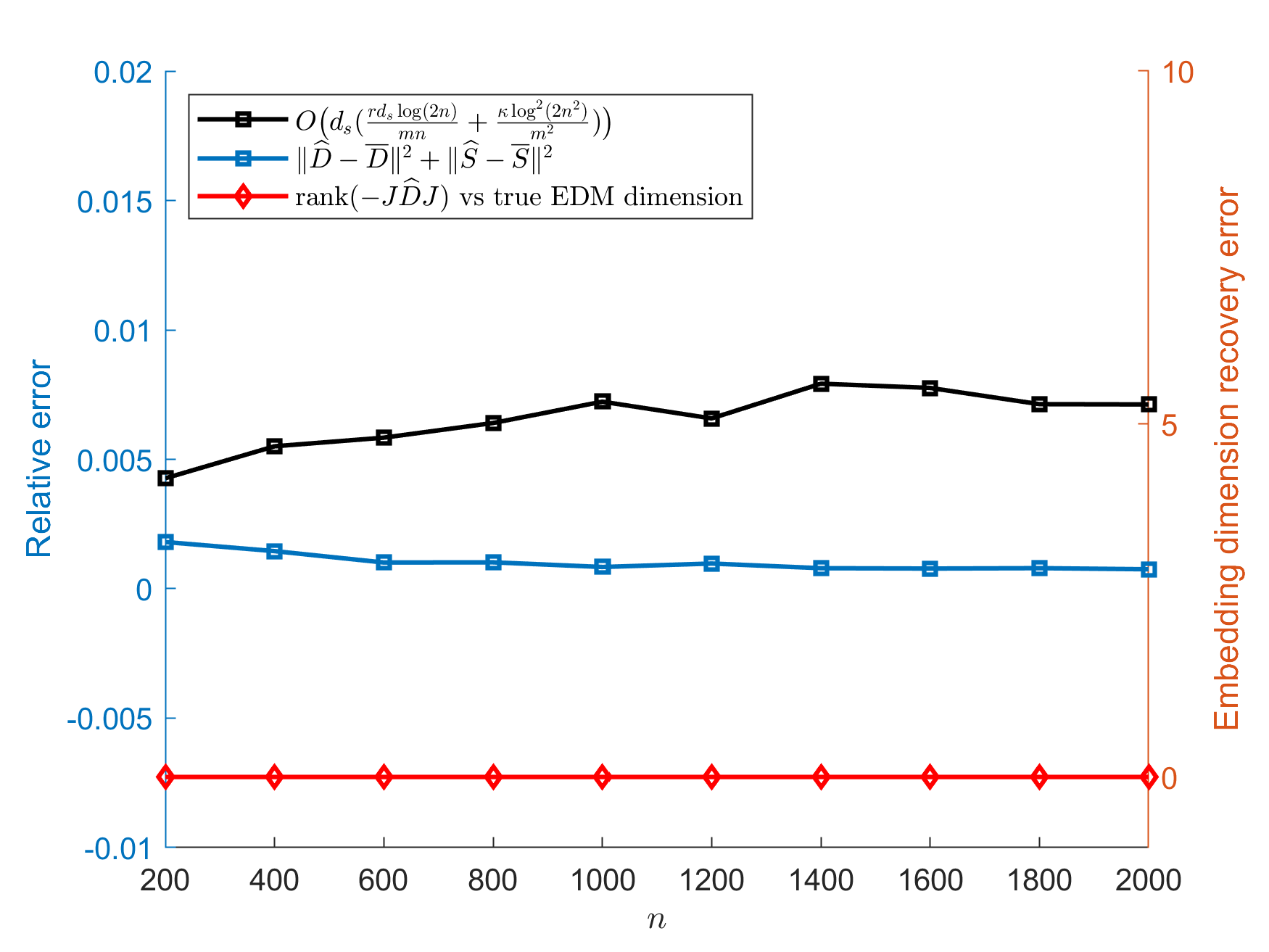}
		\caption{Relative error and dimension recovery error}
		\label{fig:2-sub1}
	\end{subfigure}%
	\begin{subfigure}{.5\textwidth}
		\centering
		\includegraphics[width=1\linewidth]{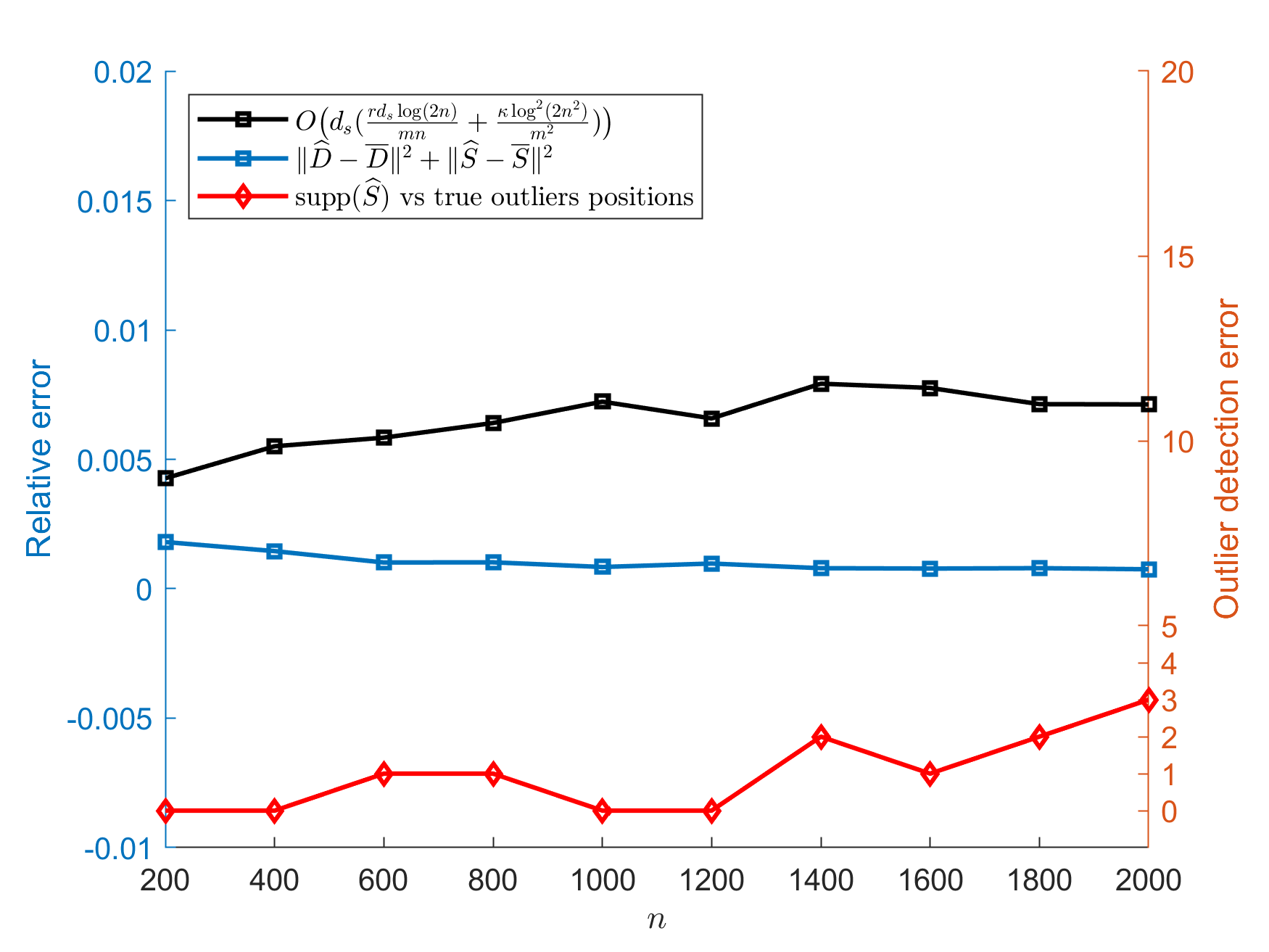}
		\caption{Relative error and outlier detection error}
		\label{fig:2-sub2}
	\end{subfigure}
	\caption{Performance comparison for networks with different dimension scales}
	\label{fig:dimension}
\end{figure*}

\begin{table*}
	\begin{center} 
		{\footnotesize\begin{tabular}{cc|cc|c}
				\hline
				$(n,m,k)$  & $(\rho_D,\rho_S)$  & ($R_p$,$R_d$,{\tt gap}) &  ({\tt rel\_err},{\tt r\_dim},{\tt nz\_S},{\tt mis\_O}) & {\tt cpu(s)}\\ [3pt]
				\hline
				(200, 2397, 72) & (393.7605, 1.5738)   & (8.44e-11, 7.14e-11, 9.95e-05) &  (0.0018, 2, 72, 0) & 17.30 \\ [3pt]
				(400, 5348, 161) & (157.6384, 0.6341)   & (2.67e-11, 7.02e-11, 9.97e-05) &  (0.0015 , 2, 161 0) &  193.20 \\ [3pt]
				(600, 8509, 256) & (157.3630, 0.6350)   & (3.04e-11, 4.36e-11, 1.00e-04) &  (0.0010, 2, 255, 1) & 688.16 \\ [3pt]
				(800, 11805, 355) & (111.6860, 0.4516)   & (4.87e-11, 3.19e-11, 9.99e-05) &  (0.0010, 2, 354, 1) & 1974.85 \\ [3pt]
				(1000, 15202, 457) & (125.0299, 0.5063)   & (1.92e-11, 2.01e-11, 1.00e-04) &  (0.0008, 2, 457, 0) & 4583.55 \\ [3pt]
				(1200, 18680, 561) & (69.6896, 0.2825)   & (2.70e-11, 2.87e-11, 9.99e-05) &  (0.0010, 2, 561, 0) & 7161.07 \\ [3pt]
				(1400, 22225, 667) & (88.5303, 0.3592)   & (1.96e-11, 1.71e-11, 9.99e-05) &  (0.0008, 2, 665, 2) & 12641.73 \\ [3pt]
				(1600, 25827, 775) & (78.9857, 0.3207)   & (1.49e-11, 1.57e-11, 1.00e-04) &  (0.0008, 2, 774, 1) & 21380.81 \\ [3pt]
				(1800, 29480, 885) & (52.1488, 0.2119)   & (1.69e-11, 2.11e-11, 1.00e-04) &  (0.0008, 2, 883, 2) & 33169.30 \\ [3pt]
				(2000, 33177, 996) & (44.9024, 0.1825)   & (2.24e-11, 1.93e-11, 9.99e-05) &  (0.0008, 2, 993, 3) & 48297.41 \\ [3pt]
				\hline
		\end{tabular}}
	\end{center}
	\caption{Numerical performance for networks with different dimension scales: $n,m,k$ indicate the number of points in network, sample size of the observation operator and number of outliers; $R_p$, $R_d$ and {\tt rel\_gap} stand for the relative primal feasibility, dual feasibility and relative duality gap obtained by {\tt sGS-ADMM}, respectively; we use {\tt rel\_err}, {\tt r\_dim}, {\tt nz\_S}, {\tt mis\_O} to denote the relative error, recovery embedding dimension, detected outlier number and outlier detection error; {\tt cpu(s)} is the total computational time (in seconds) of {\tt sGS-ADMM}. }
	\label{tab:T2}
\end{table*}

\vskip 10 true pt

\noindent{\bf Example 3}. Finally, we conduct an experiment to verify the proposed sufficient condition for the outlier detection in Theorem \ref{thm:rank-sparse-guarantee}. Consider a simulation network with $n=200$ points which are randomly located in the area $[0,100]\times[0,100]$. The observation operator ${\cal O}_{\Omega}$ and i.i.d. noise errors $\xi_{ij}$ are generated in the same manner as those in {\bf Example 1} \& {\bf 2}. Moreover, we randomly add $k=24$  outliers errors, which are the i.i.d. uniform random variables with different magnitudes such that $\overline{S}$ satisfies one of the following conditions, respectively: (a) $\min\left\{\overline{S}_{ij} \mid (i,j)\in{\rm supp}(\overline{S})\right\}\approx 1\times 10$ denoted by the ``small magnitude" of $\overline{S}$, (b) $\min\left\{\overline{S}_{ij} \mid (i,j)\in{\rm supp}(\overline{S})\right\}\approx 10^2$ denoted by the ``middle magnitude" of $\overline{S}$, and (c) $\min\left\{\overline{S}_{ij} \mid (i,j)\in{\rm supp}(\overline{S})\right\}\approx 5\times 10^3$ denoted by the ``large magnitude" of $\overline{S}$. Note that in this example, we have  $\max\{\Gamma_1^{1/2},\Gamma_2^{1/2}\}\approx 10^3$. By \eqref{eq:suff-c-outlier}, we know that the condition $\|\widehat{S}-\overline{S}\|<\min\left\{\overline{S}_{ij} \mid (i,j)\in{\rm supp}(\overline{S})\right\}$ is satisfied in the ``large magnitude" case. First,  the estimators $\widehat{D}$ obtained by the convex model \eqref{pr:cvxopt} in all cases satisfy ${\rm rank}(-J\widehat{D}J)=r=2$, since the $r$-th eigenvalues $\lambda_r(-J\overline{D}J)$ of $-J\overline{D}J$ are in the order of $O(10^5)$. For comparison, we report the relative errors and the outlier detection errors after $100$ Monte Carlo simulation runs in Figure \ref{fig:diff-S}. Note that for all cases, the outlier detection errors if exist are the false-negative errors ({\tt FN}) and ${\rm supp}(\widehat{S})\subseteq{\rm supp}(\overline{S})$, since the detected outlier numbers ${\tt nz\_S}\le k=$ and ${\tt nz\_S}+{\tt mis\_O}=k$. For the small and middle magnitude of $\overline{S}$ cases (Figure \ref{fig:3-sub1} and \ref{fig:3-sub2}), only a few estimators $\widehat{S}$ satisfy ${\rm supp}(\widehat{S})={\rm supp}(\overline{S})$ exactly ($13$ out of $100$ MC simulation runs for the small magnitude case; $77$ out of $100$ MC simulation runs for the middle magnitude case). However, it is worth noting that for the large magnitude of $\overline{S}$ case (Figure \ref{fig:3-sub3}),  with probability $1$, the estimator $\widehat{S}$ obtained by \eqref{pr:cvxopt} satisfies ${\rm supp}(\widehat{S})={\rm supp}(\overline{S})$ exactly ($100$ out of $100$ MC simulation runs). Also the empirical cumulative distribution function\footnote{Let $x_1,\ldots, x_n$ be independent, identically distributed real random variables. The corresponding empirical distribution function $F_n(t)$ is defined as $F_n(t)=\frac{1}{n}\sum_{i=1}^{n}\delta_{x_i\le t}$, where $\delta_{x_i\le t}$ is the indicator of event $x_i\le t$.} (CDF) of different cases are reported in Figure \ref{fig:3-sub4}.

\begin{figure*}
	\centering
	\begin{subfigure}{.5\textwidth}
		\centering
		\includegraphics[width=1\linewidth]{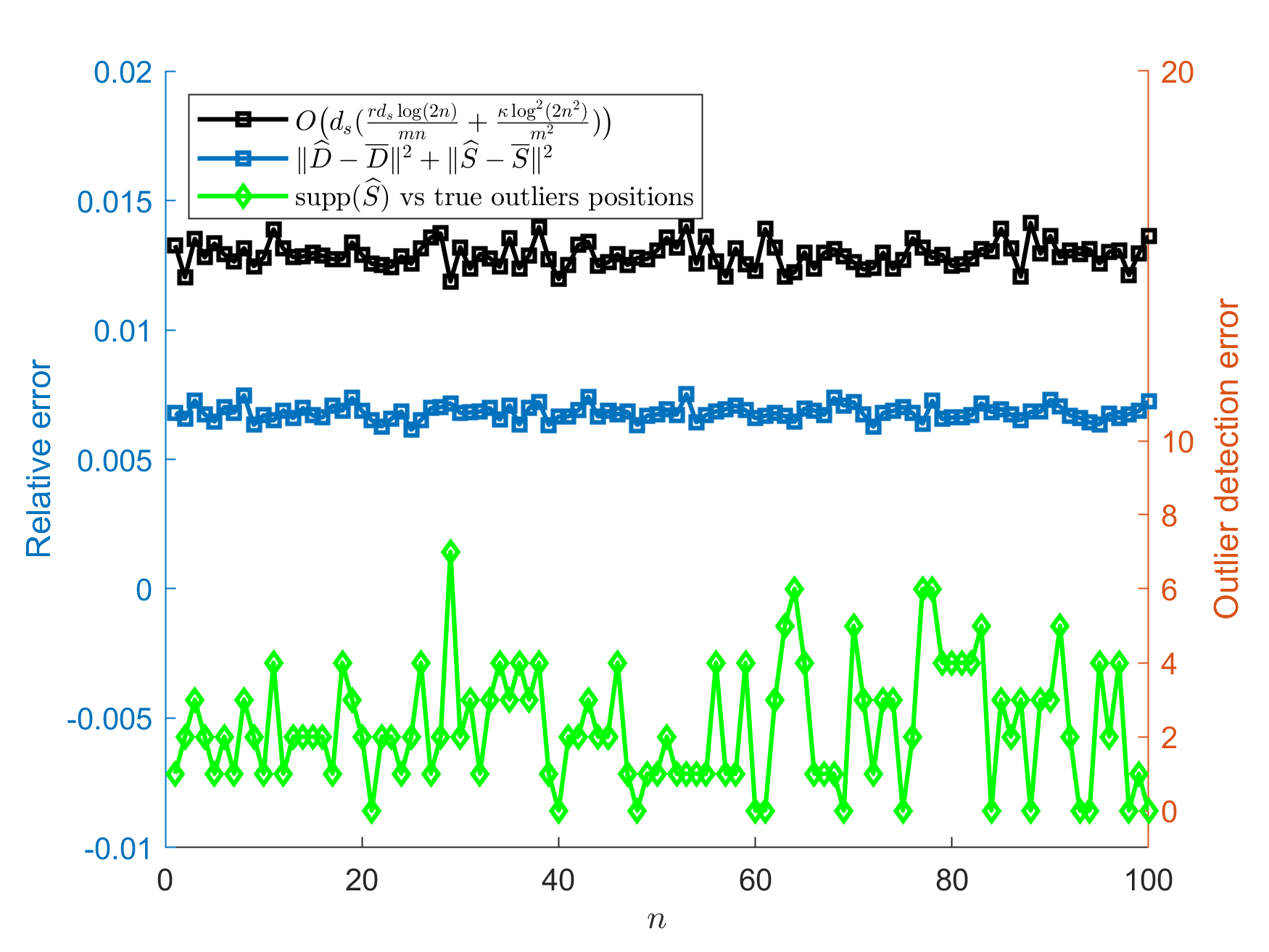}
		\caption{The small magnitude $\overline{S}$}
		\label{fig:3-sub1}
	\end{subfigure}%
	\begin{subfigure}{.5\textwidth}
		\centering
		\includegraphics[width=1\linewidth]{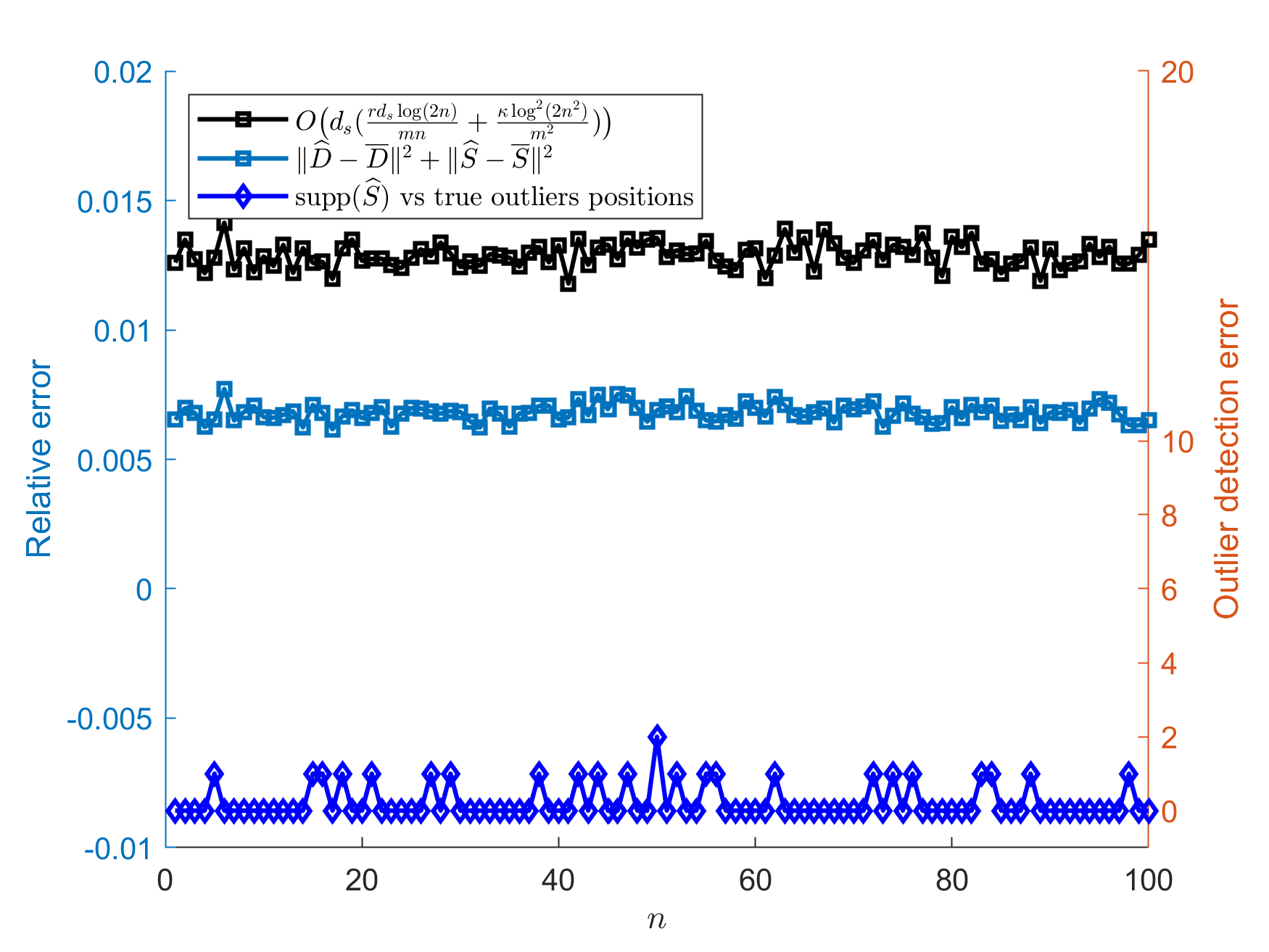}
		\caption{The middle magnitude $\overline{S}$}
		\label{fig:3-sub2}
	\end{subfigure}
	\begin{subfigure}{.5\textwidth}
		\centering
		\includegraphics[width=1\linewidth]{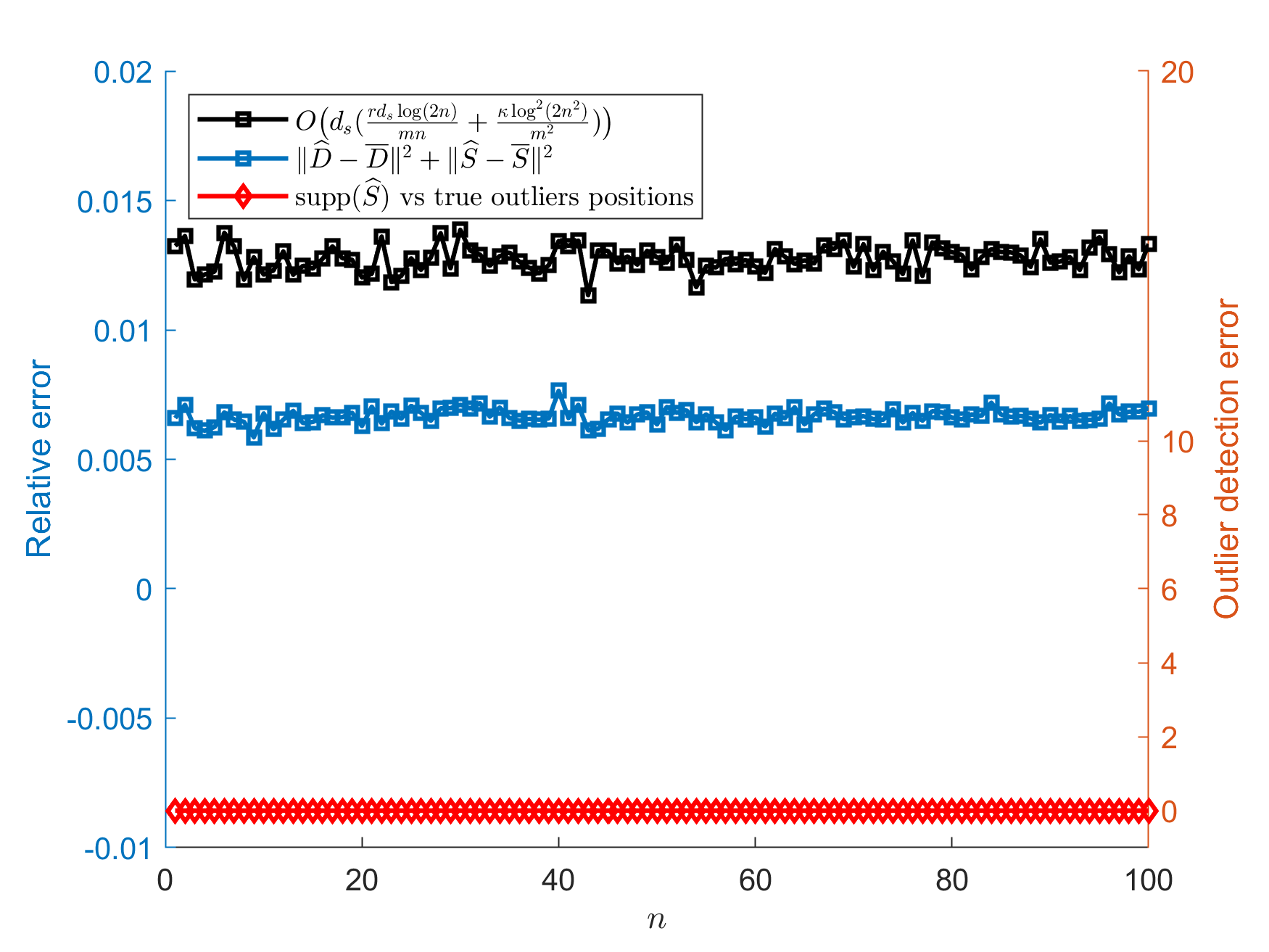}
		\caption{The large magnitude $\overline{S}$}
		\label{fig:3-sub3}
	\end{subfigure}%
	\begin{subfigure}{.5\textwidth}
		\centering
		\includegraphics[width=1\linewidth]{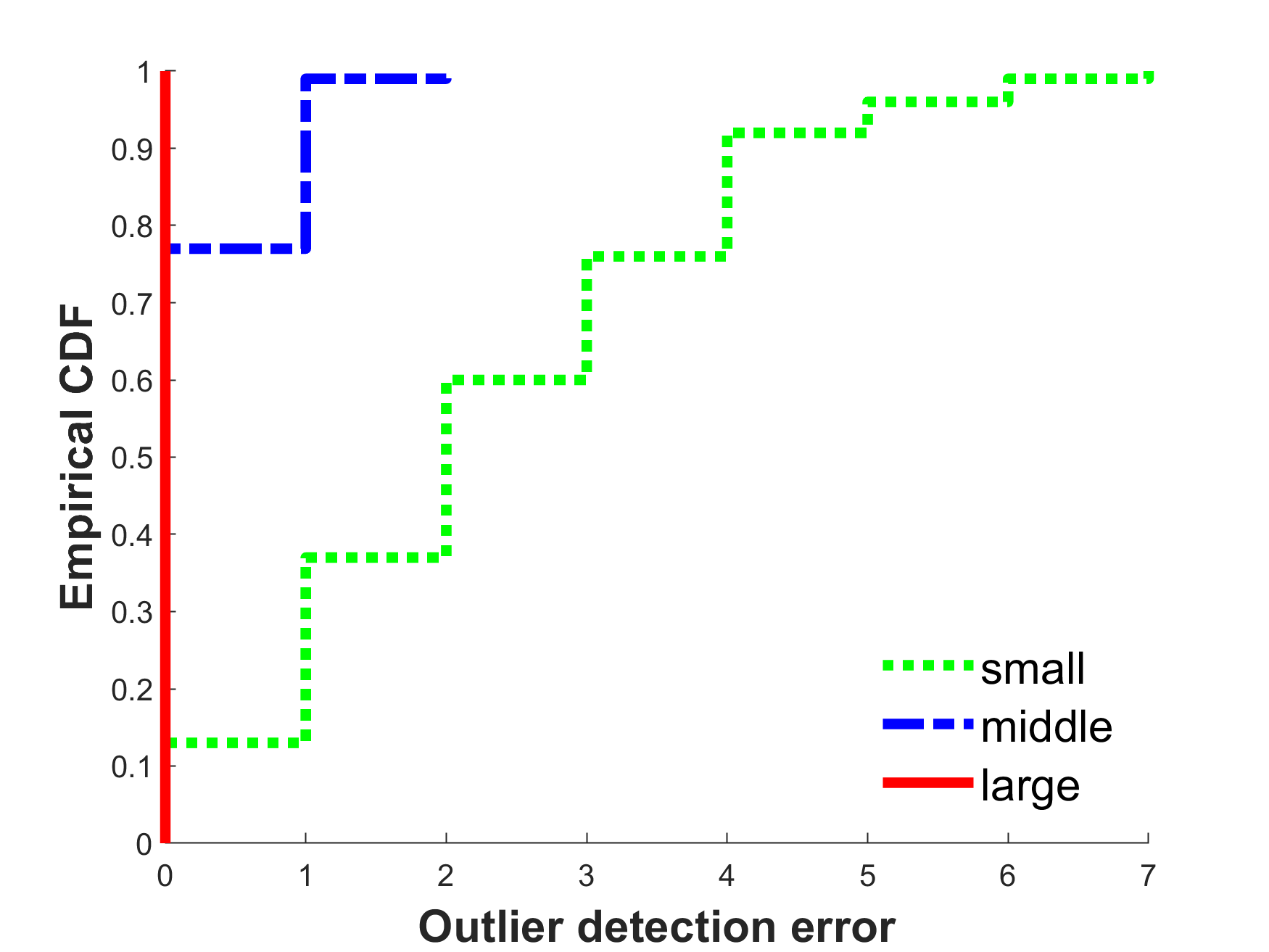}
		\caption{Empirical CDF}
		\label{fig:3-sub4}
	\end{subfigure}
	\caption{Relative error and outlier detection error for different magnitudes of $\overline{S}$ }
	\label{fig:diff-S}
\end{figure*}

\section{Conclusions}\label{section:conclusion}

Euclidean embedding from noisy observations containing outliers is an important and challenging problem in statistics and machine learning. Many existing methods would struggle with outliers due to a lack of detection ability, while the matrix optimization based embedding model introduced in \cite{DQi2017b} usually can produce reliable embeddings and identify the outliers jointly. This paper aimed to explain this mysterious situation by studying the estimation error bounds and outliers detection ability of the proposed model. In particular, we show that the estimators obtained by the proposed method satisfy a non-asymptotic risk bound, implying that the model provides a high accuracy estimator with high probability when the order of the sample size is roughly the degree of freedom up to a logarithmic factor. Moreover, we show that under some mild conditions, the proposed model also can identify the outliers without any prior information with high probability. As we mentioned in the Section \ref{section:error-bounds}, Chen, et al. \cite{CFMYan2020}, derived a near-optimal statistical guarantee of the convex nuclear norm plus $l_1$-norm penalized model for the classical Robust PCA by building up the connection between the convex estimations and an auxiliary nonconvex optimization algorithm. It seems their approach would lead to some interesting error bound results for our EDM embedding model. However, it seems difficult to extend their results directly to the convex models involving "hard-constraints", e.g., the noisy correlation matrix recovery and the EDM estimation studied in this paper.  We plan to investigate those issues in the future.

%

\bibliographystyle{spmpsci}      

\bibliography{myref}

\begin{thebibliography}{10}
\providecommand{\url}[1]{{#1}}
\providecommand{\urlprefix}{URL }
\expandafter\ifx\csname urlstyle\endcsname\relax
  \providecommand{\doi}[1]{DOI~\discretionary{}{}{}#1}\else
  \providecommand{\doi}{DOI~\discretionary{}{}{}\begingroup
  \urlstyle{rm}\Url}\fi

\bibitem{ANW12}
Agarwal, A., Negahban, S., Wainwright, M.J.: Noisy matrix decomposition via
  convex relaxation: Optimal rates in high dimensions.
\newblock The Annals of Statistics \textbf{40}(2), 1171--1197 (2012)

\bibitem{bhlmann2011statistics}
Bhlmann, P., Geer, S.A.V.D.: Statistics for high-dimensional data: Methods,
  theory and applications.
\newblock Springer Series in Statistics, Springer-Verlag  (2011)

\bibitem{CLMWright11}
Cand{\`e}s, E.J., Li, X., Ma, Y., Wright, J.: Robust principal component
  analysis?
\newblock Journal of the ACM (JACM) \textbf{58}(3), 1--37 (2011)

\bibitem{CPlan10}
Candes, E.J., Plan, Y.: Matrix completion with noise.
\newblock Proceedings of the IEEE \textbf{98}(6), 925--936 (2010)

\bibitem{CR09}
Cand{\`e}s, E.J., Recht, B.: Exact matrix completion via convex optimization.
\newblock Foundations of Computational mathematics \textbf{9}(6), 717 (2009)

\bibitem{CSPWillsky11}
Chandrasekaran, V., Sanghavi, S., Parrilo, P.A., Willsky, A.S.: Rank-sparsity
  incoherence for matrix decomposition.
\newblock SIAM Journal on Optimization \textbf{21}(2), 572--596 (2011)

\bibitem{CWWSPoor12}
Chen, H., Wang, G., Wang, Z., So, H.C., Poor, H.V.: Non-line-of-sight node
  localization based on semi-definite programming in wireless sensor networks.
\newblock IEEE Transactions on Wireless Communications \textbf{11}(1), 108--116
  (2011)

\bibitem{CFMYan2020}
Chen, Y., Fan, J., Ma, C., Yan, Y.: Bridging convex and nonconvex optimization
  in robust pca: Noise, outliers, and missing data.
\newblock arXiv preprint arXiv:2001.05484  (2020)

\bibitem{CJSCaramanis13}
Chen, Y., Jalali, A., Sanghavi, S., Caramanis, C.: Low-rank matrix recovery
  from errors and erasures.
\newblock IEEE Transactions on Information Theory \textbf{59}(7), 4324--4337
  (2013)

\bibitem{DQi2017b}
Ding, C., Qi, H.D.: Convex euclidean distance embedding for collaborative
  position localization with nlos mitigation.
\newblock Computational Optimization and Applications \textbf{66}(1), 187--218
  (2017)

\bibitem{DQi2017}
Ding, C., Qi, H.D.: Convex optimization learning of faithful euclidean distance
  representations in nonlinear dimensionality reduction.
\newblock Mathematical Programming \textbf{170}, 341--381 (2017)

\bibitem{DSSToh18}
Ding, C., Sun, D., Sun, J., Toh, K.C.: Spectral operators of matrices.
\newblock Mathematical Programming \textbf{168}(1-2), 509--531 (2018)

\bibitem{DSSToh20}
Ding, C., Sun, D., Sun, J., Toh, K.C.: Spectral operators of matrices:
  semismoothness and characterizations of the generalized jacobian.
\newblock SIAM Journal on Optimization \textbf{30}(1), 630--659 (2020)

\bibitem{Fazel02}
Fazel, M.: Matrix rank minimization with applications.
\newblock Ph.D. thesis, Stanford University (2002)

\bibitem{Gross11}
Gross, D.: Recovering low-rank matrices from few coefficients in any basis.
\newblock IEEE Transactions on Information Theory \textbf{57}(3), 1548--1566
  (2011)

\bibitem{GChong09}
Guvenc, I., Chong, C.C.: A survey on toa based wireless localization and nlos
  mitigation techniques.
\newblock IEEE Communications Surveys \& Tutorials \textbf{11}(3), 107--124
  (2009)

\bibitem{HWells88}
Hayden, T., Wells, J.: Approximation by matrices positive semidefinite on a
  subspace.
\newblock Linear Algebra and its Applications \textbf{109}, 115--130 (1988)

\bibitem{hodge2004survey}
Hodge, V., Austin, J.: A survey of outlier detection methodologies.
\newblock Artificial intelligence review \textbf{22}(2), 85--126 (2004)

\bibitem{HKZhang11}
Hsu, D., Kakade, S.M., Zhang, T.: Robust matrix decomposition with sparse
  corruptions.
\newblock IEEE Transactions on Information Theory \textbf{57}(11), 7221--7234
  (2011)

\bibitem{HJohnson85}
Johnson, C.R., Horn, R.A.: Matrix analysis.
\newblock Cambridge University Press Cambridge (1985)

\bibitem{Klopp14}
Klopp, O.: Noisy low-rank matrix completion with general sampling distribution.
\newblock Bernoulli \textbf{20}(1), 282--303 (2014)

\bibitem{KLTsybakov17}
Klopp, O., Lounici, K., Tsybakov, A.B.: Robust matrix completion.
\newblock Probability Theory and Related Fields \textbf{169}(1-2), 523--564
  (2017)

\bibitem{kuhn2013applied}
Kuhn, M., Johnson, K., et~al.: Applied predictive modeling, vol.~26.
\newblock Springer (2013)

\bibitem{ledoux1991statistics}
Ledoux, M., Talagrand, M.: Probability in banach spaces: Isoperimetry and
  processes.
\newblock Ergebnisse der Mathematik und ihrer Grenzgebiete, Springer-Verlag,
  Berlin \textbf{23} (1991)

\bibitem{LSToh19}
Li, X., Sun, D., Toh, K.C.: A block symmetric gauss--seidel decomposition
  theorem for convex composite quadratic programming and its applications.
\newblock Mathematical Programming \textbf{175}(1-2), 395--418 (2019)

\bibitem{massart2000about}
Massart, P.: About the constants in talagrand's concentration inequalities for
  empirical processes.
\newblock The Annals of Probability \textbf{28}, 863--884 (2000)

\bibitem{Mesbahi98}
Mesbahi, M.: On the rank minimization problem and its control applications.
\newblock Systems \& control letters \textbf{33}(1), 31--36 (1998)

\bibitem{Miao2013}
Miao, W.: Matrix completion models with fixed basis coefficients and rank
  regularized prbolems with hard constraints.
\newblock National University of Singapore, Disertation for Ph.D. Degree.
  Republic of Singapore  (2013)

\bibitem{MPSun16}
Miao, W., Pan, S., Sun, D.: A rank-corrected procedure for matrix completion
  with fixed basis coefficients.
\newblock Mathematical Programming \textbf{159}(1-2), 289--338 (2016)

\bibitem{NWainwright12}
Negahban, S., Wainwright, M.J.: Restricted strong convexity and weighted matrix
  completion: Optimal bounds with noise.
\newblock Journal of Machine Learning Research \textbf{13}(May), 1665--1697
  (2012)

\bibitem{NRWYu12}
Negahban, S.N., Ravikumar, P., Wainwright, M.J., Yu, B., et~al.: A unified
  framework for high-dimensional analysis of $m$-estimators with decomposable
  regularizers.
\newblock Statistical Science \textbf{27}(4), 538--557 (2012)

\bibitem{PNNeiyer2005}
Patwari, N., Ash, J.N., Kyperountas, S., Hero, A.O., Moses, R.L., Correal,
  N.S.: Locating the nodes: cooperative localization in wireless sensor
  networks.
\newblock IEEE Signal processing magazine \textbf{22}(4), 54--69 (2005)

\bibitem{QiYuan14}
Qi, H.D., Yuan, X.: Computing the nearest euclidean distance matrix with low
  embedding dimensions.
\newblock Mathematical programming \textbf{147}(1-2), 351--389 (2014)

\bibitem{Recht11}
Recht, B.: A simpler approach to matrix completion.
\newblock Journal of Machine Learning Research \textbf{12}(12) (2011)

\bibitem{RUrruela04}
Riba, J., Urruela, A.: A non-line-of-sight mitigation technique based on
  ml-detection.
\newblock In: 2004 IEEE International Conference on Acoustics, Speech, and
  Signal Processing, vol.~2, pp. ii--153. IEEE (2004)

\bibitem{schoenberg1935remarks}
Schoenberg, I.J.: Remarks to maurice frechet's article``sur la definition
  axiomatique d'une classe d'espace distances vectoriellement applicable sur
  l'espace de hilbert".
\newblock Annals of Mathematics \textbf{36}, 724--732 (1935)

\bibitem{stoica2006lecture}
Stoica, P., Li, J.: Lecture notes-source localization from range-difference
  measurements.
\newblock IEEE Signal Processing Magazine \textbf{23}(6), 63--66 (2006)

\bibitem{STYang15}
Sun, D., Toh, K.C., Yang, L.: A convergent 3-block semiproximal alternating
  direction method of multipliers for conic programming with 4-type
  constraints.
\newblock SIAM Journal on Optimization \textbf{25}(2), 882--915 (2015)

\bibitem{vaart1996weak}
Vaart, A.W.V.D., Wellner, J.A.: Weak convergence and empirical processes: With
  applications to statistics.
\newblock Springer Series in Statistics, Springer-Verlag, New York  (1996)

\bibitem{VSBuehrer13}
Vaghefi, R.M., Schloemann, J., Buehrer, R.M.: Nlos mitigation in toa-based
  localization using semidefinite programming.
\newblock In: 2013 10th Workshop on Positioning, Navigation and Communication
  (WPNC), pp. 1--6. IEEE (2013)

\bibitem{VBurhrer07}
Venkatesh, S., Buehrer, R.: Non-line-of-sight identification in ultra-wideband
  systems based on received signal statistics.
\newblock IET Microwaves, Antennas \& Propagation \textbf{1}(6), 1120--1130
  (2007)

\bibitem{Vershynin10}
Vershynin, R.: Introduction to the non-asymptotic analysis of random matrices,
  pp. 210--268.
\newblock Cambridge University Press (2012)

\bibitem{Watson92}
Watson, G.A.: {Characterization of the subdifferential of some matrix norms}.
\newblock Linear Algebra and its Applications \textbf{170}, 33--45 (1992)

\bibitem{Weyl12}
Weyl, H.: Das asymptotische verteilungsgesetz der eigenwerte linearer
  partieller differentialgleichungen (mit einer anwendung auf die theorie der
  hohlraumstrahlung).
\newblock Mathematische Annalen \textbf{71}(4), 441--479 (1912)

\bibitem{WLee17}
Wong, R.K., Lee, T.C.: Matrix completion with noisy entries and outliers.
\newblock Journal of Machine Learning Research \textbf{18}(1), 5404--5428
  (2017)

\bibitem{Wu2014}
Wu, B.: High-dimensional analysis on matrix decomposition with application to
  correlation matrix estimation in factor models.
\newblock National University of Singapore, Disertation for Ph.D. Degree.
  Republic of Singapore  (2014)

\bibitem{young1938discussion}
Young, G., Householder, A.S.: Discussion of a set of points in terms of their
  mutual distances.
\newblock Psychometrika \textbf{3}(1), 19--22 (1938)

\bibitem{YCChampagne14}
Yousefi, S., Chang, X.W., Champagne, B.: Distributed cooperative localization
  in wireless sensor networks without nlos identification.
\newblock In: 2014 11th Workshop on Positioning, Navigation and Communication
  (WPNC), pp. 1--6. IEEE (2014)

\bibitem{ZWCMa10}
Zhou, Z., Li, X., Wright, J., Candes, E., Ma, Y.: Stable principal component
  pursuit.
\newblock In: 2010 IEEE international symposium on information theory, pp.
  1518--1522. IEEE (2010)

\end{thebibliography}

\end{document}